\providecommand{\tabularnewline}{\\}
\theoremstyle{plain}
\newtheorem{thm}{\protect\theoremname}
\theoremstyle{plain}
\newtheorem{lem}[thm]{\protect\lemmaname}
\definecolor{hd_cl}{rgb}{0.74,0.88,0.91}
\definecolor{header_color}{rgb}{0.74,0.88,0.91}
\definecolor{ev_cl}{rgb}{0.9,0.9,0.9}
\definecolor{even_color}{rgb}{0.9,0.9,0.9}
\definecolor{subhd_cl}{rgb}{0.85,0.93,0.95}
\definecolor{subheader_color}{rgb}{0.85,0.93,0.95}
\definecolor{childheader_color}{rgb}{1.0,0.93,0.87}
\providecommand{\lemmaname}{Lemma}
\providecommand{\theoremname}{Theorem}
\begin{document}
\global\long\def\se{\hat{\text{se}}}

\global\long\def\interior{\text{int}}

\global\long\def\boundary{\text{bd}}

\global\long\def\new{\text{*}}

\global\long\def\stir{\text{Stirl}}

\global\long\def\dist{d}

\global\long\def\HX{\entro\left(X\right)}
 \global\long\def\entropyX{\HX}

\global\long\def\HY{\entro\left(Y\right)}
 \global\long\def\entropyY{\HY}

\global\long\def\HXY{\entro\left(X,Y\right)}
 \global\long\def\entropyXY{\HXY}

\global\long\def\mutualXY{\mutual\left(X;Y\right)}
 \global\long\def\mutinfoXY{\mutualXY}

\global\long\def\xnew{y}

\global\long\def\bx{\boldsymbol{x}}

\global\long\def\bs{\boldsymbol{s}}

\global\long\def\bk{\boldsymbol{k}}

\global\long\def\bX{\boldsymbol{X}}

\global\long\def\tbx{\tilde{\bx}}

\global\long\def\by{\mathbf{y}}

\global\long\def\bY{\boldsymbol{Y}}

\global\long\def\bZ{\boldsymbol{Z}}

\global\long\def\bU{\boldsymbol{U}}

\global\long\def\bv{\boldsymbol{v}}

\global\long\def\bn{\boldsymbol{n}}

\global\long\def\bV{\boldsymbol{V}}

\global\long\def\bK{\boldsymbol{K}}

\global\long\def\bw{\vt w}

\global\long\def\bbeta{\gvt{\beta}}

\global\long\def\bmu{\gvt{\mu}}

\global\long\def\btheta{\boldsymbol{\theta}}

\global\long\def\blambda{\boldsymbol{\lambda}}

\global\long\def\bgamma{\boldsymbol{\gamma}}

\global\long\def\bpsi{\boldsymbol{\psi}}

\global\long\def\bphi{\boldsymbol{\phi}}

\global\long\def\bpi{\boldsymbol{\pi}}

\global\long\def\eeta{\boldsymbol{\eta}}

\global\long\def\bomega{\boldsymbol{\omega}}

\global\long\def\bepsilon{\boldsymbol{\epsilon}}

\global\long\def\btau{\boldsymbol{\tau}}

\global\long\def\bSigma{\gvt{\Sigma}}

\global\long\def\realset{\mathbb{R}}

\global\long\def\realn{\realset^{n}}

\global\long\def\integerset{\mathbb{Z}}

\global\long\def\natset{\integerset}

\global\long\def\integer{\integerset}

\global\long\def\natn{\natset^{n}}

\global\long\def\rational{\mathbb{Q}}

\global\long\def\rationaln{\rational^{n}}

\global\long\def\complexset{\mathbb{C}}

\global\long\def\comp{\complexset}

\global\long\def\compl#1{#1^{\text{c}}}

\global\long\def\and{\cap}

\global\long\def\compn{\comp^{n}}

\global\long\def\comb#1#2{\left({#1\atop #2}\right) }

\global\long\def\nchoosek#1#2{\left({#1\atop #2}\right)}

\global\long\def\param{\vt w}

\global\long\def\Param{\Theta}

\global\long\def\meanparam{\gvt{\mu}}

\global\long\def\Meanparam{\mathcal{M}}

\global\long\def\meanmap{\mathbf{m}}

\global\long\def\logpart{A}

\global\long\def\simplex{\Delta}

\global\long\def\simplexn{\simplex^{n}}

\global\long\def\dirproc{\text{DP}}

\global\long\def\ggproc{\text{GG}}

\global\long\def\DP{\text{DP}}

\global\long\def\ndp{\text{nDP}}

\global\long\def\hdp{\text{HDP}}

\global\long\def\gempdf{\text{GEM}}

\global\long\def\ei{\text{EI}}

\global\long\def\rfs{\text{RFS}}

\global\long\def\bernrfs{\text{BernoulliRFS}}

\global\long\def\poissrfs{\text{PoissonRFS}}

\global\long\def\grad{\gradient}
 \global\long\def\gradient{\nabla}

\global\long\def\cpr#1#2{\Pr\left(#1\ |\ #2\right)}

\global\long\def\var{\text{Var}}

\global\long\def\Var#1{\text{Var}\left[#1\right]}

\global\long\def\cov{\text{Cov}}

\global\long\def\Cov#1{\cov\left[ #1 \right]}

\global\long\def\COV#1#2{\underset{#2}{\cov}\left[ #1 \right]}

\global\long\def\corr{\text{Corr}}

\global\long\def\sst{\text{T}}

\global\long\def\SST{\sst}

\global\long\def\ess{\mathbb{E}}

\global\long\def\Ess#1{\ess\left[#1\right]}

\newcommandx\ESS[2][usedefault, addprefix=\global, 1=]{\underset{#2}{\ess}\left[#1\right]}

\global\long\def\fisher{\mathcal{F}}

\global\long\def\bfield{\mathcal{B}}
 \global\long\def\borel{\mathcal{B}}

\global\long\def\bernpdf{\text{Bernoulli}}

\global\long\def\betapdf{\text{Beta}}

\global\long\def\dirpdf{\text{Dir}}

\global\long\def\gammapdf{\text{Gamma}}

\global\long\def\gaussden#1#2{\text{Normal}\left(#1, #2 \right) }

\global\long\def\gauss{\mathbf{N}}

\global\long\def\gausspdf#1#2#3{\text{Normal}\left( #1 \lcabra{#2, #3}\right) }

\global\long\def\multpdf{\text{Mult}}

\global\long\def\poiss{\text{Pois}}

\global\long\def\poissonpdf{\text{Poisson}}

\global\long\def\pgpdf{\text{PG}}

\global\long\def\wshpdf{\text{Wish}}

\global\long\def\iwshpdf{\text{InvWish}}

\global\long\def\nwpdf{\text{NW}}

\global\long\def\niwpdf{\text{NIW}}

\global\long\def\studentpdf{\text{Student}}

\global\long\def\unipdf{\text{Uni}}

\global\long\def\transp#1{\transpose{#1}}
 \global\long\def\transpose#1{#1^{\mathsf{T}}}

\global\long\def\mgt{\succ}

\global\long\def\mge{\succeq}

\global\long\def\idenmat{\mathbf{I}}

\global\long\def\trace{\mathrm{tr}}

\global\long\def\argmax#1{\underset{_{#1}}{\text{argmax}} }

\global\long\def\argmin#1{\underset{_{#1}}{\text{argmin}\ } }

\global\long\def\diag{\text{diag}}

\global\long\def\norm{}

\global\long\def\spn{\text{span}}

\global\long\def\vtspace{\mathcal{V}}

\global\long\def\field{\mathcal{F}}
 \global\long\def\ffield{\mathcal{F}}

\global\long\def\inner#1#2{\left\langle #1,#2\right\rangle }
 \global\long\def\iprod#1#2{\inner{#1}{#2}}

\global\long\def\dprod#1#2{#1 \cdot#2}

\global\long\def\norm#1{\left\Vert #1\right\Vert }

\global\long\def\entro{\mathbb{H}}

\global\long\def\entropy{\mathbb{H}}

\global\long\def\Entro#1{\entro\left[#1\right]}

\global\long\def\Entropy#1{\Entro{#1}}

\global\long\def\mutinfo{\mathbb{I}}

\global\long\def\relH{\mathit{D}}

\global\long\def\reldiv#1#2{\relH\left(#1||#2\right)}

\global\long\def\KL{KL}

\global\long\def\KLdiv#1#2{\KL\left(#1\parallel#2\right)}
 \global\long\def\KLdivergence#1#2{\KL\left(#1\ \parallel\ #2\right)}

\global\long\def\crossH{\mathcal{C}}
 \global\long\def\crossentropy{\mathcal{C}}

\global\long\def\crossHxy#1#2{\crossentropy\left(#1\parallel#2\right)}

\global\long\def\breg{\text{BD}}

\global\long\def\lcabra#1{\left|#1\right.}

\global\long\def\lbra#1{\lcabra{#1}}

\global\long\def\rcabra#1{\left.#1\right|}

\global\long\def\rbra#1{\rcabra{#1}}

\title{Bayesian Optimization for Categorical and Category-Specific Continuous
Inputs}

\author{Dang Nguyen, Sunil Gupta, Santu Rana, Alistair Shilton, Svetha Venkatesh\\
Applied Artificial Intelligence Institute (A\textsuperscript{2}I\textsuperscript{2}),
Deakin University, Geelong, Australia\\
\textit{\{d.nguyen, sunil.gupta, santu.rana, alistair.shilton, svetha.venkatesh\}@deakin.edu.au}}
\maketitle
\begin{abstract}
Many real-world functions are defined over both categorical and \textit{category-specific}
continuous variables and thus cannot be optimized by traditional Bayesian
optimization (BO) methods. To optimize such functions, we propose
a new method that formulates the problem as a multi-armed bandit problem,
wherein each category corresponds to an arm with its reward distribution
centered around the optimum of the objective function in continuous
variables. Our goal is to identify the best arm and the maximizer
of the corresponding continuous function simultaneously. Our algorithm
uses a Thompson sampling scheme that helps connecting both multi-arm
bandit and BO in a unified framework. We extend our method to \textit{batch}
BO to allow parallel optimization when multiple resources are available.
We theoretically analyze our method for convergence and prove sub-linear
regret bounds. We perform a variety of experiments: optimization of
several benchmark functions, hyper-parameter tuning of a neural network,
and automatic selection of the best machine learning model along with
its optimal hyper-parameters (a.k.a \textit{automated machine learning}).
Comparisons with other methods demonstrate the effectiveness of our
proposed method.
\end{abstract}

\section{Introduction\label{sec:Introduction}}

Bayesian optimization (BO) \cite{shahriari2016taking} provides a
powerful and efficient framework for global optimization of expensive
black-box functions. Typically, at each iteration a BO method first
models the black-box function via a statistical model (e.g. a Gaussian
process (GP)) and then seeks out the next function evaluation points
by maximizing an easy to optimize function (a.k.a. \textit{acquisition
function}) that balances the two conflicting requirements: exploitation
of current function knowledge and exploration to gain more function
knowledge. A notable strength of BO is that its convergence is well
studied \cite{bull2011convergence}. 

Most BO methods assume that the function inputs are continuous variables.
In reality, however, a function may be defined over diverse input
types -- for example, categorical, integer or continuous. Categorical
type variables are particularly challenging since they do not have
a natural ordering as in integer and continuous variables. Limited
work has addressed incorporation of categorical input types. In a
recent work, \cite{golovin2017google} used \textit{one-hot encoding}
for categorical variables, increasing the input dimension by one extra
variable per category. Since the categories are mutually exclusive,
all the extra variables are set to zero except the \textit{active}
variable (corresponding to the required category) which is set to
one. After converting the categorical variables to one-hot encoding,
this approach treats extra variables as continuous in $[0,1]$ and
uses a typical BO algorithm to optimize them. Unfortunately, this
type of encoding imposes equal measure of covariance between all category
pairs, totally ignoring the fact that they may have different or no
correlations at all. Further, the recommendations can get repeated
as they are generated via rounding-off at the end. To address the
latter problem, \cite{garrido2018dealing} assumed that the objective
function does not change its values except at the designated points
of 0 and 1. This is achieved by using a kernel function that computes
covariances after the input is rounded off. However, this makes the
resulting acquisition function step-wise, which is hard to optimize. 

In many optimization problems, an additional challenge arises --
each category is coupled with a different continuous search space.
For example, consider the problem of \textit{automated machine learning}
\cite{Feurer2019} where we need to automatically select the best
performing machine learning model along with its optimal hyper-parameters.
Each machine learning model can be viewed as a distinct value (or
choice) of a \emph{categorical variable} while the hyper-parameters
of the model can be viewed as category-specific \textit{continuous}
\emph{variables}. None of the current GP-based BO methods can be applied
to such complex search spaces. 

To incorporate categorical inputs and deal with category-specific
continuous search spaces, tree-based methods have been proposed. For
example, SMAC tackles this problem by using random forest in place
of GP \cite{hutter2011sequential}. However, random forest has a well-known
limitation in performing extrapolation and thus is not a good choice
for BO \cite{Lakshminarayanan_etal_16mondrian}. Yet another method,
tree-structured parzen window based approach (TPE) \cite{bergstra2011algorithms}
can naturally cope with both categorical and continuous variables.
In contrast to GP based approaches, TPE models two likelihood functions
to assess if the function value at any point would be in the range
of top few observations or not. The disadvantage is that this approach
requires higher number of initial data points to model the likelihood
functions effectively. Moreover, the overarching problem with all
the above-mentioned methods is that none of them offered an avenue
for convergence analysis and thus remained ad-hoc in nature.

In this work, we develop a BO algorithm that can handle both categorical
and continuous variables even if each category involves a different
set of continuous variables. The algorithm is amenable to convergence
analysis. We extend this algorithm to develop a \textit{batch} BO
algorithm for the same complex search space scenario. We use a mix
of multi-armed bandit (MAB) and BO formulations in our approach. Each
categorical value corresponds to an arm with its reward distribution
centered around the optimum of the objective function in continuous
variables. Thompson sampling (TS) \cite{russo2014learning} is used
for both selecting the best arm and suggesting the next continuous
point to evaluate. Among various possibilities of MAB and BO algorithms,
our choice of TS is guided by the need for an unified framework to
join both the MAB and BO seamlessly into a single entity for convergence
analysis. Empirically also TS is known to be a competitive algorithm
for both MAB and BO problems due to using a range of exploitation/exploration
trade-offs \cite{russo2014learning,kandasamy2018parallelised}. We
derive the theoretical regret bounds for both the sequential and batch
algorithms, and show that the growth of cumulative regret is at most
sub-linear. We perform a variety of experiments: optimization of several
benchmark functions, hyper-parameter tuning of a neural network, and
\textit{automated machine learning}. The empirical results demonstrate
the effectiveness of our method.

Compared with other methods, our method offers three key advantages:
\begin{itemize}
\item Optimizing black-box functions with continuous and categorical inputs
in both sequential and batch settings;
\item Handling the problems where each category is coupled with a different
set of continuous variables; and
\item Deriving the regret bounds for both sequential and batch settings.
\end{itemize}

\section{Related Background\label{sec:Related-Work}}

\subsection{Bayesian Optimization}

Bayesian optimization (BO) is a method to find the global optimum
of an expensive, black-box function $f(x)$ as $x^{*}=\text{argmax}_{x\in\mathcal{X}}f(x)$,
where ${\cal X}$ is a bounded domain in $\mathbb{R}^{d}$. It assumes
that $f(x)$ can only be noisily evaluated through queries to the
black-box. BO operates sequentially, and the next function evaluation
is guided by the previous observations. Let us assume that the observations
up to iteration $t$ are denoted as ${\cal D}_{t}=\{x_{i},y_{i}\}_{i=1}^{t}$,
where $y_{i}=f(x_{i})+\epsilon_{i}$ and $\epsilon_{i}\sim{\cal N}(0,\sigma_{\epsilon}^{2})$.
Typically, $f(x)$ is assumed to be a smooth function and modeled
using a Gaussian process (GP), \emph{i.e.} $f(x)\sim\text{GP}(m(x),k(x,x'))$,
where $m(x)$ is a mean function that can be assumed to be a zero
function, and $k(x,x')$ is a covariance function modeling the covariance
between any two function values $f(x)$ and $f(x')$. A common covariance
function is the \textit{squared exponential} kernel, defined as $k(x,x')=\sigma^{2}\exp(-\frac{1}{2l^{2}}||x-x'||_{2}^{2})$,
where $\sigma^{2}$ is a parameter dictating the uncertainty in $f(x)$,
$l$ is a length scale parameter. The \textit{predictive distribution}
for $f(x)$ at any point $x$ is also a Gaussian distribution with
its mean and variance given by $\mu_{t}(x)=\transp{\boldsymbol{k}}[K+\sigma_{\epsilon}^{2}I]^{-1}y_{1:t}$
and $\sigma_{t}^{2}(x)=k(x,x)-\transp{\boldsymbol{k}}[K+\sigma_{\epsilon}^{2}I]^{-1}\boldsymbol{k}$,
where $K$ is a matrix of size $t\times t$ with $(i,j)$-th element
defined as $k(x_{i},x_{j})$ and $\boldsymbol{k}$ is a vector with
$i$-th element defined as $k(x_{i},x)$.

BO uses a surrogate function called \textit{acquisition function}
to find the next point to evaluate. The acquisition function uses
the predictive distribution to balance two contrasting goals: sampling
where the function is expected to take a high value vs. sampling where
the uncertainty about the function value is high. Some well-known
acquisition functions are probability of improvement \cite{kushner1964new},
expected improvement \cite{jones1998efficient}, GP-upper confidence
bound \cite{srinivas2012information}, and predictive entropy search
\cite{hernandez2014predictive}.

\subsection{Batch Bayesian Optimization}

In its standard form, BO works in a \textit{sequential setting} where
it suggests one point at each iteration. However, when parallel resources
are available at each iteration, BO is extended to a \textit{batch
setting} where it suggests multiple points at each iteration. Several
batch BO methods have been developed (see \cite{gonzalez2016batch}
for a review). Recently, Thompson sampling has become an efficient
technique for batch BO \cite{hernandez2017parallel,kandasamy2018parallelised};
the idea is to select a batch element by maximizing a randomly drawn
function from the posterior GP. However, to the best of our knowledge,
batch BO for both categorical and continuous variables has not yet
been studied.

\section{The Proposed Method\label{sec:Framework}}

\subsection{Problem Definition}

We present a method for BO to jointly handle both categorical and
category-specific continuous input variables. Formally, given an input
$[c,x_{c}]$ and a \textit{black-box} function $f([c,x_{c}])$, where
$c\in\{1,...,C\}$ is a \textit{category }among $C$ categories and
$x_{c}\in\mathcal{X}_{c}\subset\mathbb{R}^{d}$ are \textit{continuous}
variables corresponding to the category $c$, our goal is to find:
\begin{equation}
[c^{*},x^{*}]=\argmax{c\in\{1,...,C\},x_{c}\in\mathcal{X}_{c}}f([c,x_{c}])\label{eq:Optimization-problem-Cat-Con}
\end{equation}
A na\"{i}ve approach to solve Eq. (\ref{eq:Optimization-problem-Cat-Con})
is to consider $f([c,x_{c}])$ as a \textit{collection of black-box
functions} $\{f_{c}(x_{c})\}_{c=1}^{C}$ defined on continuous domain
$\mathcal{X}_{c}$ and then find the optimum $x_{c}^{*}=\text{argmax}_{x_{c}\in\mathcal{X}_{c}}f_{c}(x)$
for each function and finally get $[c^{*},x^{*}]$ as $c^{*}=\text{argmax}_{c\in\{1,...,C\}}f_{c}(x_{c}^{*})$
and $x^{*}=x_{c^{*}}^{*}$. However, this approach is \emph{inefficient}
as it needs to find $x_{c}^{*}$ for all $c$. Since the function
evaluations in BO are expensive, an efficient approach is needed.

Before proceeding further, we simplify our notation. Instead of using
$[c,x_{c}]$, we simply write $[c,x]$. The notation $x\in\mathcal{X}_{c}$
is sufficient to resolve any ambiguity.

\subsection{Sequential Setting\label{subsec:Sequential-Bandit-BO}}

To efficiently solve the optimization problem in Eq. (\ref{eq:Optimization-problem-Cat-Con}),
we propose a novel method that combines multi-armed bandit (MAB) and
BO. The main idea is that instead of optimizing $f([c,x])$ exhaustively
for each value of $c$, we attempt to identify (in parallel) the best
value of $c$ (denoted as $c^{*}$), which results in the optimal
function value and optimize $x^{*}=\text{argmax}_{x\in\mathcal{X}_{c^{*}}}f([c^{*},x])$.

We formulate the problem of selecting the best value $c^{*}$ as a
MAB problem \cite{bubeck2012regret}, where each value $c\in\{1,...,C\}$
is considered as an arm. In the usual parlance of MAB, the arm $c$
has a reward distribution (due to noisy function evaluations) around
a mean parameter $f_{c}^{*}\triangleq\max_{x\in\mathcal{X}_{c}}f_{c}(x)$.
We note that $f_{c}^{*}$ is unknown due to $f_{c}(x)$ being a black-box.
For BO of $f_{c}(x)$, we model it using a GP, which induces a distribution
on $f_{c}^{*}$. As we increasingly get observations of $f_{c}(x)$,
the uncertainty in $f_{c}^{*}$ reduces. From the MAB side, this means
that each time an arm $c$ is played, we get an additional observation
of $f_{c}(x)$, which improves the estimate of the mean parameter
$f_{c}^{*}$. MAB algorithm allows us to play the arms optimally guided
by the uncertainties in $f_{c}^{*}$. An illustration is shown in
Figure \ref{fig:An-Illustration-of-Bandit-BO}.

\begin{figure}[t]
\subfloat[Iteration $t=4$.]{\begin{centering}
\begin{minipage}[t]{0.49\textwidth}%
\begin{center}
\includegraphics[scale=0.22]{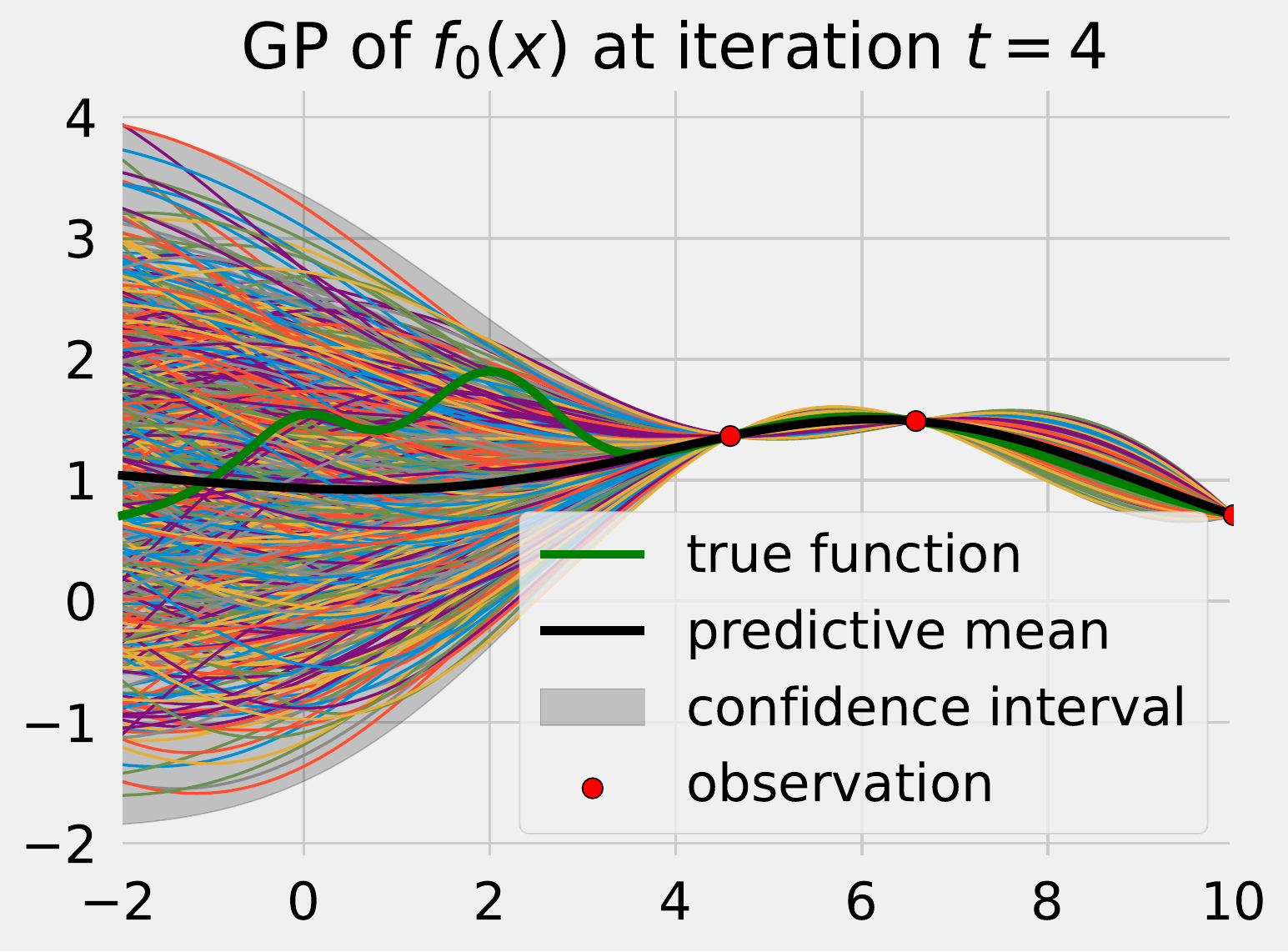}\hspace{0.4cm}\includegraphics[scale=0.21]{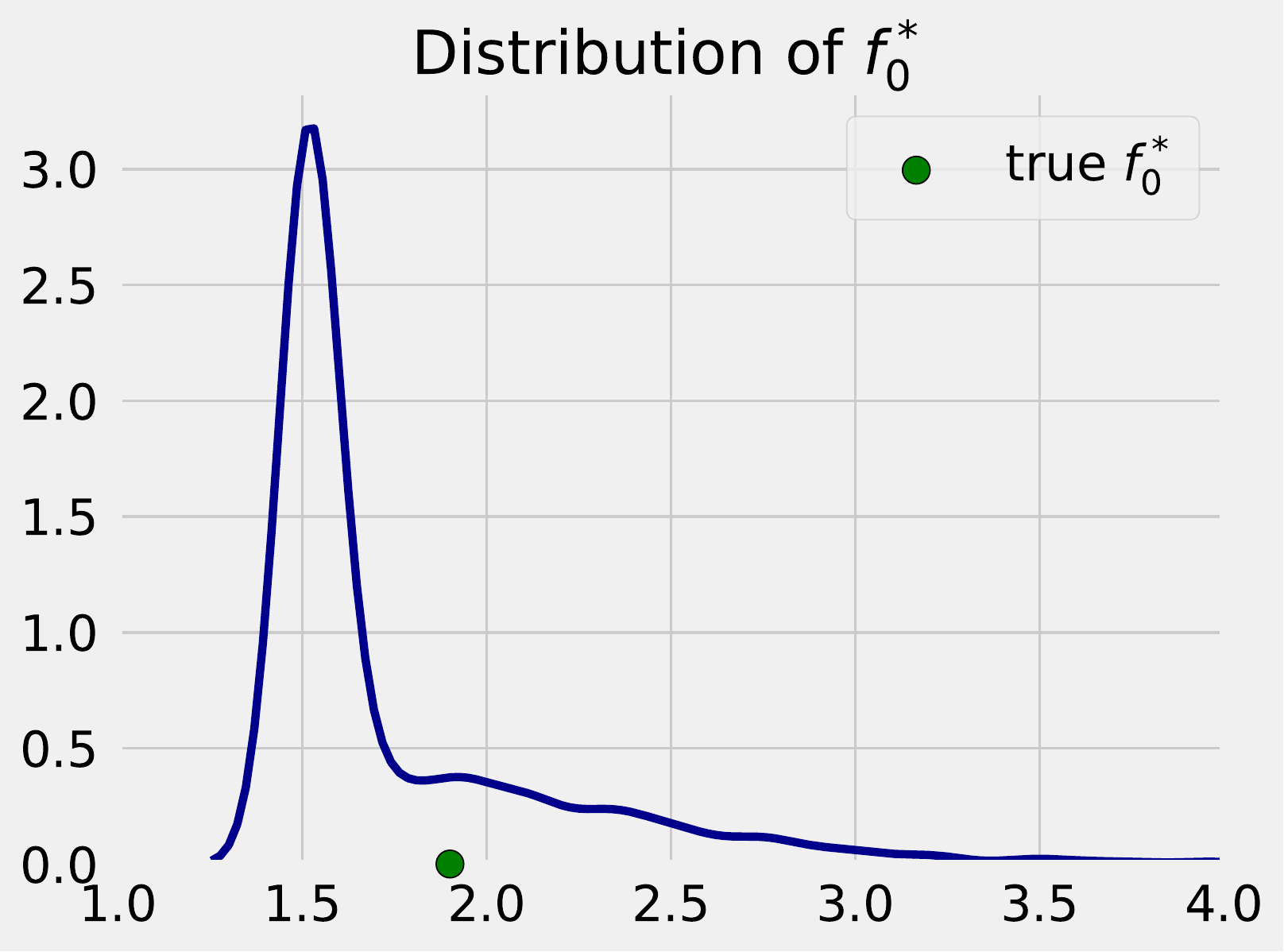}
\par\end{center}
\vspace{0.1cm}

\begin{center}
\includegraphics[scale=0.22]{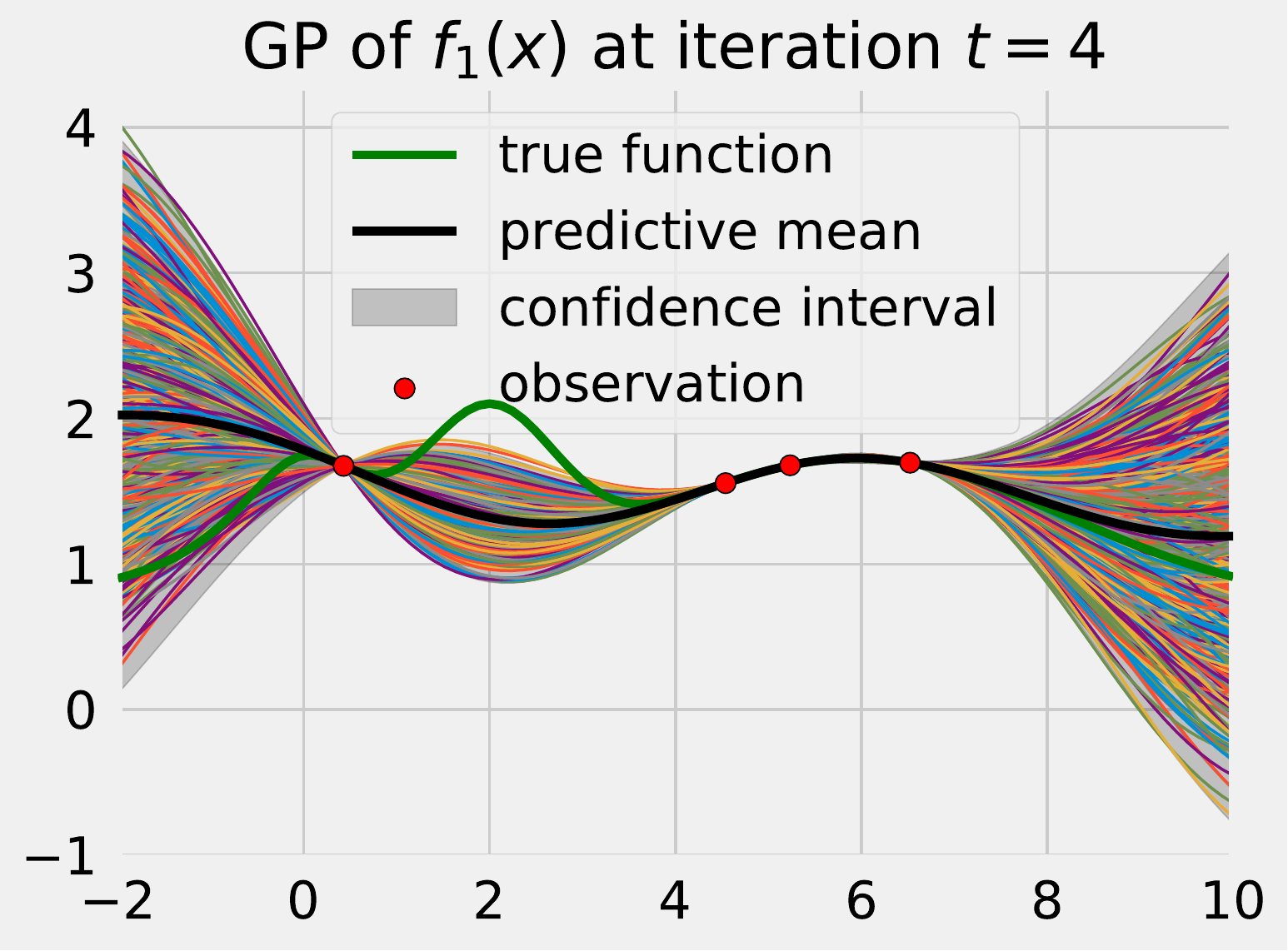}\hspace{0.4cm}\includegraphics[scale=0.21]{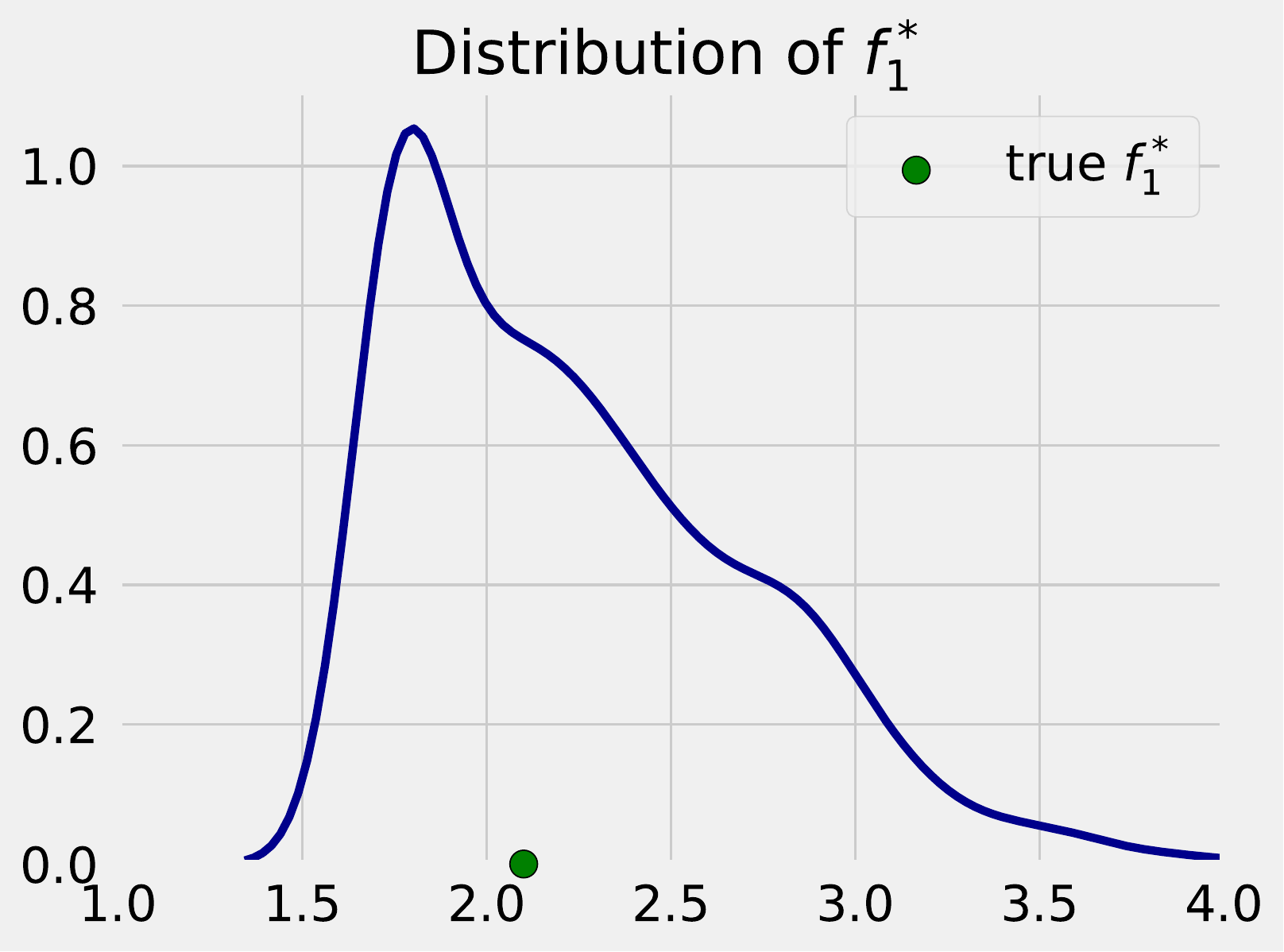}
\par\end{center}%
\end{minipage}
\par\end{centering}
}\hfill{}\subfloat[Iteration $t=15$.]{\begin{centering}
\begin{minipage}[t]{0.49\textwidth}%
\begin{center}
\includegraphics[scale=0.22]{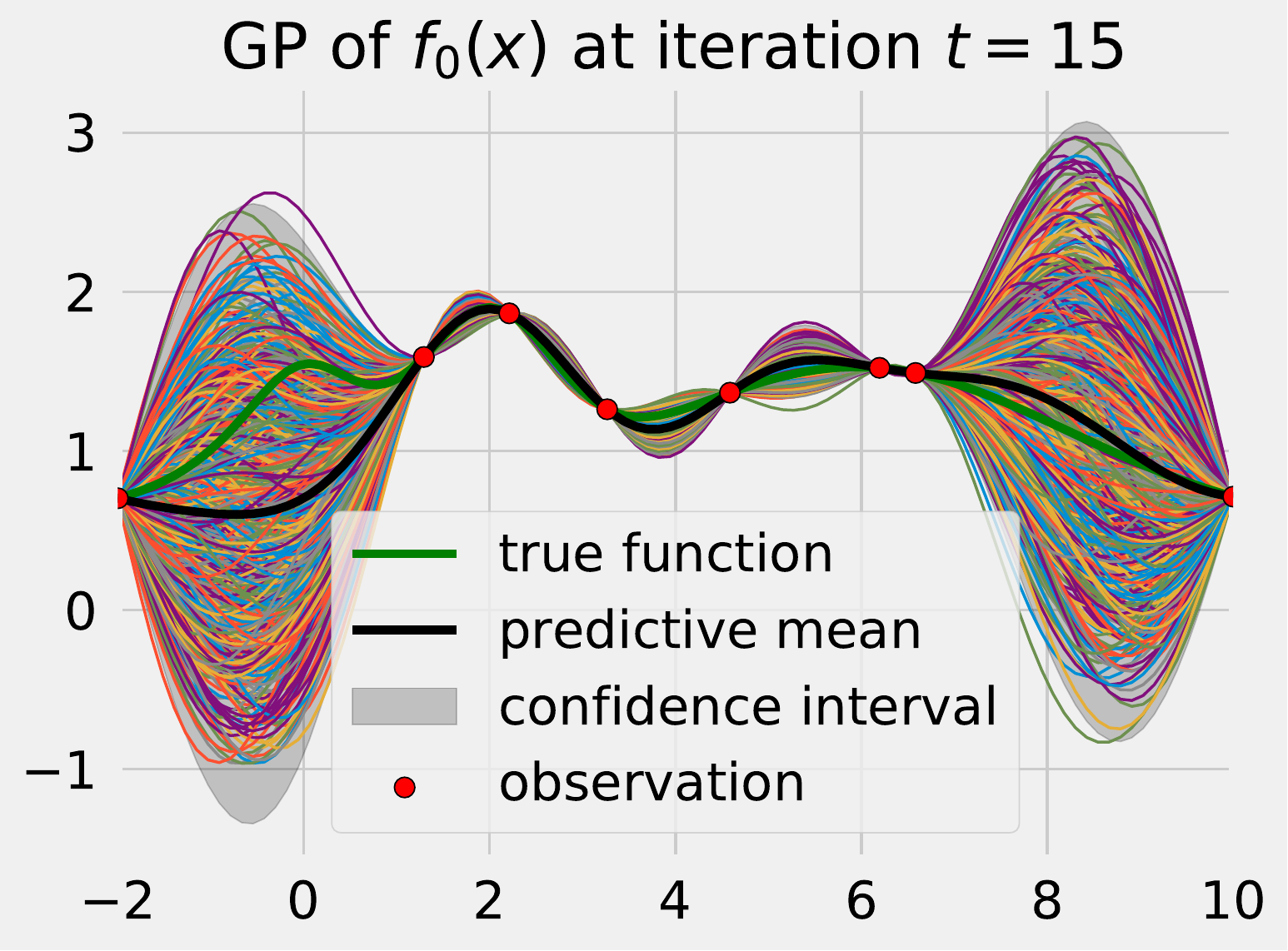}\hspace{0.4cm}\includegraphics[scale=0.21]{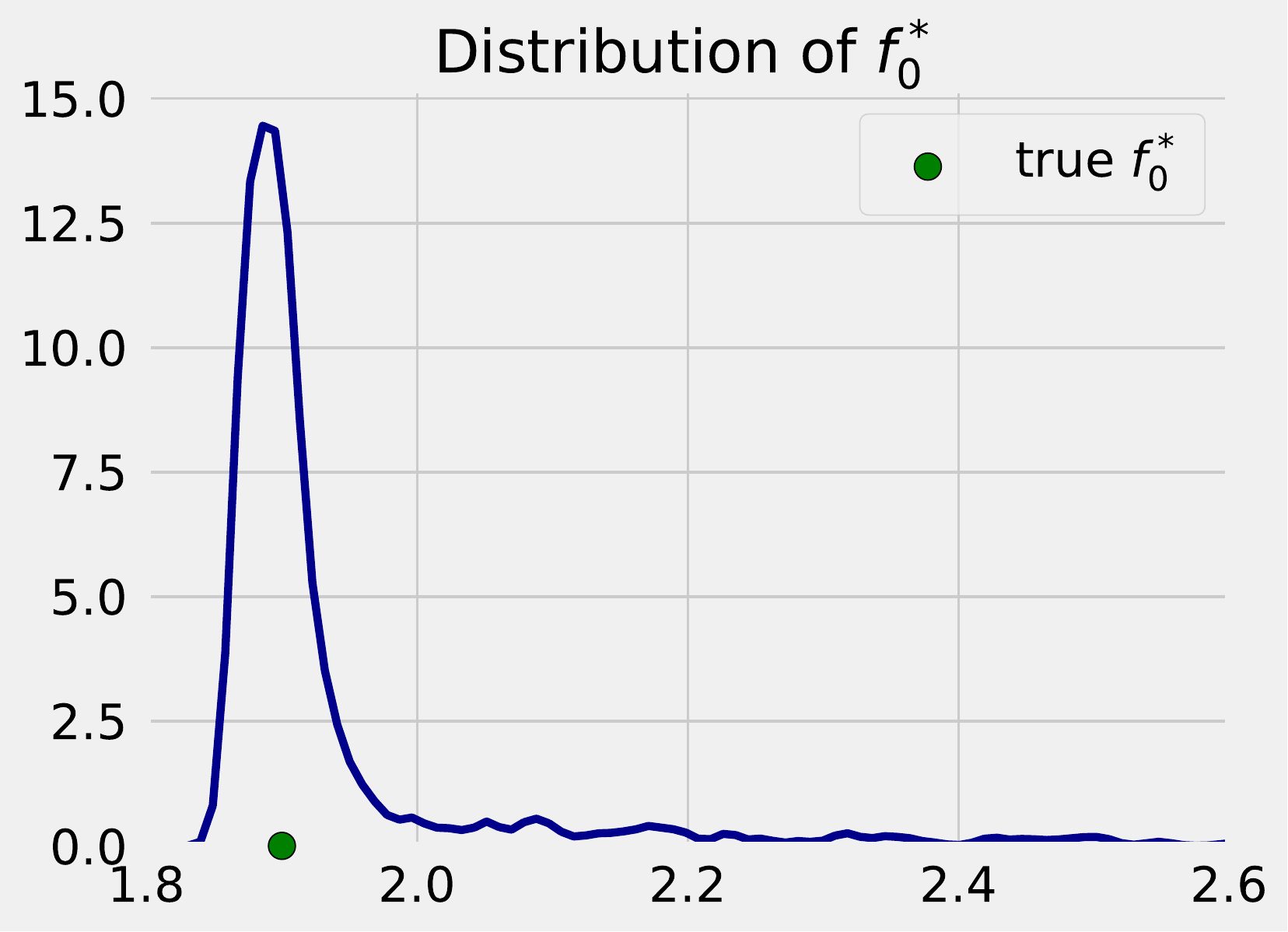}
\par\end{center}
\vspace{0.1cm}

\begin{center}
\includegraphics[scale=0.22]{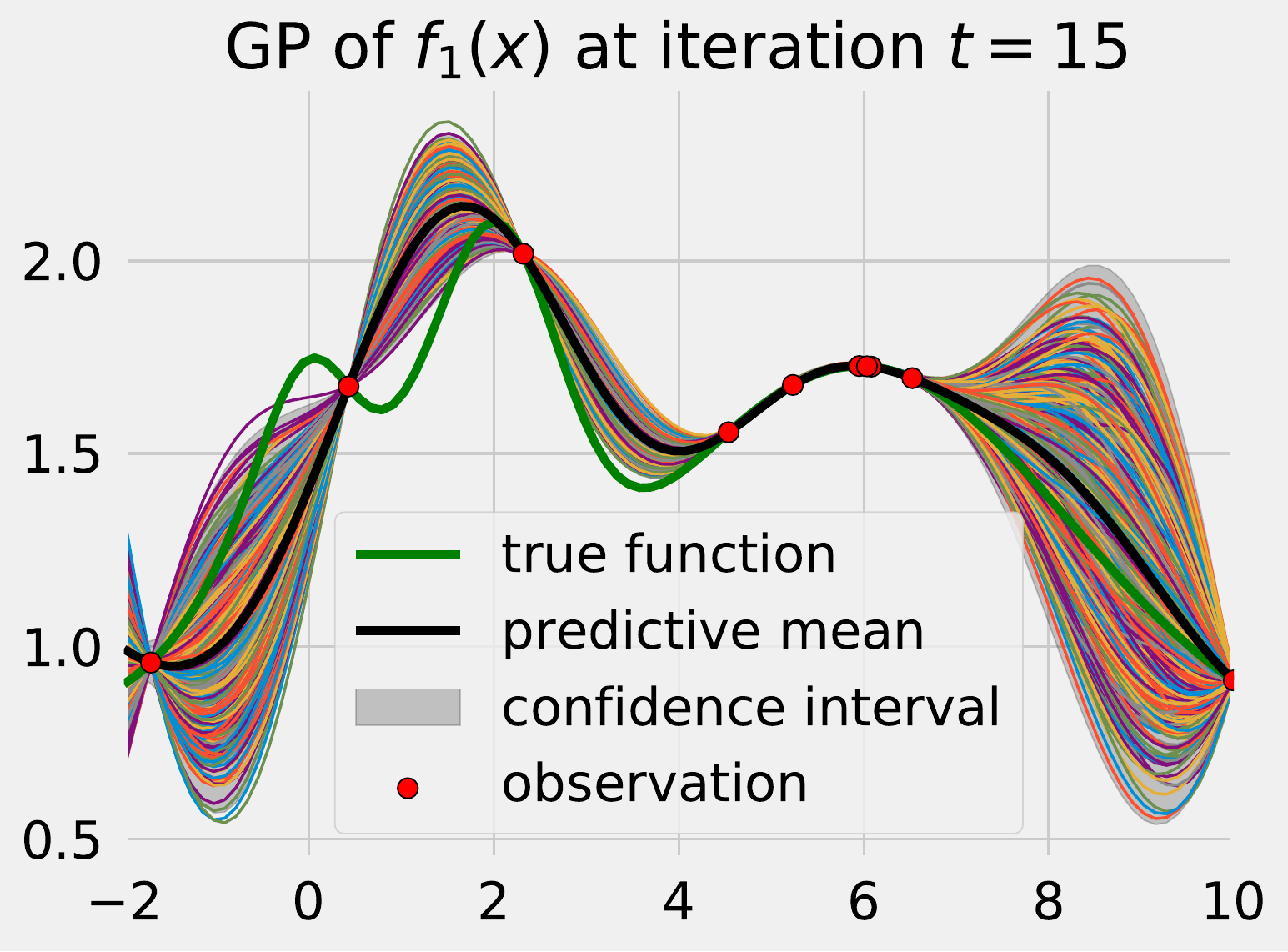}\hspace{0.4cm}\includegraphics[scale=0.21]{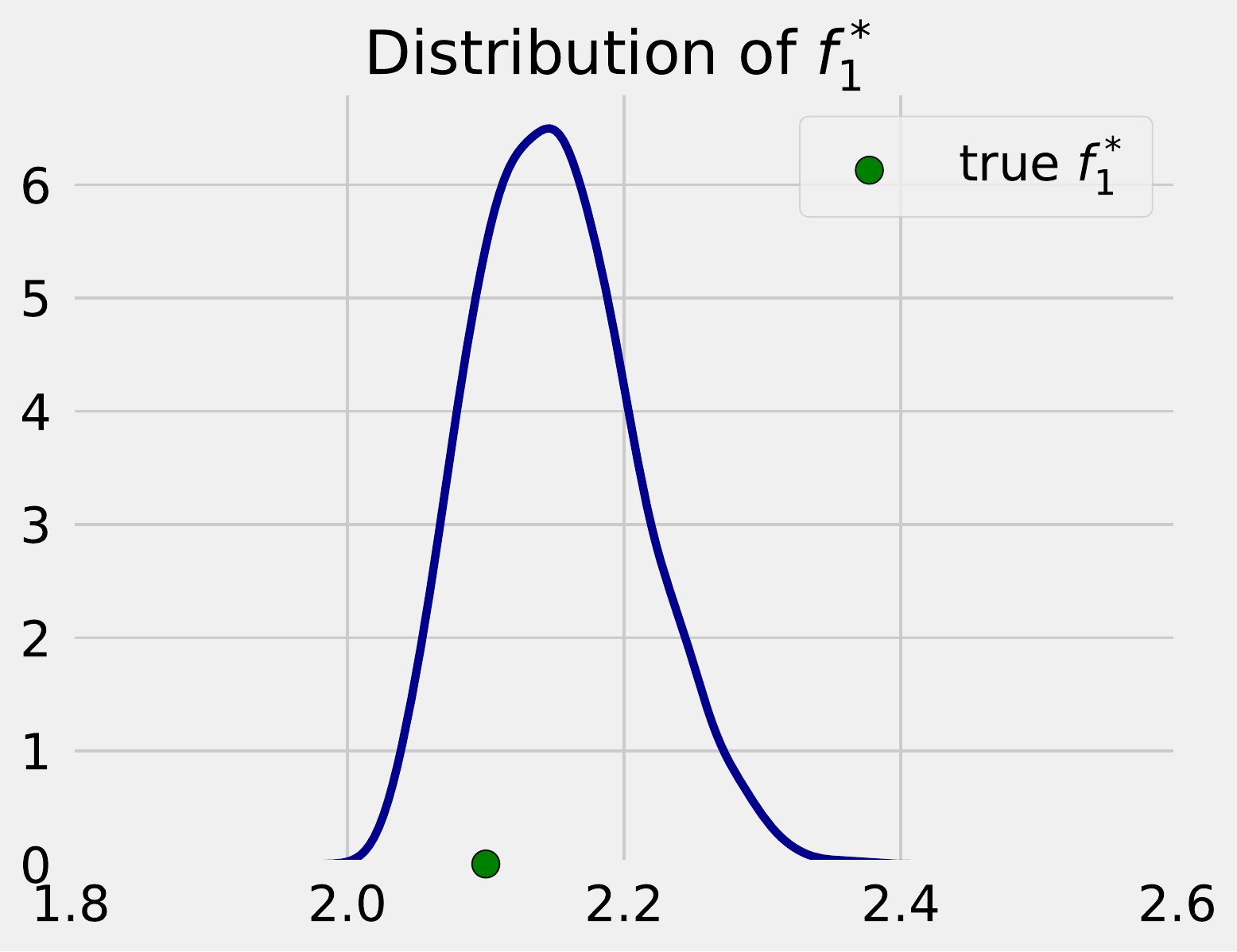}
\par\end{center}%
\end{minipage}
\par\end{centering}
}

\caption{An illustration of our method using $C=2$: (a) the results at $t=4$,
(b) the results at $t=15$. In both (a) and (b), the first column
shows the posterior GPs for category-0 and category-1 along with the
true functions (shown in green color). The second column shows the
estimated distributions of $f_{0}^{*}$ and $f_{1}^{*}$ with their
true values (shown as green dots). \label{fig:An-Illustration-of-Bandit-BO}}
\end{figure}

\begin{algorithm}
\KwIn{$C$: \# of arms, $B$: batch size}

\Begin{

\For{$t=1,2,...$}{

\ForEach{$c\in\{1,...,C\}$}{

fit $\text{GP}_{c}$ (\emph{i.e.} $p(f_{c}(x)\mid{\cal D}_{t}^{c})$)
using ${\cal D}_{t}^{c}$\;

}

\For{$b=1$ \KwTo$B$}{

\ForEach{$c\in\{1,...,C\}$}{

draw $\tilde{f}_{c}(x)\sim p(f_{c}(x)\mid{\cal D}_{t}^{c})$\;

obtain $\tilde{x}_{c}^{*}=\text{argmax}_{x\in{\cal X}_{c}}\tilde{f}_{c}(x)$\; 

set $\tilde{f}_{c}^{*}=\tilde{f}_{c}(\tilde{x}_{c}^{*})$\;

}

choose an arm $c_{t+b}=\arg\max_{c}\tilde{f}_{c}^{*}$\;

suggest a point $x_{t+b}=\tilde{x}_{c_{t+b}}^{*}$\;

evaluate $y_{t+b}=f_{c_{t+b}}(x_{t+b})+\epsilon_{t+b}$\;

}

${\cal D}_{t+B}={\cal D}_{t}\cup\{c_{t+b},x_{t+b},y_{t+b}\}_{b=1}^{B}$\;

}

}

\caption{\label{alg:Bandit-BO-algorithm-NEW}The proposed \textbf{Bandit-BO}
algorithm.}

\end{algorithm}

We have several choices for MAB algorithms such as UCB1, Exp3, $\epsilon$-greedy,
and Thompson sampling \cite{auer2002finite,russo2014learning} and
similarly multiple choices for BO algorithms differing mainly in acquisition
functions e.g. EI, GP-UCB, TS, entropy search etc \cite{hernandez2014predictive}.
We prefer to use GP-UCB\footnote{Although we focused only on TS for BO, but GP-UCB is also feasible
both practically and theoretically. The key difference is that we
get probabilistic regret bounds holding with high probability.} or TS for BO as these algorithms can be analyzed to provide theoretical
upper bounds on regret. For MAB algorithm, we have decided to use
TS for multiple reasons: (1) the GP directly offers a posterior distribution
on the arm means $f_{c}^{*}$ and therefore using TS is feasible;
(2) TS is flexible to modifications and amenable to theoretical analyses;
and (3) TS is shown to achieve competitive performance in practice
due to using a complete distribution for exploration/exploitation
trade-off \cite{kandasamy2018parallelised}.

Using TS for both BO and MAB, our method works as follows. At each
iteration $t$, for each arm $c$ we first model $f_{c}(x)$ via a
GP using existing observations ${\cal D}_{t}^{c}=\{c_{i},x_{i},y_{i}=f_{c_{i}}(x_{i})+\epsilon_{i}\mid c_{i}=c\}_{i=1}^{t}$.
Collectively, we denote ${\cal D}_{t}=\cup_{c=1}^{C}{\cal D}_{t}^{c}$.
We then randomly draw a function from the posterior GP distribution
of each arm as $\tilde{f}_{c}(x)\sim p(f_{c}(x)\mid{\cal D}_{t}^{c})$.
Next we select an arm as $c_{t+1}=\argmax{c\in\{1,...,C\}}\max_{x\in\mathcal{X}_{c}}\tilde{f}_{c}(x)$.
Given the selected arm $c_{t+1}$ (or category), we need to recommend
a value for $x_{t+1}$. Usually in BO, $x_{t+1}$ is recommended via
an acquisition function. Since we are using TS as the acquisition
function in BO, we recommend $x_{t+1}=\text{argmax}_{x\in\mathcal{X}_{c_{t+1}}}\tilde{f}_{c_{t+1}}(x)$.
Finally, we observe the function value as $y_{t+1}=f_{c_{t+1}}(x_{t+1})+\epsilon_{t+1}$
and update the observation set as ${\cal D}_{t+1}^{c_{t+1}}={\cal D}_{t}^{c_{t+1}}\cup\{c_{t+1},x_{t+1},y_{t+1}\}$.
We call our method \textbf{Bandit-BO}.

\subsection{Batch Setting\label{subsec:Batch-Bandit-BO}}

We extend our sequential method to \textit{batch setting} to recommend
a set of $B$ samples at each round. We note that the number of total
function evaluations $T$ = the number of rounds $N$ $\times$ the
batch size $B$. Our batch algorithm is similar to the sequential
one except that at each round, we recommend $B$ samples of $c$ and
$x$, each sample obtained using an independent Thompson sample from\emph{
$p(f_{c}(x)\mid{\cal D}_{t}^{c})$.} Using these recommendations $\{(c_{t+b},x_{t+b})\}_{b=1}^{B}$,
we evaluate the functions as $y_{t+b}=f_{c_{t+b}}(x_{t+b})+\epsilon_{t+b}$
and update the observation set as ${\cal D}_{t+B}={\cal D}_{t}\cup\{c_{t+b},x_{t+b},y_{t+b}\}_{b=1}^{B}$.

Our \textbf{Bandit-BO} in both sequential and batch settings is summarized
in Algorithm \ref{alg:Bandit-BO-algorithm-NEW}.

\subsection{Convergence Analysis}

We first present the convergence analysis for the sequential setting
and then extend it to the batch setting. Our analysis is developed
on the previous theoretical results of \cite{russo2014learning,desautels2014parallelizing,kandasamy2018parallelised}.

\subsubsection{Sequential Setting\label{subsec:Convergence-Sequential-Bandit-BO}}

TS has been analyzed earlier mostly in the context of MAB with finite
arms. An exception is \cite{russo2014learning}, which extended the
analysis to GP bandits to infinitely many dependent arms. BO using
GP models is a related problem where one has to decide the best point
among an uncountably infinite set of points specified by $\mathcal{X}_{c}$.
In our analysis, we extend the results of \cite{russo2014learning}
to advance the BO in a joint space of categorical and continuous variables.
Our analysis provides the convergence guarantee using \textit{Bayesian
regret}, which has been used as regret measure by several earlier
works \cite{agrawal2013further,Bubeck_Liu_2013prior}.

Following \cite{russo2014learning}, the Bayesian regret of our proposed
\textbf{Bandit-BO} after $T$ iterations is
\begin{equation}
\text{BayesRegret}(T)=\mathbb{E}\sum_{t=1}^{T}[f_{c^{*}}(x^{*})-f_{c_{t}}(x_{t})],
\end{equation}
where the expectation is w.r.t. a distribution over all possible functions
$f_{c}$ in our hypothesis space and any randomness in the algorithm,
particularly the random sampling of TS. Inserting and deleting $f_{c_{t}}(x_{c_{t}}^{*})$
in the above expression we can write the $\text{BayesRegret}(T)$
as

\begin{align}
\text{BayesRegret}(T) & =\mathbb{E}\sum_{t=1}^{T}[f_{c^{*}}(x^{*})-f_{c_{t}}(x_{c_{t}}^{*})]+\label{eq:Regret_decomposition}\\
 & \mathbb{E}\sum_{t=1}^{T}[f_{c_{t}}(x_{c_{t}}^{*})-f_{c_{t}}(x_{t})]=R_{T}^{\text{MAB}}+R_{T}^{\text{BO}}\nonumber 
\end{align}
where we have defined $R_{T}^{\text{MAB}}\triangleq\mathbb{E}\sum_{t=1}^{T}[f_{c^{*}}(x^{*})-f_{c_{t}}(x_{c_{t}}^{*})]$
and $R_{T}^{\text{BO}}\triangleq\mathbb{E}\sum_{t=1}^{T}[f_{c_{t}}(x_{c_{t}}^{*})-f_{c_{t}}(x_{t})]$.
The reason for using the terminology of $R_{T}^{\text{MAB}}$ and
$R_{T}^{\text{BO}}$ is as follows: since $f_{c}(x_{c}^{*})$ is the
mean of arm $c$ rewards, $f_{c^{*}}(x^{*})-f_{c_{t}}(x_{c_{t}}^{*})$
denotes the regret due to choosing a sub-optimal arm at iteration
$t$; and similarly, given the choice of $c_{t}$ at iteration $t$,
$f_{c_{t}}(x_{c_{t}}^{*})-f_{c_{t}}(x_{t})$ denotes the regret of
BO choosing a sub-optimal continuous point.

To provide an upper bound on $\text{BayesRegret}(T)$ in Eq. (\ref{eq:Regret_decomposition}),
we prove Lemma \ref{lem:MAB_regret_Seq} and \ref{lem:BO_regret_seq},
which provide upper bounds on $R_{T}^{\text{MAB}}$ and $R_{T}^{\text{BO}}$
respectively. Before we proceed, we need to state the following two
assumptions. The first assumption is required to prove regret bounds
for BO in a continuous search domain \cite{srinivas2012information,kandasamy2018parallelised}.
The second assumption is required to prove Lemma \ref{lem:MAB_regret_Seq}
and \ref{lem:BO_regret_seq}.

\noindent\textbf{Assumption 1.} Let $\{f_{c}(x)\}_{c=1}^{C}$ be
a set of functions such that $x\in\mathcal{X}_{c}\subset\mathbb{R}^{d}$.
Further, for all $c$, let $f_{c}(x)\sim\text{GP}_{c}(0,k_{c})$ with
a covariance function $k_{c}$ such that for any sample path of $\textrm{GP}_{c}$,
there exist constants $r$ and $s$ such that its partial derivatives
satisfy the following condition
\[
\forall L>0,\forall i\in\{1,\ldots,d\}\ \mathbb{P}(|\partial f/\partial x_{i}|<L)\geq1-dre^{-L^{2}/s^{2}}
\]

\noindent\textbf{Assumption 2.} $\forall c\in\{1,...,C\}$, let $f_{c}(x)\sim\text{GP}_{c}(0,k_{c})$
with a covariance function $k_{c}$ such that the maximum information
gain $\gamma_{T_{c}}$ about $f_{c}$ due to any $T_{c}$ noisy observations
is strictly sub-linear in $T_{c}$. Therefore, there exists an $\alpha$
such that $\gamma_{T_{c}}\sim\mathcal{O}(T_{c}^{\alpha})$ where $0\leq\alpha<1$.

As stated in \cite{srinivas2012information}, Assumption 1 satisfies
for any covariance function that is four times differentiable. Assumption
2 satisfies for common covariance functions e.g. squared-exponential,
Mat\'{e}rn etc.
\begin{lem}[Upper bound on $R_{T}^{\text{MAB}}$]
\label{lem:MAB_regret_Seq}Let $f_{c}(x)\sim\text{GP}_{c}(0,k_{c}),c=1,...,C$
and $\mathcal{D}_{t}=\{(c_{i},x_{i},y_{i})\}_{i\leq t}$ be the noisy
function observation set suggested by our method in the \textbf{sequential
setting} under the observation model $y_{i}=f_{c_{i}}(x_{i})+\epsilon_{i}$,
where $\epsilon_{i}\sim\mathcal{N}(0,\sigma_{\epsilon}^{2})$, then
under Assumptions 1 and 2, we have $R_{T}^{\text{MAB}}\leq\mathcal{O}\left(\sqrt{CT^{\alpha+1}\log T}\right)$.
\end{lem}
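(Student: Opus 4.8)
The plan is to follow the posterior-sampling template of \cite{russo2014learning}, adapted to the joint categorical--continuous action space, and to reduce $R_T^{\text{MAB}}$ to a sum of posterior standard deviations evaluated at the \emph{played} points $x_t$, which can then be controlled through the maximum information gain of Assumption 2. The structural fact I would exploit is the Thompson-sampling matching property: conditioned on the history $\mathcal{F}_{t-1}$ up to round $t$, the sampled maximizer $(c_t,x_t)=\arg\max_{c,x}\tilde f_c(x)$ is distributed exactly as the true optimizer $(c^*,x^*)$, since each $\tilde f_c$ is drawn from the posterior $p(f_c\mid\mathcal D_{t-1}^c)$. I would define the per-arm upper confidence function $\varphi_t(c,x)=\mu_{t-1}^c(x)+\beta_t^{1/2}\sigma_{t-1}^c(x)$, which is $\mathcal F_{t-1}$-measurable, so that matching yields $\mathbb E[\varphi_t(c^*,x^*)\mid\mathcal F_{t-1}]=\mathbb E[\varphi_t(c_t,x_t)\mid\mathcal F_{t-1}]$.

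The decomposition proceeds term by term. First, upper-bound the optimal arm mean by its confidence function at the single point $x^*$, namely $f_{c^*}(x^*)\le\varphi_t(c^*,x^*)$, valid on a high-probability event by the GP confidence bound. Second, lower-bound the played arm's mean by discarding the nonnegative BO gap, $f_{c_t}^*=\max_x f_{c_t}(x)\ge f_{c_t}(x_t)\ge\mu_{t-1}^{c_t}(x_t)-\beta_t^{1/2}\sigma_{t-1}^{c_t}(x_t)$, again on the confidence event. Subtracting and taking the conditional expectation, the matching property cancels the $\varphi_t$ terms and leaves $\mathbb E[f_{c^*}^*-f_{c_t}^*\mid\mathcal F_{t-1}]\le\mathbb E[2\beta_t^{1/2}\sigma_{t-1}^{c_t}(x_t)\mid\mathcal F_{t-1}]$, a width evaluated at the \emph{played} point. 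Summing over $t$, grouping rounds by the arm selected, and writing $T_c$ for the number of pulls of arm $c$, the standard variance--information-gain inequality gives $\sum_{t:c_t=c}[\sigma_{t-1}^c(x_t)]^2\le C_1\gamma_{T_c}$; two applications of Cauchy--Schwarz (within each arm, then across arms), together with the monotone bound $\gamma_{T_c}\le\gamma_T=\mathcal O(T^\alpha)$ and $\sum_c T_c=T$, yield $\sum_t\sigma_{t-1}^{c_t}(x_t)\le\sqrt{C_1\gamma_T}\sum_c\sqrt{T_c}\le\sqrt{C_1\gamma_T}\sqrt{CT}=\mathcal O(\sqrt{CT^{\alpha+1}})$. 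Multiplying by $\beta_t^{1/2}=\mathcal O(\sqrt{\log T})$ delivers the claimed $\mathcal O(\sqrt{CT^{\alpha+1}\log T})$.

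The main obstacle I anticipate is making the confidence event valid \emph{uniformly} over the continuous domains $\mathcal X_c$ and at the \emph{random} points $x^*$ and $x_t$: a pointwise GP bound does not suffice. I would instead fix, at each round, a progressively finer discretization of each $\mathcal X_c$ and invoke the gradient tail bound of Assumption 1 to control the discretization error, choosing $\beta_t$ (growing like $\log(Ct)$ up to constants depending on $r$, $s$, $d$) so that the union bound over the grids and over the $C$ arms leaves a failure probability whose contribution to the Bayesian regret is of lower order. A secondary technical point is that the allocation $\{T_c\}$ is itself random, so the Cauchy--Schwarz step across arms must be justified \emph{inside} the expectation, using only $\sum_c T_c=T$ and the concavity of $t\mapsto\sqrt{t}$, so that no assumption on how the pulls split among arms is required.
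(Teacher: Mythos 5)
Your proposal is correct in outline, but it takes a genuinely different route from the paper's. The paper never works with pointwise GP confidence bands in the MAB step: it black-boxes the Thompson-sampling Bayesian \emph{simple} regret rate for BO of a single $f_c$ (namely $\mathbb{E}[f_c^*-\max_{t'}f_c(x_{t'})]\leq\mathcal{O}(\sqrt{\log t_c/t_c^{1-\alpha}})$, obtained from the known cumulative bound $\mathcal{O}(\sqrt{T_c\log T_c\,\gamma_{T_c}})$ by averaging), and uses that rate as the width of an \emph{arm-level} upper confidence function $U_t(c)=\mathbb{E}[\max_{\{t'\mid c_{t'}=c\}}f(x_{t'})]+a\sqrt{\log t_c/t_c^{1-\alpha}}$. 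The Russo--Van Roy matching argument is then applied only to the finite $C$-armed bandit ($\mathbb{E}[U_t(c^*)]=\mathbb{E}[U_t(c_t)]$ conditioned on $\mathcal{D}_t$), Term-1 is discarded as nonpositive, and Term-2 is summed per arm and combined across arms exactly as in your last step ($\sum_c\sqrt{T_c^{\alpha+1}}\leq\sqrt{C(\sum_c T_c)^{\alpha+1}}$). You instead apply the matching argument once over the \emph{joint} action space with $\varphi_t(c,x)=\mu_{t-1}^c(x)+\beta_t^{1/2}\sigma_{t-1}^c(x)$, reduce to $\sum_t\sigma_{t-1}^{c_t}(x_t)$, and invoke the information-gain machinery directly. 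Your route buys something the paper's does not: since $f_{c_t}(x_{c_t}^*)\geq f_{c_t}(x_t)$, your single bound controls the full $\text{BayesRegret}(T)$ and hence subsumes both Lemma 1 and Lemma 2 in one pass, making the MAB/BO decomposition unnecessary. The price is that you must redo the discretization and uniform-confidence work over the continuous domains $\mathcal{X}_c$ (which you correctly identify as the main obstacle and sketch via Assumption 1), whereas the paper imports all of that inside the cited per-arm regret bound and only does new, essentially finite-dimensional, work at the arm level. Both arguments land on the same $\mathcal{O}(\sqrt{CT^{\alpha+1}\log T})$ rate, and your handling of the random allocation $\{T_c\}$ (pathwise Cauchy--Schwarz using only $\sum_c T_c=T$) matches what the paper implicitly does.
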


\begin{proof}
The proof is provided in supplementary material.
\end{proof}
\begin{lem}[Upper bound on $R_{T}^{\text{BO}}$]
\label{lem:BO_regret_seq}Let $f_{c}(x)\sim\text{GP}_{c}(0,k_{c}),c=1,...,C$
and $\mathcal{D}_{t}=\{(c_{i},x_{i},y_{i})\}_{i\leq t}$ be the noisy
function observation set suggested by our method in the \textbf{sequential
setting} under the observation model $y_{i}=f_{c_{i}}(x_{i})+\epsilon_{i}$,
where $\epsilon_{i}\sim\mathcal{N}(0,\sigma_{\epsilon}^{2})$, then
under Assumptions 1 and 2, we have $R_{T}^{\text{BO}}\leq\mathcal{O}\left(\sqrt{CT^{\alpha+1}\log T}\right)$.
\end{lem}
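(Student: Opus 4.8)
The plan is to follow the confidence-bound route to Thompson-sampling regret bounds used in \cite{russo2014learning,srinivas2012information,kandasamy2018parallelised}, specialised to the arm that is actually played at each step. First I would introduce, for every arm $c$, the posterior upper confidence bound $U_t(c,x)=\mu_{t-1}^c(x)+\beta_t\sigma_{t-1}^c(x)$, where $\mu_{t-1}^c,\sigma_{t-1}^c$ are the posterior mean and standard deviation of $\mathrm{GP}_c$ given $\mathcal{D}_{t-1}^c$ and $\beta_t$ is a confidence multiplier. Using Assumption 1, each sample path is Lipschitz with high probability, so $\mathcal{X}_c$ can be replaced by a grid of size polynomial in $t$ at the cost of only an $\mathcal{O}(1/t^2)$ (hence summable) discretisation error; a union bound over the $C$ grids and over the samples then fixes $\beta_t=\mathcal{O}(\sqrt{\log(Ct)})$ so that, on an event of probability $1-\mathcal{O}(1/t^2)$, both $f_c$ and the Thompson draw $\tilde f_c$ lie within $\beta_t\sigma_{t-1}^c$ of $\mu_{t-1}^c$ simultaneously for all grid points and all $c$.

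Next I would bound the per-step BO regret by inserting $U_t(c_t,x_t)$, writing $f_{c_t}(x_{c_t}^*)-f_{c_t}(x_t)=[f_{c_t}(x_{c_t}^*)-U_t(c_t,x_t)]+[U_t(c_t,x_t)-f_{c_t}(x_t)]$. The second bracket is at most $2\beta_t\sigma_{t-1}^{c_t}(x_t)$ on the good event, i.e.\ the posterior standard deviation at the queried point. For the first bracket I would use the Thompson-sampling identity: conditioned on $\mathcal{D}_{t-1}$, the played pair $(c_t,x_t)=\arg\max_{c,x}\tilde f_c(x)$ has the same distribution as the optimal pair $(c^*,x^*)=\arg\max_{c,x} f_c(x)$, because for each arm $\tilde f_c$ and $f_c$ are i.i.d.\ draws from the same posterior. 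Since $U_t$ is $\mathcal{D}_{t-1}$-measurable, $\mathbb{E}[U_t(c_t,x_t)]=\mathbb{E}[U_t(c^*,x^*)]$, and combining this with the domination $f_{c_t}(x_{c_t}^*)\le\max_c f_c(x_c^*)=f_{c^*}(x^*)\le U_t(c^*,x^*)$ (valid on the good event) yields $\mathbb{E}[f_{c_t}(x_{c_t}^*)-U_t(c_t,x_t)]\le 0$ up to the $\mathcal{O}(1/t^2)$ failure terms. Hence $R_T^{\text{BO}}\le 2\,\mathbb{E}\sum_{t=1}^T\beta_t\sigma_{t-1}^{c_t}(x_t)+\mathcal{O}(1)$.

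It then remains to control $\sum_t\sigma_{t-1}^{c_t}(x_t)$. I would group the iterations by the arm played, setting $\mathcal{T}_c=\{t:c_t=c\}$ and $T_c=|\mathcal{T}_c|$ with $\sum_c T_c=T$; the posterior of arm $c$ only moves on $\mathcal{T}_c$, so by Cauchy--Schwarz and the standard information-gain inequality $\sum_{t\in\mathcal{T}_c}(\sigma_{t-1}^c(x_t))^2=\mathcal{O}(\gamma_{T_c})$ one gets $\sum_{t\in\mathcal{T}_c}\sigma_{t-1}^c(x_t)=\mathcal{O}(\sqrt{T_c\gamma_{T_c}})$. Invoking Assumption 2 ($\gamma_{T_c}=\mathcal{O}(T_c^\alpha)$) turns this into $\mathcal{O}(T_c^{(1+\alpha)/2})$, and since $\beta_t=\mathcal{O}(\sqrt{\log T})$, summing over arms gives $R_T^{\text{BO}}=\mathcal{O}\big(\sqrt{\log T}\sum_{c=1}^C T_c^{(1+\alpha)/2}\big)$. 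Because $(1+\alpha)/2<1$, concavity (Jensen) gives $\sum_c T_c^{(1+\alpha)/2}\le C(T/C)^{(1+\alpha)/2}=C^{(1-\alpha)/2}T^{(1+\alpha)/2}\le\sqrt{C}\,T^{(1+\alpha)/2}$, which delivers the claimed $\mathcal{O}(\sqrt{CT^{\alpha+1}\log T})$.

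The main obstacle is the first bracket, and specifically getting the Thompson-sampling identity right. A naive arm-by-arm argument---conditioning on $\{c_t=c\}$ and claiming $x_t\stackrel{d}{=}x_c^*$---is false, because conditioning on arm $c$ winning the comparison biases $\tilde f_c$ toward larger maxima and breaks the within-arm distributional match. The fix is to keep the analysis on the \emph{joint} $(c,x)$ action space, where $(c_t,x_t)\stackrel{d}{=}(c^*,x^*)$ holds unconditionally given $\mathcal{D}_{t-1}$, and to absorb the arm mismatch through the domination $f_{c_t}(x_{c_t}^*)\le f_{c^*}(x^*)$; this step uses the within-arm optimum, which is precisely why the decomposition in Eq.~(\ref{eq:Regret_decomposition}) splits cleanly into $R_T^{\text{MAB}}$ and $R_T^{\text{BO}}$. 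The remaining work---the Lipschitz/discretisation step from Assumption 1, the choice of $\beta_t$, and checking that the accumulated $\mathcal{O}(1/t^2)$ failure contributions stay $\mathcal{O}(1)$---is routine bookkeeping carried over from \cite{srinivas2012information,kandasamy2018parallelised}.
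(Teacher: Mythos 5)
Your proof is correct, but it takes a genuinely different route from the paper's. The paper's proof is a two-line reduction: it rewrites $R_T^{\text{BO}}$ as a sum over arms of per-arm subsequences, applies the off-the-shelf single-GP Thompson-sampling Bayesian regret bound $\mathbb{E}\sum_{t_c=1}^{T_c}[f_c(x_c^*)-f_c(x_{t_c})]\leq\mathcal{O}(\sqrt{T_c\log T_c\,\gamma_{T_c}})$ to each arm's subsequence, and then combines over arms with the same Cauchy--Schwarz/power-inequality step you use (your Jensen step is an equivalent way to extract the $\sqrt{C}$ factor). You instead re-derive the Thompson-sampling bound from first principles on the joint $(c,x)$ action space, via the UCB decomposition, the distributional identity $(c_t,x_t)\stackrel{d}{=}(c^*,x^*)$ given $\mathcal{D}_{t-1}$, and the information-gain argument grouped by arm. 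What your route buys is worth stating: the subtlety you flag --- that conditioning on arm $c$ winning the comparison biases $\tilde f_c$ toward draws with large maxima, so the within-arm identity $x_{t}\stackrel{d}{=}x_c^*$ need not hold on the subsequence $\{t:c_t=c\}$ --- is precisely the issue that the paper's per-arm reduction glosses over when it invokes the single-arm bound on that conditioned subsequence; your joint-space argument avoids it entirely, and as a by-product bounds the full $\text{BayesRegret}(T)$ (hence also $R_T^{\text{MAB}}$, since that term is nonnegative) in one pass. What the paper's route buys is brevity and modularity, at the cost of leaving that conditioning step implicit. Two minor bookkeeping points on your side: the union bound over $C$ grids puts $\sqrt{\log(Ct)}$ rather than $\sqrt{\log t}$ into $\beta_t$ (harmless when $C$ is polynomial in $T$, and avoidable altogether since Bayesian regret is an expectation and the discretisation overshoot can be bounded in expectation without a high-probability event); and since the $T_c$ are random, one should note that your bound $\sum_c T_c^{(1+\alpha)/2}\leq\sqrt{C}\,T^{(1+\alpha)/2}$ holds pathwise before taking the expectation, which it does.
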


\begin{proof}
The proof is provided in supplementary material.
\end{proof}
Finally, the \emph{overall Bayesian regret} for our \textbf{Bandit-BO}
is stated in Theorem \ref{thm:Overall-Bayesian-regret-Seq}. 
\begin{thm}[$\text{BayesRegret}(T)$ for Sequential Setting]
\label{thm:Overall-Bayesian-regret-Seq}Under Assumptions 1 and 2,
the Bayesian regret for our method after $T$ iterations in the \textbf{sequential
setting} is bounded as 
\begin{equation}
\text{BayesRegret}\left(T\right)\leq\mathcal{O}\left(\sqrt{CT^{\alpha+1}\text{log}T}\right)\label{eq:Overall-Bayesian-regret-Seq}
\end{equation}
\end{thm}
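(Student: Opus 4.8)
The plan is to derive the theorem directly from the additive regret decomposition in Eq.~(\ref{eq:Regret_decomposition}) together with the two lemmas already established. Recall that by inserting and deleting the term $f_{c_t}(x_{c_t}^*)$ inside the per-step regret, linearity of expectation splits the cumulative Bayesian regret \emph{exactly} as $\text{BayesRegret}(T) = R_T^{\text{MAB}} + R_T^{\text{BO}}$, with no approximation or loss. This identity is the backbone of the argument: it cleanly separates the cost of committing to a sub-optimal arm (the MAB part) from the cost of proposing a sub-optimal continuous point within the chosen arm (the BO part).

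First I would invoke Lemma~\ref{lem:MAB_regret_Seq} to bound $R_T^{\text{MAB}} \leq \mathcal{O}(\sqrt{CT^{\alpha+1}\log T})$ and Lemma~\ref{lem:BO_regret_seq} to bound $R_T^{\text{BO}} \leq \mathcal{O}(\sqrt{CT^{\alpha+1}\log T})$. Since Assumptions 1 and 2 are precisely the hypotheses under which both lemmas hold, they transfer verbatim, and no extra conditions are needed. The final step is to add the two bounds: because both summands have identical asymptotic order $\sqrt{CT^{\alpha+1}\log T}$, their sum remains $\mathcal{O}(\sqrt{CT^{\alpha+1}\log T})$, with the factor of two absorbed into the big-$\mathcal{O}$ constant. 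This yields Eq.~(\ref{eq:Overall-Bayesian-regret-Seq}).

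Given the machinery assembled above, the theorem itself is immediate; the genuine difficulty has already been discharged inside the two lemmas rather than in the theorem. If I were proving the result end-to-end, the main obstacle would lie in Lemmas~\ref{lem:MAB_regret_Seq} and~\ref{lem:BO_regret_seq}. There the challenge is twofold: adapting the Thompson-sampling Bayesian-regret technique of \cite{russo2014learning} — which bounds regret by a sum of posterior-standard-deviation terms via the information ratio — to the joint categorical--continuous setting, and then controlling how the $T$ total pulls are distributed among the $C$ arms. The maximum-information-gain bound $\gamma_{T_c}\sim\mathcal{O}(T_c^{\alpha})$ from Assumption~2 governs each arm's contribution, and aggregating the per-arm bounds under the worst-case allocation (via a concavity or Cauchy--Schwarz argument over the pull counts $\{T_c\}$ with $\sum_c T_c = T$) is what produces the $\sqrt{C}$ factor and the sub-linear exponent $\tfrac{\alpha+1}{2}$. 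Verifying that this aggregation stays tight at the claimed order is the delicate part; once it is in place, assembling the theorem is a one-line addition.
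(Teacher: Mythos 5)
Your proposal is correct and follows exactly the paper's own argument: combine the decomposition $\text{BayesRegret}(T)=R_{T}^{\text{MAB}}+R_{T}^{\text{BO}}$ from Eq.~(\ref{eq:Regret_decomposition}) with the bounds of Lemma~\ref{lem:MAB_regret_Seq} and Lemma~\ref{lem:BO_regret_seq} and absorb the factor of two into the $\mathcal{O}(\cdot)$. Your additional remarks correctly locate the substantive work inside the two lemmas rather than in the theorem itself.
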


\begin{proof}
The proof follows by combining the results of Lemma \ref{lem:MAB_regret_Seq}
and \ref{lem:BO_regret_seq} with Eq. (\ref{eq:Regret_decomposition}).
We note that the regret grows only sub-linearly in both $T$ and $C$.
\end{proof}

\paragraph*{Efficiency of using MAB:}

The Bayesian regret of our proposed \textbf{Bandit-BO} algorithm is
only sub-linear in $T$ as seen in Eq. (\ref{eq:Overall-Bayesian-regret-Seq}).
We compare it with \emph{two extreme settings}: the \emph{first extreme}
where the optimal arm (or category) is known, and the \emph{second
extreme} where each arm gets equal allocation e.g. visiting each arm
in a round-robin fashion or sampling an arm uniformly randomly. For
the first extreme, an oracle who knows the optimal arm $c^{*}$ will
allocate all $T$ iterations to the arm $c^{*}$ and therefore will
have $R_{T}^{\text{MAB}}=0$ and $R_{T}^{\text{BO}}\leq\mathcal{O}(\sqrt{T^{\alpha+1}\text{log}T})$.
On the other hand, the second extreme being a na\"{i}ve algorithm
allocating equal budget to each arm will incur $R_{T}^{\text{MAB}}=\sum_{c\neq c^{*}}\Delta_{c}\frac{T}{C}=\mathcal{O}(T$)
(where $\Delta_{c}=f_{c^{*}}^{*}-f_{c}^{*}$ denoting the sub-optimality
of each arm) and $R_{T}^{\text{BO}}\leq\mathcal{O}(\sqrt{T^{\alpha+1}\text{log}T})$.
This results in a total regret that grows linearly in $T$. Thus,
our method with regret upper bound $\left(\sqrt{CT^{\alpha+1}\text{log}T}\right)$
is a significantly better algorithm than equal-budget allocation algorithms
and comparable to the Oracle with just an extra sub-linear factor
$\sqrt{C}$.

\subsubsection{Batch Setting\label{subsec:ConvergenceBatch-Bandit-BO}}

The main difference between the analysis of a sequential and batch
algorithm arises from the way function values are observed. Unlike
a sequential setting where we observe the function value immediately
after recommending a sample, a batch setting with batch size $B$
gets to observe the functions values only after recommending $B$
samples. Due to the late feedback on the function knowledge, $\sigma_{t_{c}}^{c}(x)$,
the predictive variance of $f_{c}(x)$ in the batch setting, is higher
than that in the sequential setting at any iteration $t_{c}$ prior
to which there was a recommendation without function value observation.
Desautels et al. \cite{desautels2014parallelizing} showed that this
gap in the function knowledge (or increased uncertainty) due to the
batch setting can be bounded for any $x\in\mathcal{X}$ as
\begin{equation}
\left(\sigma_{t_{c}}^{c}(x)\right)_{\text{batch}}\leq\left(\sigma_{t_{c}}^{c}(x)\right)_{\text{seq}}\psi_{B},\label{eq:Desautel_inequality}
\end{equation}
where $\psi_{B}$ is a sub-linear term in $B$ related to the maximum
information gain potentially brought by any $B$ samples. Eq. (\ref{eq:Desautel_inequality})
gives us $\sum_{t_{c}=1}^{T_{c}}\left(\sigma_{t_{c}}^{c}(x)\right)_{\text{batch}}^{2}\leq\psi_{B}^{2}\sum_{t_{c}=1}^{T_{c}}\left(\sigma_{t_{c}}^{c}(x)\right)_{\text{seq}}^{2}$.
Since $\sum_{t_{c}=1}^{T_{c}}\left(\sigma_{t_{c}}^{c}(x)\right)_{\text{seq}}^{2}\leq\gamma_{T_{C}}$
\cite{srinivas2012information}, we have $\sum_{t_{c}=1}^{T_{c}}\left(\sigma_{c}^{t_{c}}(x)\right)_{\text{batch}}^{2}\leq\psi_{B}^{2}\gamma_{T_{C}}$.
For deriving regret bounds for the batch setting, to bound $\sum_{t_{c}=1}^{T_{c}}\left(\sigma_{c}^{t_{c}}(x)\right)_{\text{batch}}^{2}$,
we can use $\psi_{B}^{2}\gamma_{T_{C}}$ instead of $(\gamma_{T_{c}})_{\textrm{batch}}$.
Since $\psi_{B}$ is independent of $T_{c}$, we can extend Theorem
\ref{thm:Overall-Bayesian-regret-Seq} of sequential setting to the
batch setting as stated in the following Theorem.
\begin{thm}[$\text{BayesRegret}(T)$ for Batch Setting]
Under Assumptions 1 and 2, the Bayesian regret for our method after
$T$ iterations in the \textbf{batch setting} is given as
\[
\text{BayesRegret}\left(T\right)\leq\mathcal{O}\left(\psi_{B}\sqrt{CT^{\alpha+1}\text{log}T}\right)
\]
\end{thm}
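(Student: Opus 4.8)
The plan is to mirror the sequential argument behind Theorem~\ref{thm:Overall-Bayesian-regret-Seq} essentially verbatim at the level of the regret decomposition, and to inject the batch inflation of the posterior variance only at the single point where the maximum information gain enters. First I would reuse the decomposition $\text{BayesRegret}(T)=R_T^{\text{MAB}}+R_T^{\text{BO}}$ from Eq.~(\ref{eq:Regret_decomposition}), which is purely definitional and hence identical in the batch setting. Then I would revisit the two ingredient bounds, Lemma~\ref{lem:MAB_regret_Seq} and Lemma~\ref{lem:BO_regret_seq}, and track the one quantity in which the batch setting differs from the sequential one: the posterior standard deviation $\sigma_{t_c}^c(x)$ used at each recommendation. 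In batch mode this deviation is computed from a stale observation set, since the within-round recommendations have not yet been scored, so it is precisely the quantity controlled by the Desautels-type inequality in Eq.~(\ref{eq:Desautel_inequality}).

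The second step is to locate exactly where $\gamma_{T_c}$ enters the sequential bounds. In both lemmas the cumulative regret attributable to an arm $c$ is controlled by the sum of its posterior standard deviations, and a Cauchy--Schwarz step converts this into $\sqrt{T_c\sum_{t_c}(\sigma_{t_c}^c)^2}$, after which the information-gain bound $\sum_{t_c}(\sigma_{t_c}^c)^2_{\text{seq}}\le\gamma_{T_c}$ yields the factor $\sqrt{T_c\gamma_{T_c}}$ and, using $\gamma_{T_c}\sim\mathcal{O}(T_c^{\alpha})$, the term $\sqrt{T_c^{\alpha+1}}$. I would replace this step by the batch analogue already recorded in the text, $\sum_{t_c}(\sigma_{t_c}^c)^2_{\text{batch}}\le\psi_B^2\gamma_{T_c}$, so that the same Cauchy--Schwarz argument now produces $\psi_B\sqrt{T_c\gamma_{T_c}}$. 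Because $\psi_B$ depends only on the batch size $B$ and is constant in $T_c$, it pulls out of each per-arm bound as a multiplicative factor and survives the subsequent summation over the $C$ arms and the supremum over the continuous domain $\mathcal{X}_c$ unchanged.

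The final step is bookkeeping: summing the inflated per-arm contributions over $c=1,\dots,C$ with $\sum_c T_c=T$ reproduces exactly the sequential bounds of Lemma~\ref{lem:MAB_regret_Seq} and Lemma~\ref{lem:BO_regret_seq}, each multiplied by the single scalar $\psi_B$, giving $R_T^{\text{MAB}}\le\mathcal{O}(\psi_B\sqrt{CT^{\alpha+1}\log T})$ and likewise $R_T^{\text{BO}}\le\mathcal{O}(\psi_B\sqrt{CT^{\alpha+1}\log T})$; adding the two halves of the decomposition yields the claimed $\mathcal{O}(\psi_B\sqrt{CT^{\alpha+1}\log T})$.

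I expect the main obstacle to be the \emph{isolation} of $\psi_B$: I must verify that the posterior-variance sum is the \emph{only} batch-sensitive quantity in the sequential proofs, so that $\psi_B$ factors out as a clean multiplicative constant rather than also entangling the confidence-width parameters $\beta_t$ or the Thompson-sampling posterior-consistency step, which equates in expectation the upper confidence value of the sampled arm and of the true optimum. In particular, the delayed feedback must not corrupt the argument that the sampled function and the true function share the same posterior; this is what licenses reusing the sequential confidence widths scaled by $\psi_B$ rather than re-deriving them from scratch. Once it is confirmed that the delay affects the analysis solely through the variance inflation of Eq.~(\ref{eq:Desautel_inequality}), the remainder is the routine propagation of a constant through a square root.
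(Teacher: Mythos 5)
Your proposal is correct and follows essentially the same route as the paper: the same decomposition into $R_T^{\text{MAB}}+R_T^{\text{BO}}$, with the batch effect entering solely through the variance inflation of Eq.~(6), so that $\psi_B$ multiplies the per-arm bounds (equivalently, the batch Bayesian regret and simple regret bounds of Kandasamy et al.\ replace the sequential ones in the confidence widths) and is then propagated through the unchanged summation over arms. The concern you flag about the posterior-matching step is handled in the paper exactly as you anticipate, by noting that the delay affects only the predictive variance and hence only rescales the confidence widths by $\psi_B$.
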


\begin{proof}
The proof is provided in supplementary material.
\end{proof}

\section{Discussion\label{sec:Discussion}}

Our algorithm is capable of handling cases where the search space
for continuous variables for each category is either identical (\emph{category-independent}
continuous search spaces) or different (\emph{category-specific} continuous
search spaces). We used an independent GP to model the continuous
function $f_{c}(x)$ for each category $c$. When the search spaces
for continuous variables are identical for all categories, it may
be useful to incorporate any correlations across categories. In our
algorithm, it is possible to incorporate such correlations through
a multi-task Gaussian process (MTGP) \cite{Bonilla_etal_08multi}.
MTGP allows to use a covariance function of the form $k(c,x,c',x')$,
which can be factorized as $k(c,c')\times k(x,x')$, and through $k(c,c')$
we can incorporate the correlation across the categories. A possible
example of $k(c,c')$ is Hamming kernel that was used by \cite{wang2016bayesian}.
However, in this paper, since our focus is to provide a general algorithm
that is applicable to both category-specific and category-independent
continuous search spaces, we have ignored the correlation aspect.
This is because correlation across any two categories having different
continuous search spaces does not make sense.

\section{Experiments\label{sec:Experiments}}

We conduct experiments to show the performance of our proposed \textbf{Bandit-BO}
for both synthetic and real-world applications in sequential and batch
settings.

We compare our method with four state-of-the-art baselines that use
different ways to deal with categorical variables: \textbf{One-hot-Encoding}
\cite{golovin2017google}, \textbf{Merchan-Lobato} \cite{garrido2018dealing},
\textbf{SMAC} \cite{hutter2011sequential}, and \textbf{TPE} \cite{bergstra2011algorithms}.
For One-hot-Encoding and Merchan-Lobato methods, batch recommendation
is made using Thompson sampling. For SMAC, we form the batch using
``hallucinated'' observations similar to \cite{desautels2014parallelizing}.
For TPE, we form the batch using the likelihood based sampling as
described in \cite{bergstra2011algorithms}. In our experiments, we
randomly initialize two points for GP fitting for each category, resulting
in $2C$ initial points in total. The initialization points are kept
identical across all methods for a fair comparison. We repeat each
method 10 times and report the average result along with the standard
error.

\subsection{Synthetic Applications}

The first experiment illustrates how our algorithm performs with different
batch sizes and numbers of categories.

\subsubsection{Synthetic function}

Our synthetic function is created by \emph{modifying} Ackley-5d function
in 5 continuous variables by an extra categorical variable. We shift
the function for each category by a value $c$ as $f([c,x])=-20\exp(-0.2\sqrt{\frac{1}{5}{\displaystyle \sum_{i=1}^{5}z_{i}^{2}}})-\exp(\frac{1}{5}{\displaystyle \sum_{i=1}^{5}\cos(2\pi z_{i}))+20+\exp(1)}+c$,
where $z_{i}=x_{i}+c$. In the supplementary material, we provide
additional experiments for \emph{two more synthetic functions}.

\subsubsection{Study of varying batch sizes\label{subsec:Study-of-batchsize}}

Figure \ref{fig:Optimization-results-synthetic}(a) shows the optimization
results for different batch sizes while fixing the number of categories
to $6$. Our method \textbf{Bandit-BO} is significantly better compared
to the other methods. It shows consistent improvements over $T=120$
iterations. One-hot-Encoding, Merchan-Lobato, and SMAC methods do
not perform well. TPE is the second-best method; however its best
found function value is significantly lower than that of \textbf{Bandit-BO}.

\subsubsection{Study of varying numbers of categories\label{subsec:Study-of-category}}

In Figure \ref{fig:Optimization-results-synthetic}(b), we show the
optimization results for different \textit{large} numbers of categories
while fixing the batch size to 5. We can see that \textbf{Bandit-BO}
clearly outperforms the other methods even with a very large number
of categories (e.g. $C=100$). TPE is still a second-best method.
The performance of Merchan-Lobato becomes worse as $C$ is increased.
Interestingly, One-hot-Encoding shows an improvement as $C$ is increased
up to 100, where it is better than SMAC.

\begin{figure}
\begin{centering}
\subfloat[]{\begin{centering}
\begin{minipage}[t]{0.49\columnwidth}%
\begin{center}
\includegraphics[scale=0.22]{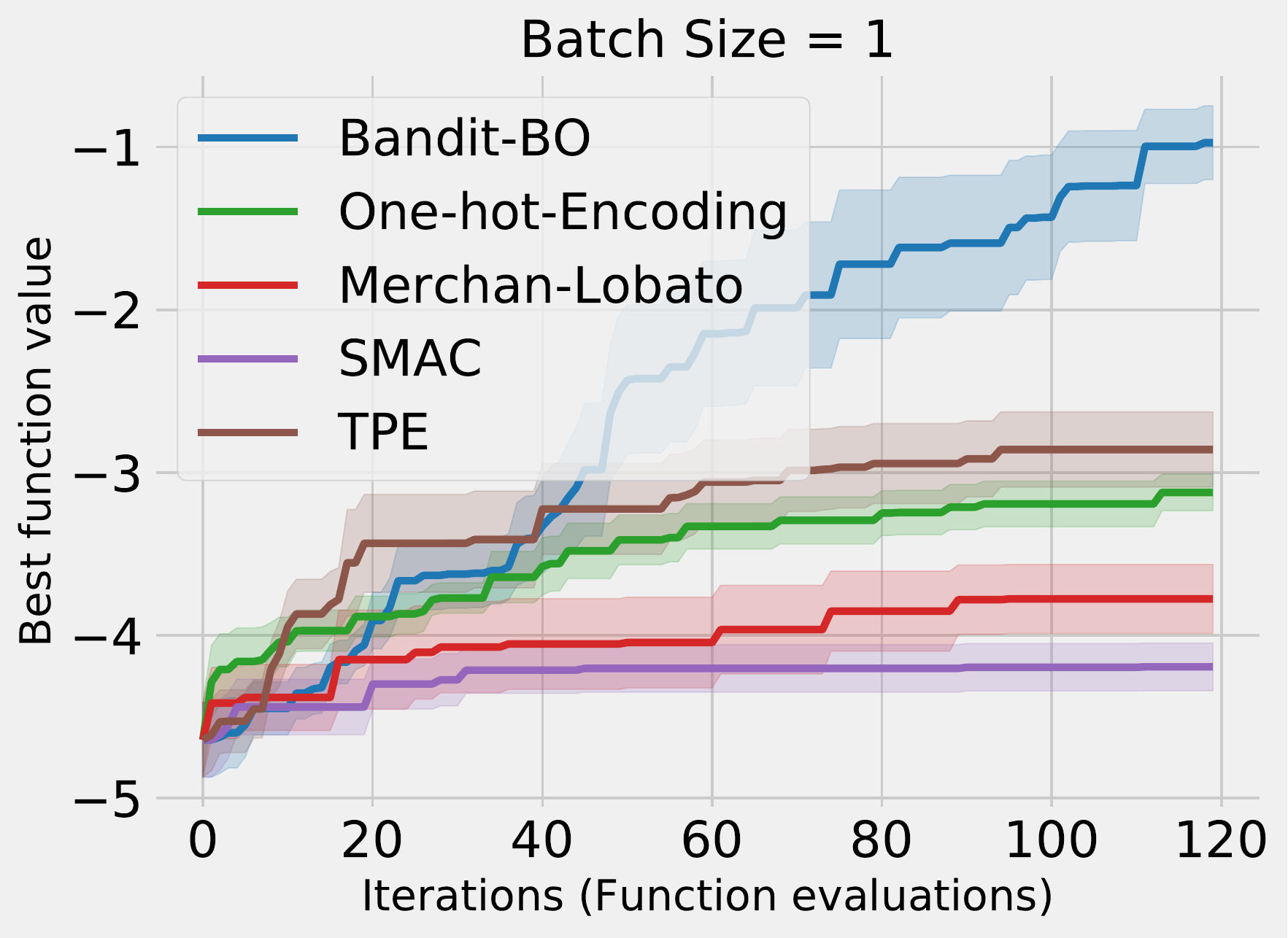}
\par\end{center}
\begin{center}
\includegraphics[scale=0.22]{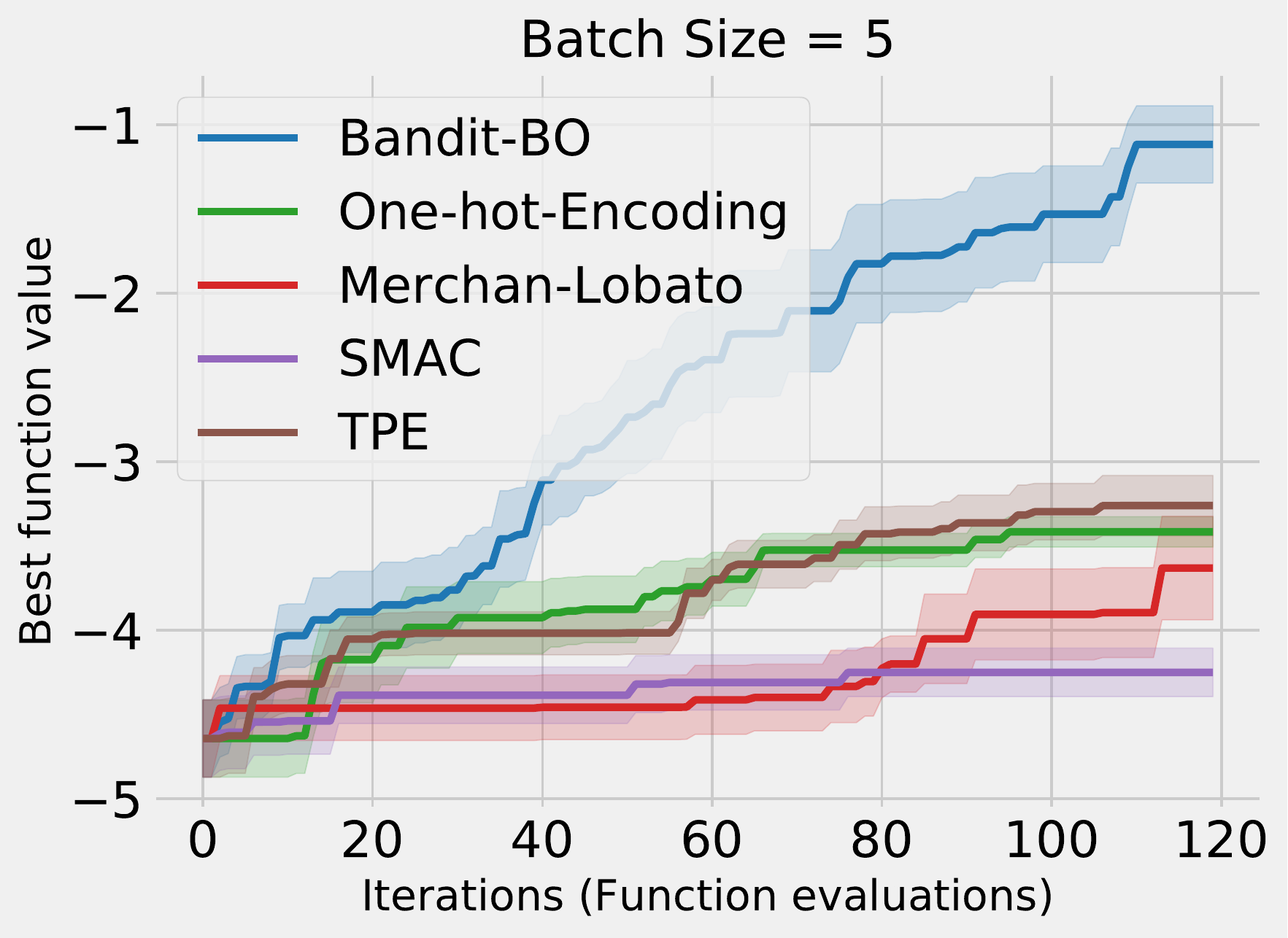}
\par\end{center}
\begin{center}
\includegraphics[scale=0.22]{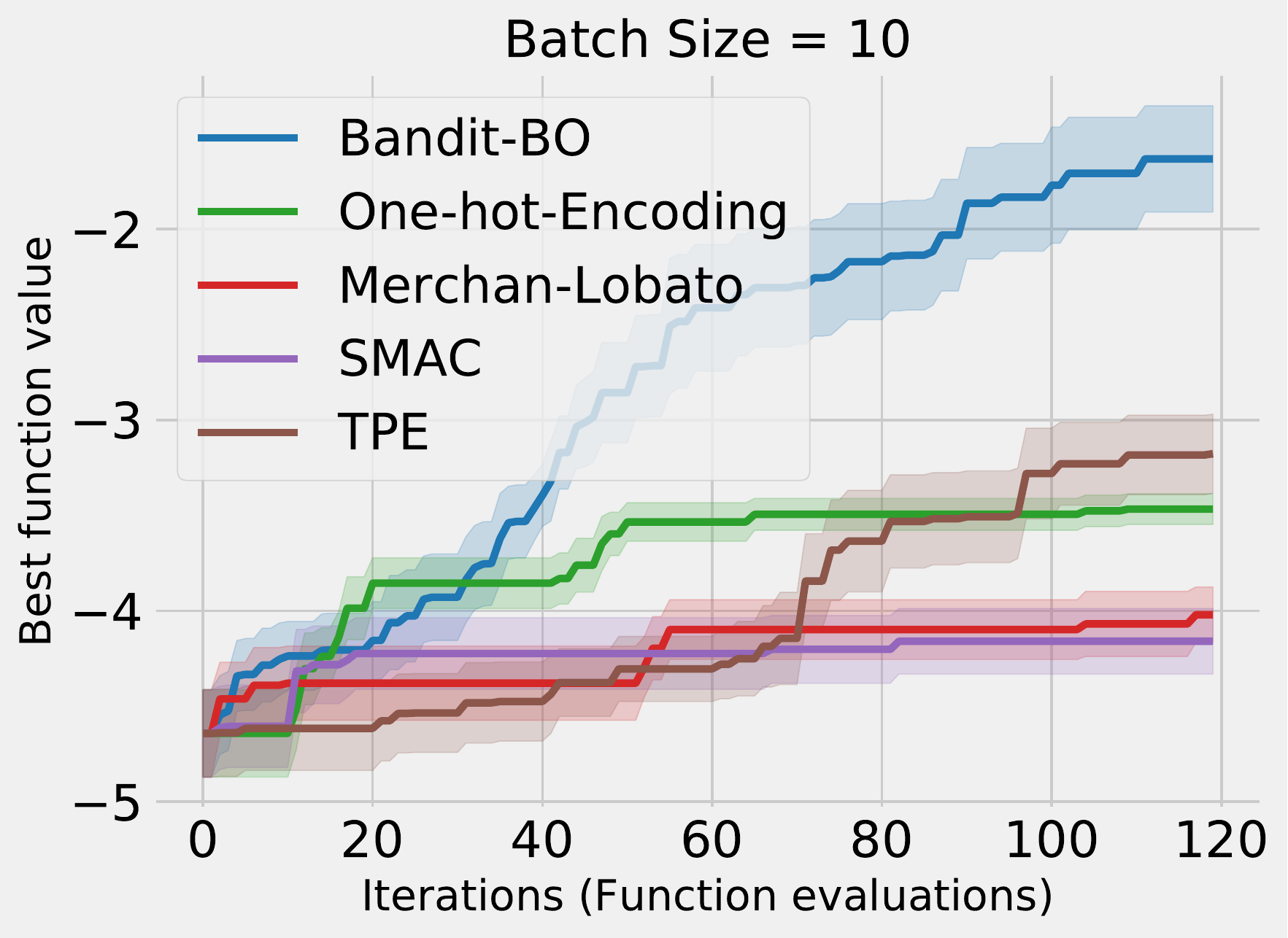}
\par\end{center}%
\end{minipage}
\par\end{centering}
}\subfloat[]{\begin{centering}
\begin{minipage}[t]{0.49\columnwidth}%
\begin{center}
\includegraphics[scale=0.22]{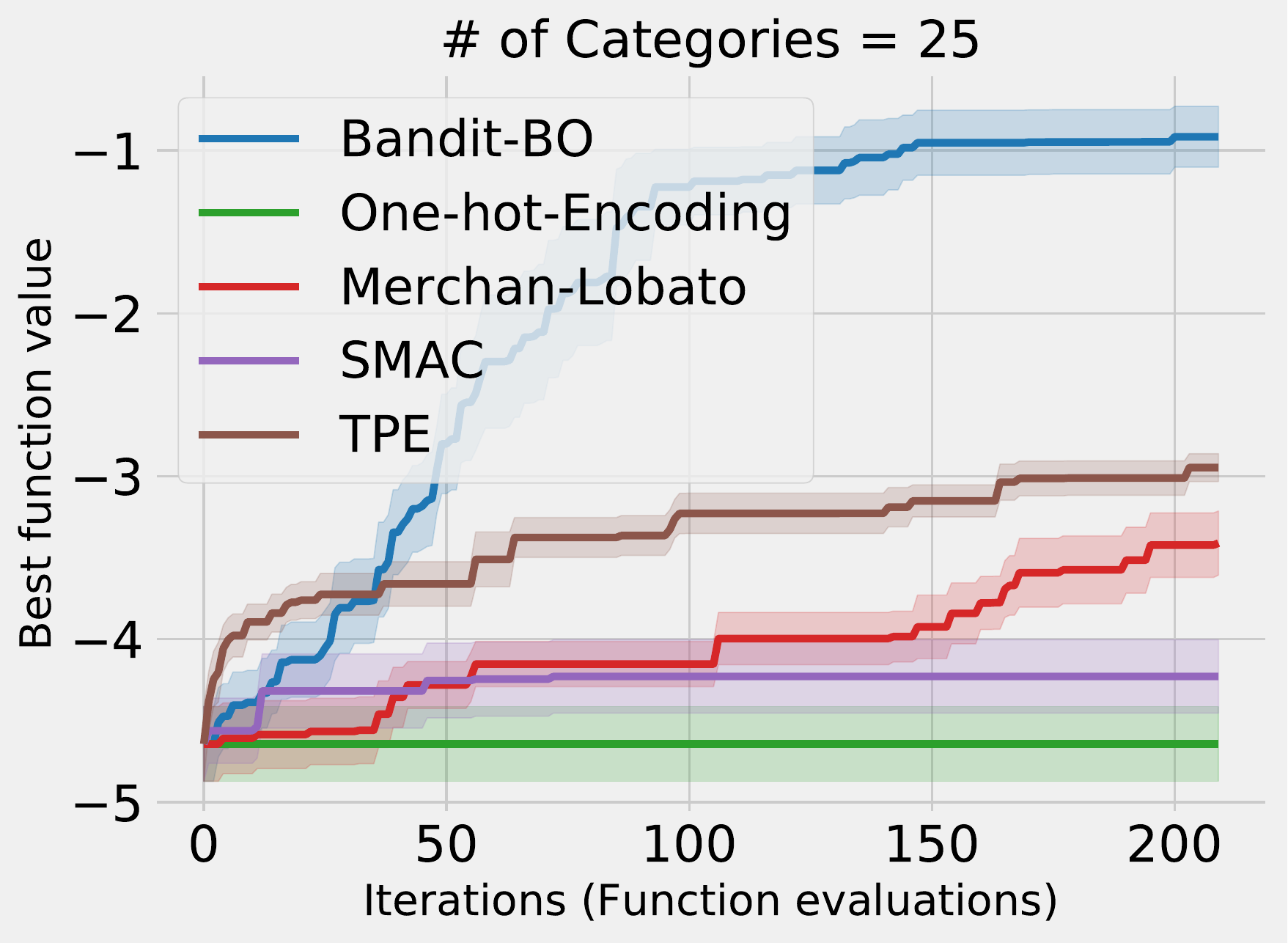}
\par\end{center}
\begin{center}
\includegraphics[scale=0.22]{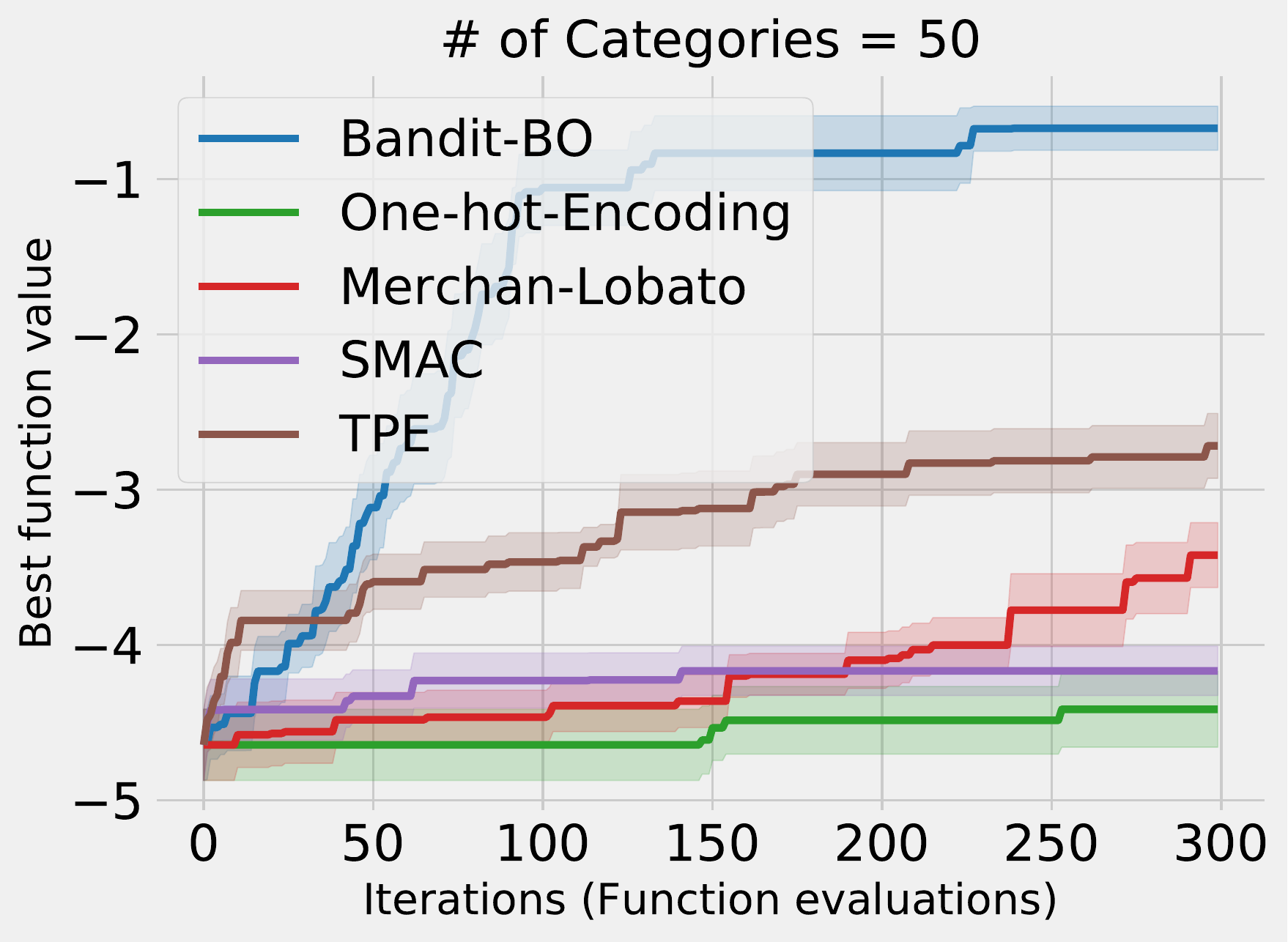}
\par\end{center}
\begin{center}
\includegraphics[scale=0.22]{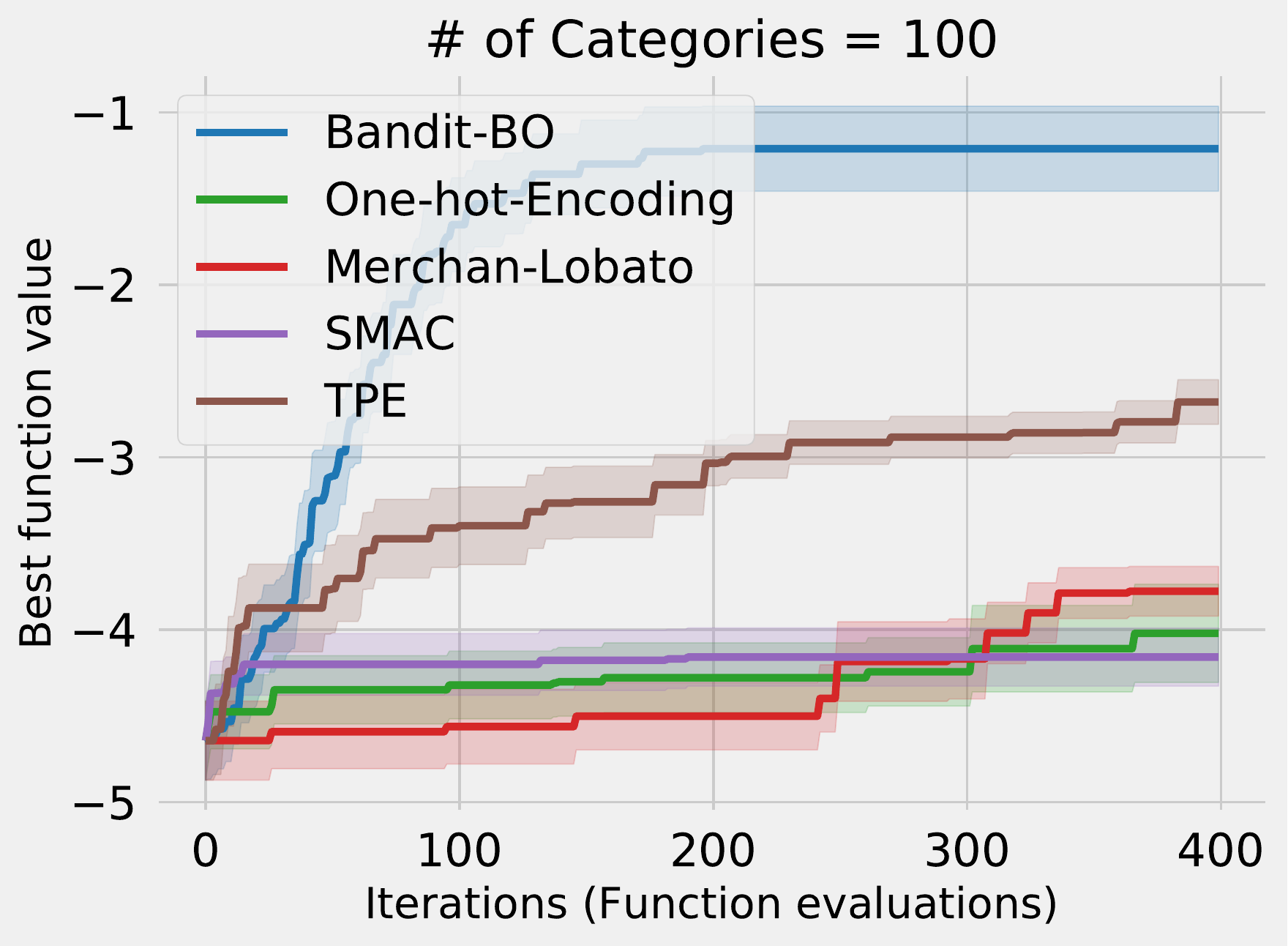}
\par\end{center}%
\end{minipage}
\par\end{centering}
}
\par\end{centering}
\caption{\label{fig:Optimization-results-synthetic}Optimization results for
the \emph{modified} Ackley-5d function (a) for \textit{different batch
sizes}: $B=1$ (sequential), $B=5$, and $B=10$ (the number of categories
is fixed to 6) and (b) for \textit{different numbers of categories}:
$C=25$, $C=50$, and $C=100$ (the batch size is fixed to 5).}

\end{figure}

\subsection{Real-world Applications}

The second experiment shows the efficiency of our method in two real-world
machine learning (ML) applications.

\subsubsection{Hyper-parameter tuning for a feed-forward neural network on a regression
task\label{subsec:Hyper-parameter-tuning}}

Our goal is to find the optimal set of hyper-parameters for a feed-forward
neural network on the \textit{protein structure} dataset\footnote{\url{https://archive.ics.uci.edu/ml/datasets/Physicochemical+Properties+of+Protein+Tertiary+Structure}},
which has 27,438 training points, 9,146 testing points, and nine features.
We define the black-box function as a mapping between the model hyper-parameters
and the \textit{mean squared error} (MSE) on a held-out testing set.
We build the network with two hidden layers and train it using Adam
\cite{kingma2014adam} for 100 epochs. We optimize eight hyper-parameters
as shown in Table \ref{tab:Hyper-parameters-NN}. We report the average
MSE with standard errors for each method as shown in Figure \ref{fig:Results-of-parameter-tune-NN}.
We note that to create arms from four categorical variables, we use
the cross product of their values, resulting in 36 arms (choices)
in total.

\begin{table}
\caption{\label{tab:Hyper-parameters-NN}Hyper-parameters for the neural network.}

\begin{centering}
\begin{tabular}{|l|l|c|}
\hline 
\rowcolor{header_color}\textbf{Type} & \textbf{Hyper-parameter} & \textbf{Values}\tabularnewline
\hline 
\hline 
\multirow{4}{*}{Categorical} & Activation/Layer 1 & $\{\text{tanh, relu}\}$\tabularnewline
\cline{2-3} 
 & Activation/Layer 2 & $\{\text{tanh, relu}\}$\tabularnewline
\cline{2-3} 
 & Layer 1 Size & $2^{\{4,6,9\}}$\tabularnewline
\cline{2-3} 
 & Layer 2 Size & $2^{\{4,6,9\}}$\tabularnewline
\hline 
\multirow{4}{*}{Continuous} & Initial Learning Rate & $10^{[-4,-1]}$\tabularnewline
\cline{2-3} 
 & Batch Size & $2^{[3,6]}$\tabularnewline
\cline{2-3} 
 & Dropout/Layer 1 & $[0.0,0.6]$\tabularnewline
\cline{2-3} 
 & Dropout/Layer 2 & $[0.0,0.6]$\tabularnewline
\hline 
\end{tabular}
\par\end{centering}
\end{table}

From the results in Figure \ref{fig:Results-of-parameter-tune-NN},
we can see that our method \textbf{Bandit-BO} performs the best in
both sequential and batch settings. Similar to the results on the
synthetic function, TPE is the second-best method. When the batch
size is increased up to 5, TPE is slightly comparable to our method.
Merchan-Lobato performs well with all batch sizes, where it is much
better than One-hot-Encoding and SMAC. One-hot-Encoding is slightly
comparable to SMAC as the batch size is increased up to 5 (Figure
\ref{fig:Results-of-parameter-tune-NN}(b)).

\begin{figure*}[t]
\begin{centering}
\subfloat[]{\includegraphics[scale=0.25]{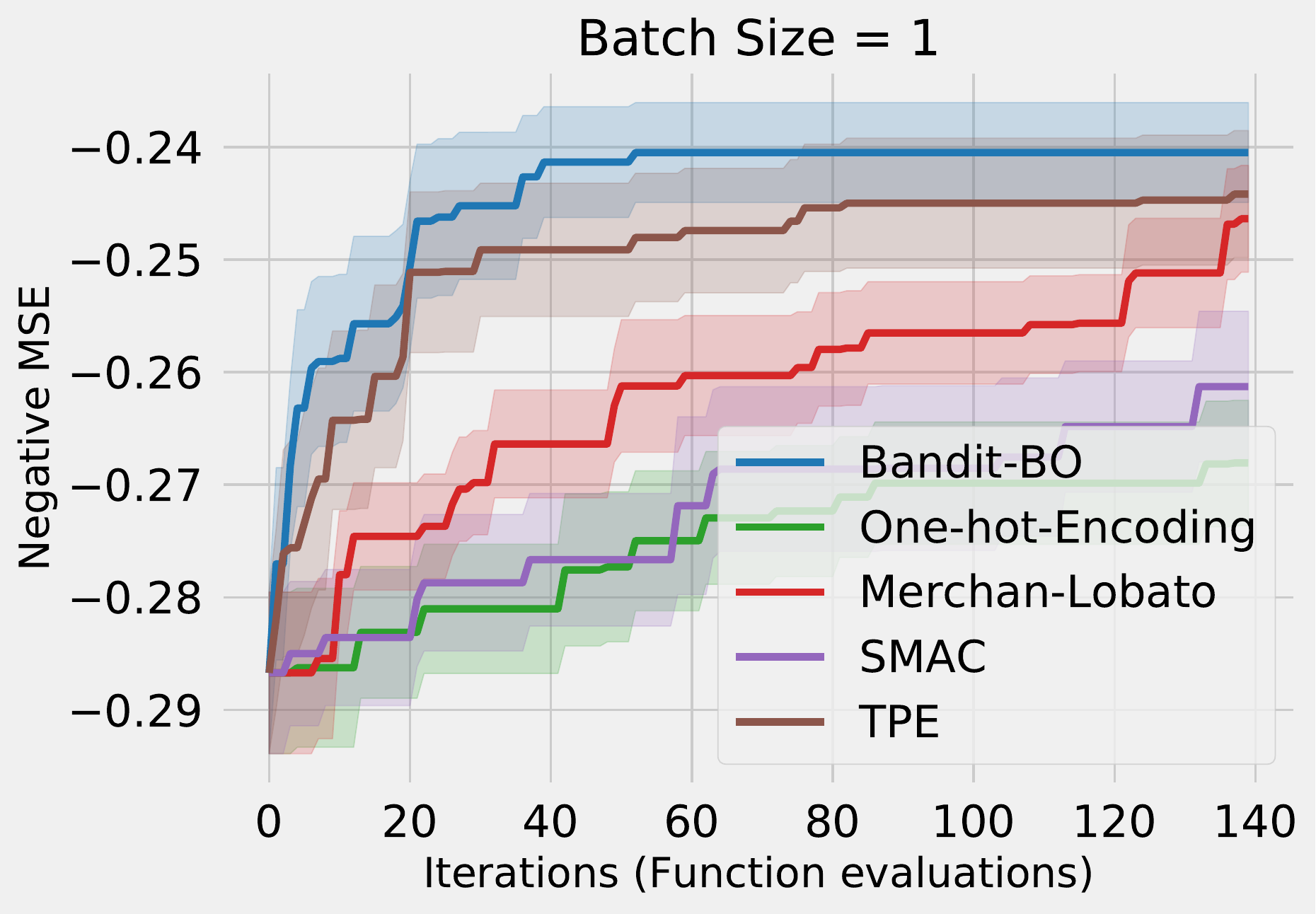}}\hspace{0.4cm}\subfloat[]{\includegraphics[scale=0.25]{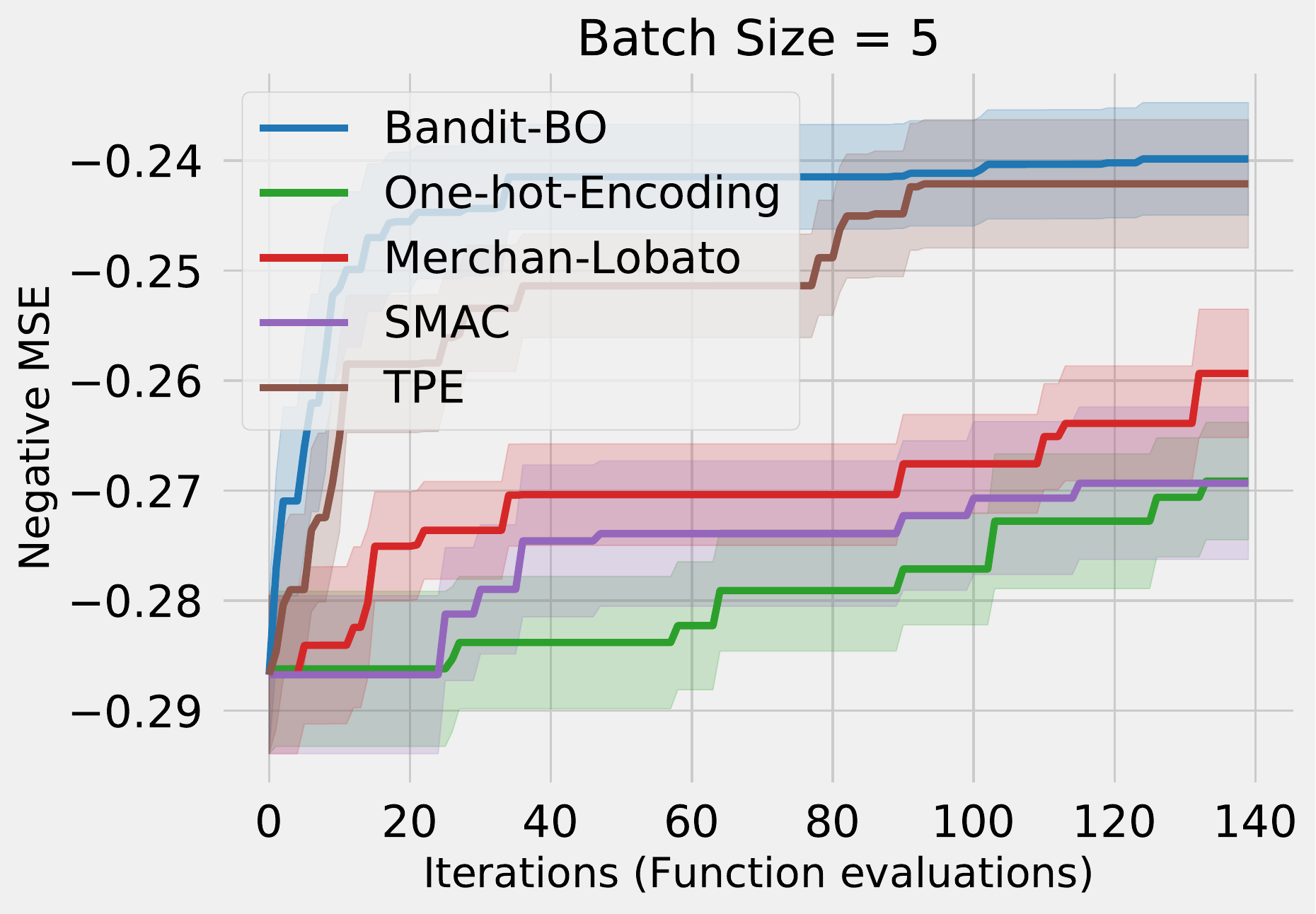}}\hspace{0.4cm}\subfloat[]{\includegraphics[scale=0.25]{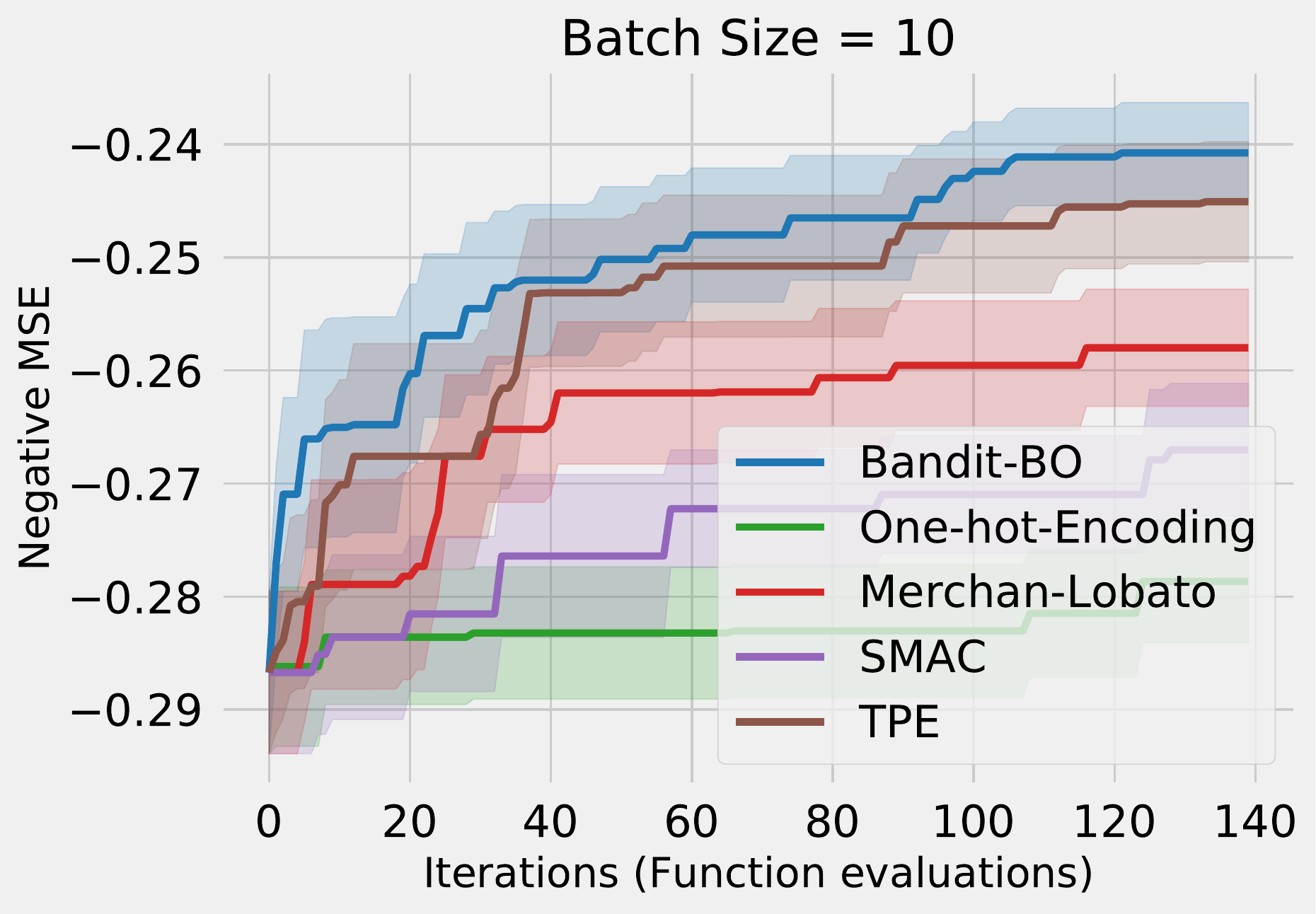}}
\par\end{centering}
\caption{\label{fig:Results-of-parameter-tune-NN}Results of the hyper-parameter
tuning for the neural network -- best value (\textit{negative} MSE)
vs. iteration for batch sizes: (a) $B=1$ (sequential), (b) $B=5$,
and (c) $B=10$.}
\end{figure*}

\begin{table*}
\caption{\label{tab:Individual-accuracy}Characteristics ($|D|$: the number
of samples, $|F|$: the number of features, and $|L|$: the number
of labels) of the first 16 benchmark datasets along with \textit{classification
accuracy} (standard error) of our method \textbf{Bandit-BO} and other
methods. Bold font marks the best performance in a row. The results
for full 30 datasets are reported in the supplementary material.}

\centering{}%
\begin{tabular}{|l|l|r|r|r|>{\raggedleft}p{0.1\textwidth}|>{\raggedleft}p{0.1\textwidth}|>{\raggedleft}p{0.1\textwidth}|>{\raggedleft}p{0.1\textwidth}|}
\hline 
\rowcolor{header_color}\textbf{Dataset} & \textbf{Format} & $|D|$ & $|F|$ & $|L|$ & \textbf{Bandit-BO} & \textbf{Hyperopt-sklearn} & \textbf{Auto-sklearn} & \textbf{TPOT}\tabularnewline
\hline 
\hline 
\textit{wine} & tabular & 178 & 13 & 3 & \textbf{98.33 (0.00)} & 97.78 (0.01) & 97.50 (0.01) & 96.67 (0.01)\tabularnewline
\hline 
\rowcolor{even_color}\textit{breast\_cancer} & tabular & 569 & 30 & 2 & \textbf{97.02 (0.01)} & 95.44 (0.00) & 96.40 (0.00) & 96.84 (0.00)\tabularnewline
\hline 
\textit{analcatdata\_authorship} & text & 841 & 70 & 4 & \textbf{99.76 (0.00)} & 99.47 (0.00) & 99.41 (0.00) & 99.53 (0.00)\tabularnewline
\hline 
\rowcolor{even_color}\textit{diabetes} & tabular & 768 & 8 & 2 & \textbf{77.40 (0.01)} & 73.70 (0.02) & 76.95 (0.01) & 77.01 (0.01)\tabularnewline
\hline 
\textit{electricity} & tabular & 45,312 & 8 & 2 & \textbf{92.29 (0.00)} & 92.21 (0.00) & 90.89 (0.00) & 90.94 (0.00)\tabularnewline
\hline 
\rowcolor{even_color}\textit{wall\_robot\_navigation} & trajectory & 5,456 & 24 & 4 & \textbf{99.73 (0.00)} & \textbf{99.73 (0.00)} & 99.43 (0.00) & 99.46 (0.00)\tabularnewline
\hline 
\textit{vehicle} & tabular & 846 & 18 & 4 & \textbf{81.71 (0.01)} & 78.71 (0.01) & 80.24 (0.01) & 78.12 (0.01)\tabularnewline
\hline 
\rowcolor{even_color}\textit{cardiotocography} & tabular & 2,126 & 35 & 10 & \textbf{100.0 (0.00)} & \textbf{100.0 (0.00)} & 99.98 (0.00) & \textbf{100.0 (0.00)}\tabularnewline
\hline 
\textit{artificial\_characters} & text & 10,218 & 7 & 10 & 90.47 (0.01) & \textbf{90.94 (0.00)} & 82.49 (0.00) & 87.75 (0.01)\tabularnewline
\hline 
\rowcolor{even_color}\textit{monks1} & tabular & 556 & 6 & 2 & \textbf{100.0 (0.00)} & 99.82 (0.00) & 99.73 (0.00) & \textbf{100.0 (0.00)}\tabularnewline
\hline 
\textit{monks2} & tabular & 601 & 6 & 2 & 98.26 (0.01) & 97.69 (0.01) & 97.36 (0.01) & \textbf{99.92 (0.00)}\tabularnewline
\hline 
\rowcolor{even_color}\textit{steel\_plates\_fault} & tabular & 1,941 & 33 & 2 & \textbf{100.0 (0.00)} & \textbf{100.0 (0.00)} & \textbf{100.0 (0.00)} & \textbf{100.0 (0.00)}\tabularnewline
\hline 
\textit{phoneme} & tabular & 5,404 & 5 & 2 & \textbf{90.23 (0.00)} & 90.21 (0.00) & 89.25 (0.00) & 89.58 (0.00)\tabularnewline
\hline 
\rowcolor{even_color}\textit{waveform} & tabular & 5,000 & 40 & 3 & \textbf{86.45 (0.00)} & 86.42 (0.00) & 86.19 (0.00) & 86.28 (0.00)\tabularnewline
\hline 
\textit{balance\_scale} & tabular & 625 & 4 & 3 & \textbf{98.48 (0.01)} & 97.20 (0.01) & 89.04 (0.01) & 92.32 (0.01)\tabularnewline
\hline 
\rowcolor{even_color}\textit{digits} & image & 1,797 & 64 & 10 & 98.25 (0.00) & \textbf{98.67 (0.00)} & 98.08 (0.00) & 97.86 (0.00)\tabularnewline
\hline 
\end{tabular}
\end{table*}

\subsubsection{\emph{Automated Machine Learning}\textit{\emph{: Automatic selection
of the best ML model along with its optimal hyper-parameters\label{subsec:Automated-Machine-Learning}}}}

Given a dataset and several candidate ML models, e.g. decision tree,
random forest, logistic regression, support vector machine, etc, our
goal is to determine which model along with its optimized hyper-parameters
produces the highest \textit{accuracy} on the dataset. We formulate
this task as a black-box function $f([c,x_{c}])$ optimization, where
$c$ indexes a ``ML model'' and $x_{c}$ is a set of hyper-parameters
specified for that model (the detail of ML models and their hyper-parameters
are provided in the supplementary material). We emphasize that $x_{c}$
is different for different models e.g. $x_{c}$ can be the ``penalty
parameter'' when $c$ is a ``linear support vector machine'' while
$x_{c}$ can be the ``initial learning rate'' and the ``regularization
parameter'' when $c$ is a ``logistic regression''. Under such
complex search space, single GP-based BO methods such as One-hot-Encoding
and Merchan-Lobato cannot work. In contrast, as discussed earlier
our method straightforwardly works with this setting thanks to fitting
different GPs for different values $c$. We optimize the black-box
function on 30 benchmark datasets\footnote{Download from \url{https://www.openml.org}. Each dataset is randomly
split into 80\% for training and 20\% for testing.}, compared with three well-known state-of-the-art \textit{automated
machine learning} packages, namely \textbf{Hyperopt-sklearn}\footnote{\url{https://github.com/hyperopt/hyperopt-sklearn}}
(using TPE for optimization) \cite{Komer2019}, \textbf{Auto-sklearn}\footnote{\url{https://github.com/automl/auto-sklearn}}
(using SMAC for optimization) \cite{Feurer2019}, and Tree-Based Pipeline
Optimization Tool (\textbf{TPOT}\footnote{\url{https://github.com/EpistasisLab/tpot}})
\cite{Olson2019}. All methods are applied to the same training and
test sets and repeated 10 times.

Table \ref{tab:Individual-accuracy} shows the classification results
on 16 datasets (the results for 30 datasets are reported in the supplementary
material), where \textbf{Bandit-BO} clearly results in better classification
compared with other methods. More specifically, \textbf{Bandit-BO}
achieves up to 4\%, 9\%, and 6\% improvements over Hyperopt-sklearn,
Auto-sklearn, and TPOT respectively. On four large datasets (\textit{electricity},
\textit{wall\_robot\_navigation}, \textit{phoneme}, and \textit{waveform}),
our method is better than three baselines. The improvements are more
significant on five small datasets (\textit{wine}, \textit{breast\_cancer},
\textit{diabetes}, \textit{vehicle}, and \textit{balance\_scale}).
The classification performances of Hyperopt-sklearn, Auto-sklearn,
and TPOT are comparable. In the supplementary material, we report
the overall accuracy of each method across 30 datasets, where \textbf{Bandit-BO}
is the best method (92.25\% accuracy). The overall classification
results of Hyperopt-sklearn (91.48\% accuracy), Auto-sklearn (91.82\%
accuracy), and TPOT (91.77\% accuracy) are quite similar.

\section{Conclusion\label{sec:Conclusion}}

We have introduced a novel BO method to globally optimize expensive
black-box functions involving both categorical and continuous variables.
We formulated the problem as a MAB problem, where each category corresponds
to an arm with its reward distribution centered around the optimum
of the objective function in continuous variables. Our solution uses
Thompson sampling, which connects both MAB and BO in a unified framework.
Our method is capable of handling optimization problems where each
category is associated with a different continuous search space. We
also extended our method for batch optimization. We rigorously analyzed
the convergence providing sub-linear regret bounds. Our experiments
using several synthetic and real-world applications demonstrate the
usefulness of our proposed method.

\section*{Acknowledgment}

This research was partially funded by the Australian Government through
the Australian Research Council (ARC). Prof Venkatesh is the recipient
of an ARC Australian Laureate Fellowship (FL170100006).

\bibliographystyle{aaai}
\bibliography{aaai2020}
\newpage{}

\appendix

\part*{Supplementary Material}

\section{Theoretical Analysis}

Under Assumption 1 (refer to the main paper), when using Thompson
sampling (TS) for BO of a function $f_{c}(x)$ where $x\in\mathcal{X}_{c}$,
the \emph{Bayesian regret} after $T_{c}$ iterations is given as \cite{russo2014learning,kandasamy2018parallelised}

\begin{align}
\text{BayesRegret}(T_{c}) & =\mathbb{E}\sum_{t_{c}=1}^{T_{c}}[f_{c}(x_{c}^{*})-f_{c}(x_{t_{c}}^{c})]\nonumber \\
 & \leq\mathcal{O}\left(\sqrt{T_{c}\text{log}T_{c}\gamma_{T_{c}}}\right)\label{eq:BayesRegret_definition}
\end{align}
where $\gamma_{T_{c}}$ is the maximum information gain about $f_{c}(x)$
due to any $T_{c}$ function observations and the expectation is with
respect to the distribution over all possible functions $f_{c}$ in
our hypothesis space and any randomness in the algorithm, particularly
the random sampling of TS. 

Let us define \emph{Bayesian simple regret} as 
\[
\text{BayesSimpleRegret}(T_{c})\triangleq\mathbb{E}[f_{c}(x_{c}^{*})-\max_{t_{c}\leq T_{c}}f_{c}(x_{t_{c}}^{c})]
\]

Since $\max_{t_{c}\leq T_{c}}f_{c}(x_{t_{c}}^{c})\geq\frac{1}{T_{c}}\sum_{t_{c}=1}^{T_{c}}f_{c}(x_{t_{c}}^{c})$,
from Eq. (\ref{eq:BayesRegret_definition}), we have
\begin{align}
\text{BayesSimpleRegret}(T_{c}) & =\mathbb{E}[f_{c}(x_{c}^{*})-\max_{t_{c}\leq T_{c}}f_{c}(x_{t_{c}}^{c})]\nonumber \\
 & \leq\mathcal{O}\left(\sqrt{\frac{\text{log}T_{c}\gamma_{T_{c}}}{T_{c}}}\right)\label{eq:BayesSimpleRegret_Definition}
\end{align}

\subsection{Proof of Lemma 1 (Upper Bound on $R_{T}^{\textrm{MAB}}$)}
\begin{proof}
To have the proof, we use the connection between the TS and the UCB
algorithms as established by \cite{russo2014learning}. We can write
the regret $R_{T}^{\text{MAB}}$ as 
\begin{align}
R_{T}^{\text{MAB}} & =\mathbb{E}\sum_{t=1}^{T}\left(f_{c^{*}}(x^{*})-f_{c_{t}}(x_{c}^{*})\right)\nonumber \\
 & =\mathbb{E}\sum_{t=1}^{T}(f_{c^{*}}^{*}-U_{t}(c^{*}))+\mathbb{E}\sum_{t=1}^{T}(U_{t}(c^{*})-f_{c_{t}}^{*})\nonumber \\
 & =\underset{\text{Term-1}}{\underbrace{\mathbb{E}\sum_{t=1}^{T}(f_{c^{*}}^{*}-U_{t}(c^{*}))}}+\underset{\text{Term-2}}{\underbrace{\mathbb{E}\sum_{t=1}^{T}(U_{t}(c_{t})-f_{c_{t}}^{*})}}\label{eq:two-terms}
\end{align}
In the above expression, we have defined $f_{c^{*}}^{*}\triangleq f_{c^{*}}(x^{*})$
and $f_{c_{t}}^{*}\triangleq f_{c_{t}}(x_{c}^{*})$. Also, we have
introduced an upper confidence bound function $U_{t}(c)$. The last
equality follows as conditioned on $D_{t}$, the optimal arm $c^{*}$
and the arm $c_{t}$ selected by TS are identically distributed and
$U_{t}$ is deterministic \cite{russo2014learning}. We provide a
detailed explanation. Note that here we are analyzing Bayesian regret
(instead of the \textquotedblleft usual\textquotedblright{} regret),
which is defined over a set $\mathcal{F}$ of problem instances along
with a probability distribution over $\mathcal{F}$. Our function
$f(c,x)$ is just one of these problem instances and can be considered
a random function from $\mathcal{F}$. Note that $c^{*}$ being the
optimal arm for $f(c,x)$ is then also a random variable. Since we
do not know the function $f(c,x)$, but only have $t$ observations
from it, therefore, given $D_{t}$ $c^{*}$ is distributed as $p_{t}(c)=P(c=c^{*}|D_{t})$.
Since Thompson sampling precisely uses this posterior distribution
to propose $c_{t}$ at iteration $t$, both $c_{t}$ and $c^{*}$are
identically distributed conditioned on $D_{t}$. This argument is
fundamental in the Bayesian regret proof of Thompson sampling and
was first used by Russo and Van Roy \cite{russo2014learning}. Several
papers since then have used this argument e.g. see \cite{Bubeck_Liu_2013prior},
where step-1 in the proof of Theorem 1 exactly uses this argument.

The uncertainty in $f_{c}^{*}$ for $c$-th arm depends on the uncertainty
of the Gaussian process (GP) posterior, which reduces with increasing
evaluations of $f_{c}(x)$. To write an appropriate upper confidence
bound for $f_{c}^{*}$, we can use the TS regret analysis of BO. In
Eq. (\ref{eq:BayesSimpleRegret_Definition}), plugging $\gamma_{t_{c}}\sim\mathcal{O}(t_{c}^{\alpha})$
(Assumption 2), we have $\mathbb{E}[f_{c}^{*}-\max_{\{t'|c_{t}=c\}}f(x_{t'})]\leq\mathcal{O}(\sqrt{\frac{\log t_{c}}{t_{c}^{1-\alpha}}})$.
Given this inequality, we can define an upper confidence bound $U_{t}(c)$
on $f_{c}^{*}$ as follows $f_{c}^{*}\leq U_{t}(c)=\mathbb{E}[\max_{\{t'|c_{t}=c\}}f(x_{t'})]+a\sqrt{\frac{\log t_{c}}{t_{c}^{1-\alpha}}})$,
where $t_{c}$ is the number of times the arm $c$ has been selected
in the first $t$ iterations and $a$ is an appropriate positive constant.
We also define a lower confidence bound $L_{t}\left(c\right)=\mathbb{E}[\max_{\{t'|c_{t}=c\}}f(x_{t'})]$,
which holds trivially \emph{i.e.} $L_{t}(c)=\mathbb{E}[\max_{\{t'|c_{t}=c\}}f(x_{t'})]\leq f_{c}^{*}$. 

The Term-1 of Eq. (\ref{eq:two-terms}) is always negative as $f_{c^{*}}^{*}\leq U_{t}(c^{*})$
by the definition of $U_{t}(c^{*})$ and therefore can be ignored
when considering an upper bound on $R_{T}^{\text{MAB}}$. We will
next derive an upper bound on the Term-2 of Eq. (\ref{eq:two-terms}).
For this, consider 
\begin{align*}
\mathbb{E}\sum_{t=1}^{T}(U_{t}(c_{t})-f_{c_{t}}^{*}) & \leq\mathbb{E}\sum_{t=1}^{T}(U_{t}(c_{t})-L_{t}(c_{t}))\\
 & =a\sum_{c=1}^{C}\sum_{t_{c}=1}^{T_{c}}\sqrt{\frac{\log t_{c}}{t_{c}^{1-\alpha}}})\\
 & \leq a\sqrt{\log T}\sum_{c=1}^{C}\sum_{t_{c}=1}^{T_{c}}\frac{1}{\sqrt{t_{c}^{1-\alpha}}}
\end{align*}
 We write $\sum_{t_{c}=1}^{T_{c}}\frac{1}{\sqrt{t_{c}^{1-\alpha}}}=\sum_{t_{c}=1}^{T_{c}}\frac{t_{c}^{\alpha/2}}{\sqrt{t_{c}}}\leq T_{c}^{\alpha/2}\sum_{t_{c}=1}^{T_{c}}\frac{1}{\sqrt{t_{c}}}$
then using the identity $\sum_{t_{c}=1}^{T_{c}}\frac{1}{\sqrt{t_{c}}}\leq\int_{0}^{T_{c}}\frac{1}{\sqrt{r}}dr=2\sqrt{T_{c}}$
we obtain
\[
\mathbb{E}\sum_{t=1}^{T}(U_{t}(c_{t})-f_{c_{t}}^{*})\leq2a\sqrt{\log T}\sum_{c=1}^{C}\sqrt{T_{c}^{\alpha+1}}
\]
By using the Cauchy-Schwarz inequality, we can bound $\sum_{c=1}^{C}\sqrt{T_{c}^{\alpha+1}}\leq\sqrt{C\sum_{c=1}^{C}T_{c}^{\alpha+1}}$.
Next we set $q_{c}=T_{c}^{\alpha+1}$ and $\eta=1/(\alpha+1)$ in
the identity $\left(\sum_{c=1}^{C}q_{c}\right)^{\eta}\leq\sum_{c=1}^{C}q_{c}^{\eta}$,
where $0<\eta\leq1$, to get $\sum_{c=1}^{C}T_{c}^{\alpha+1}\leq\left(\sum_{c=1}^{C}T_{c}\right)^{\alpha+1}$.
Therefore, we can write $\mathbb{E}\sum_{t=1}^{T}(U_{t}(c_{t})-f_{c_{t}}^{*})\leq2a\sqrt{\log T}\sqrt{C\left(\sum_{c=1}^{C}T_{c}\right)^{\alpha+1}}=2a\sqrt{CT^{\alpha+1}\log T}$
and we get $R_{T}^{\text{MAB}}\leq2a\sqrt{CT^{\alpha+1}\log T}$.
\end{proof}

\subsection{Proof of Lemma 2 (Upper Bound on $R_{T}^{\textrm{BO}}$)}
\begin{proof}
By definition, we have $R_{T}^{\text{BO}}=\mathbb{E}\sum_{t=1}^{T}[f_{c_{t}}(x_{c_{t}}^{*})-f_{c_{t}}(x_{t})]=\mathbb{E}\sum_{c=1}^{C}\sum_{t_{c}=1}^{T_{c}}[f_{c}(x_{c}^{*})-f_{c}(x_{t_{c}})]$.
From Eq. (\ref{eq:BayesRegret_definition}), we have $\mathbb{E}\sum_{t_{c}=1}^{T_{c}}[f_{c}(x_{c}^{*})-f_{c}(x_{t_{c}})]\leq\mathcal{O}\left(\sqrt{T_{c}\text{log}T_{c}\gamma_{T_{c}}}\right)\leq b\sqrt{T_{c}^{\alpha+1}\log T_{c}}$
under Assumption 2. Therefore, we have $R_{T}^{\text{BO}}\leq b\mathbb{E}\sum_{c=1}^{C}\sqrt{T_{c}^{\alpha+1}\log T_{c}}\leq b\sqrt{\log T}\mathbb{E}\sum_{c=1}^{C}\sqrt{T_{c}^{\alpha+1}}$.
As in the proof of Lemma 1, we can bound as $\sum_{c=1}^{C}\sqrt{T_{c}^{\alpha+1}}\leq\sqrt{C\sum_{c=1}^{C}T_{c}^{\alpha+1}}\leq\sqrt{CT^{\alpha+1}}$
and therefore have $R_{T}^{\text{BO}}\leq b\sqrt{CT^{\alpha+1}\log T}$.
\end{proof}

\subsection{Proof of Theorem 4 (Batch Setting)}
\begin{proof}
The Bayesian regret for the batch scheme can be written similar to
the sequential case as
\[
\text{BayesRegret}(T)=R_{T}^{\text{MAB}}+R_{T}^{\text{BO}},
\]
where $R_{T}^{\text{MAB}}\triangleq\mathbb{E}\sum_{t=1}^{T}[f_{c^{*}}(x^{*})-f_{c_{t}}(x_{c_{t}}^{*})]$
and $R_{T}^{\text{BO}}\triangleq\mathbb{E}\sum_{t=1}^{T}[f_{c_{t}}(x_{c_{t}}^{*})-f_{c_{t}}(x_{t})]$.

Similar to the sequential case, for batch setting too, we have 
\begin{align*}
R_{T}^{\text{MAB}} & =\underset{\text{Term-1}}{\underbrace{\mathbb{E}\sum_{t=1}^{T}(f_{c^{*}}^{*}-U_{t}(c^{*}))}}+\underset{\text{Term-2}}{\underbrace{\mathbb{E}\sum_{t=1}^{T}(U_{t}(c_{t})-f_{c_{t}}^{*})}}
\end{align*}

Similar to Eq. (\ref{eq:BayesRegret_definition}) and (\ref{eq:BayesSimpleRegret_Definition})
of the sequential setting, we have, for batch setting, under Assumptions
1 and 2 of the main paper, TS \textit{\emph{Bayesian regret}} and
\textit{\emph{Bayesian simple regret}} (for BO of $f_{c}(x)$) bounded
by $\mathcal{O}(\psi_{B}\sqrt{T_{c}^{\alpha+1}\log T_{c}})$ and $\mathcal{O}(\psi_{B}\sqrt{\frac{\log T_{c}}{T_{c}^{1-\alpha}}})$
respectively \cite{kandasamy2018parallelised}. Using the Bayesian
simple regret we can write $\mathbb{E}[f_{c}^{*}-\max_{\{t'|c_{t}=c\}}f(x_{t'})]\leq\mathcal{O}(\psi_{B}\sqrt{\frac{\log t_{c}}{t_{c}^{1-\alpha}}})$.
Given this inequality, we can define an upper confidence bound on
$f_{c}^{*}$ as $f_{c}^{*}\leq U_{t}(c)=\mathbb{E}[\max_{\{t'|c_{t}=c\}}f(x_{t'})]+a\psi_{B}\frac{\log t_{c}}{\sqrt{t_{c}^{1-\alpha}}}$,
where $t_{c}$ is the number of times the arm $c$ has been selected
in the first $t$ iterations and $a$ is an appropriate positive constant.
As before we define a lower confidence bound $L_{t}\left(c\right)=\mathbb{E}[\max_{\{t'|c_{t}=c\}}f(x_{t'})]$,
which holds trivially \emph{i.e.} $L_{t}(c)\leq f_{c}^{*}$. 

The remaining proof follows the same set of arguments as in the sequential
case (see the proof of Lemma 1) and we obtain $\mathbb{E}\sum_{t=1}^{T}(U_{t}(c_{t})-f_{c_{t}}^{*})\leq2a\psi_{B}\sqrt{CT^{\alpha+1}\log T}$
and since $\mathbb{E}\sum_{t=1}^{T}(f_{c^{*}}^{*}-U_{t}(c^{*}))$
is always negative, we have, for the batch setting, $R_{T}^{\text{MAB}}\leq2a\psi_{B}\sqrt{CT^{\alpha+1}\log T}$. 

The analysis of finding the upper bound on $R_{T}^{\text{BO}}$ mimics
the proof steps in Lemma 1 and for our batch setting, it is upper
bounded as $R_{T}^{\text{BO}}\leq\mathbb{E}\sum_{c=1}^{C}b\psi_{B}\sqrt{T_{c}^{\alpha+1}\log T_{c}}\leq b\psi_{B}\sqrt{\log T}\mathbb{E}\sum_{c=1}^{C}\sqrt{T_{c}^{\alpha+1}}$,
where $b$ being an appropriate constant. As in the proof of Lemma
1, we can prove $\sum_{c=1}^{C}\sqrt{T_{c}^{\alpha+1}}\leq\sqrt{CT^{\alpha+1}}$
and therefore, $R_{T}^{\text{BO}}\leq b\psi_{B}\sqrt{CT^{\alpha+1}\log T}$.

Finally combining $R_{T}^{\text{MAB}}$ and $R_{T}^{\text{BO}}$,
we have $\text{BayesRegret}(T)$ for our \textbf{batch setting} bounded
by $2a\psi_{B}\sqrt{CT^{\alpha+1}\log T}+b\psi_{B}\sqrt{CT^{\alpha+1}\log T}$,
which can be summarized as a sub-linear term in $T$ as $\mathcal{O}(\psi_{B}\sqrt{CT^{\alpha+1}\log T})$.
\end{proof}

\section{Additional Experimental Results}

\subsection{Experiments with Other Synthetic Functions}

\subsubsection{2d function (1 categorical variable + 1 continuous variable)}

We compare our \textbf{Bandit-BO} with other methods on a 2d function
as follows

\[
f([c,x])=\exp(-(z_{1}-2)^{2})+\exp(\frac{-(z_{1}-6)^{2}}{10})+\frac{1}{z_{2}^{2}+1}+\frac{c}{2},
\]
where $z_{1}=x-0.05\times c$, $z_{2}=x+0.05\times c$, and $x\in[-2,10]$.

Figures \ref{fig:Optimization-results-2d-function-b} and \ref{fig:Optimization-results-2d-function-c}
show the optimization results for the function with different batch
sizes and different numbers of categories respectively. From the results,
we can see that \textbf{Bandit-BO} is the best method followed by
SMAC and TPE.

\begin{figure*}
\begin{centering}
\includegraphics[scale=0.25]{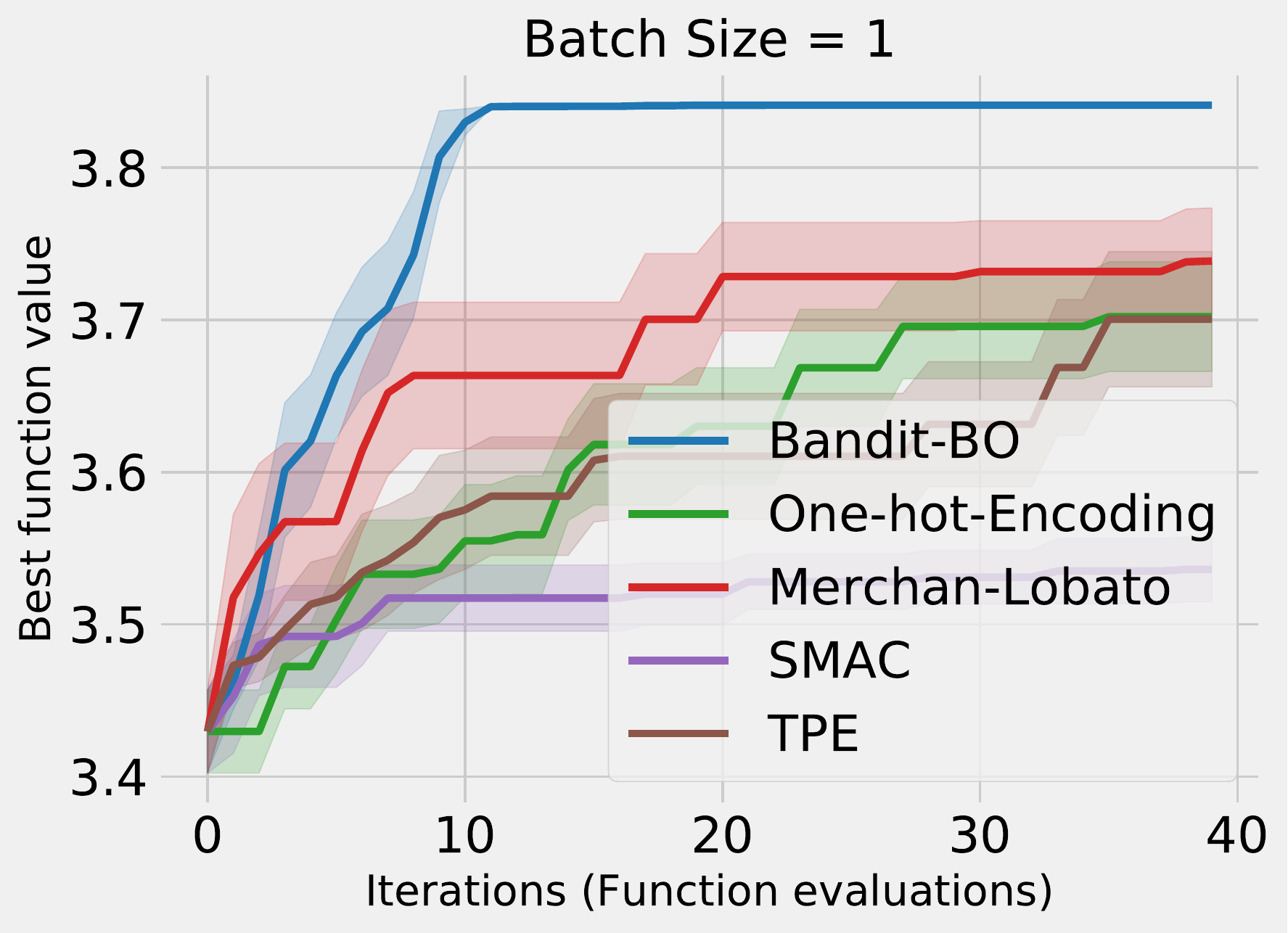}\hspace{0.4cm}\includegraphics[scale=0.25]{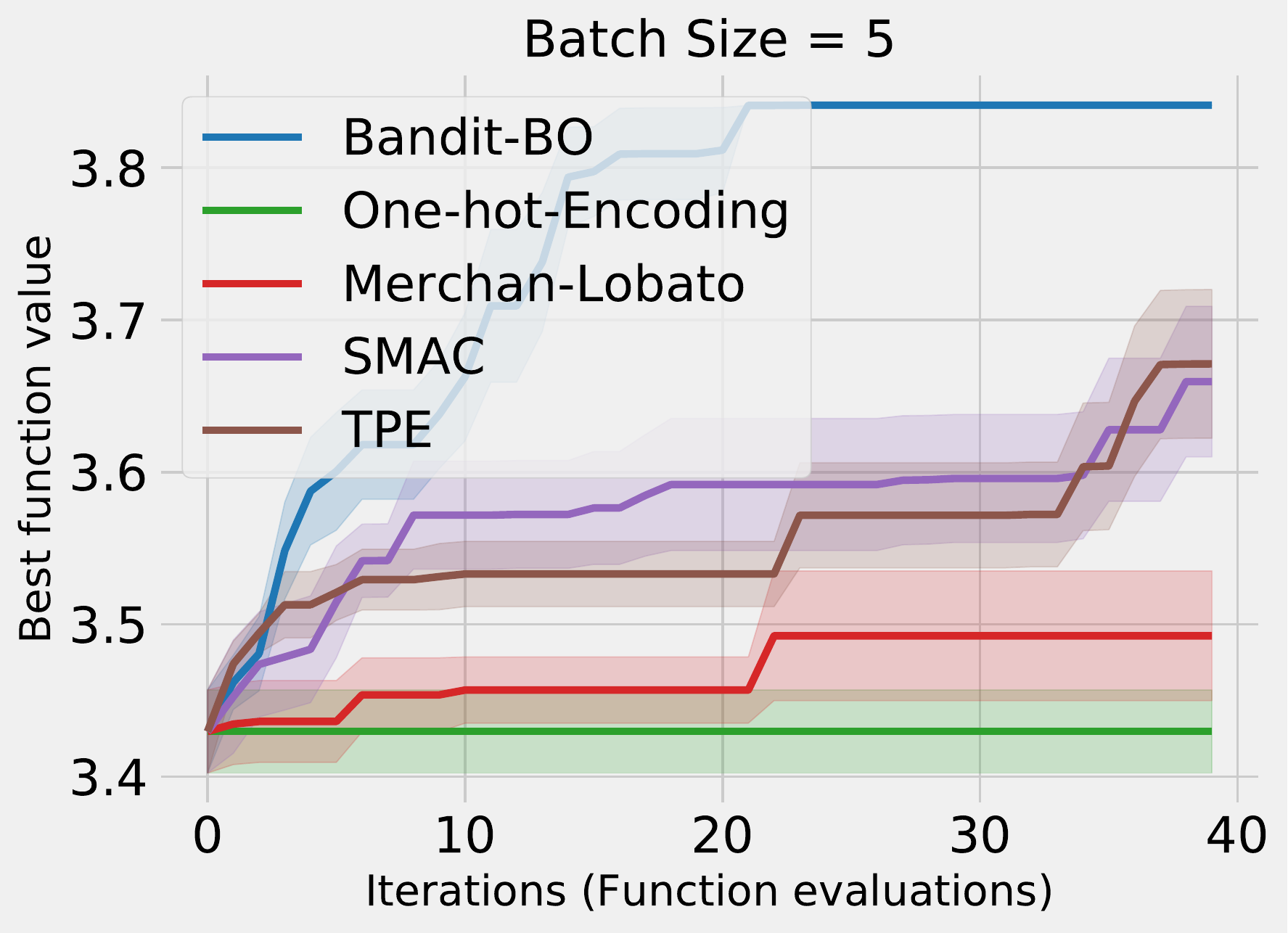}\hspace{0.4cm}\includegraphics[scale=0.25]{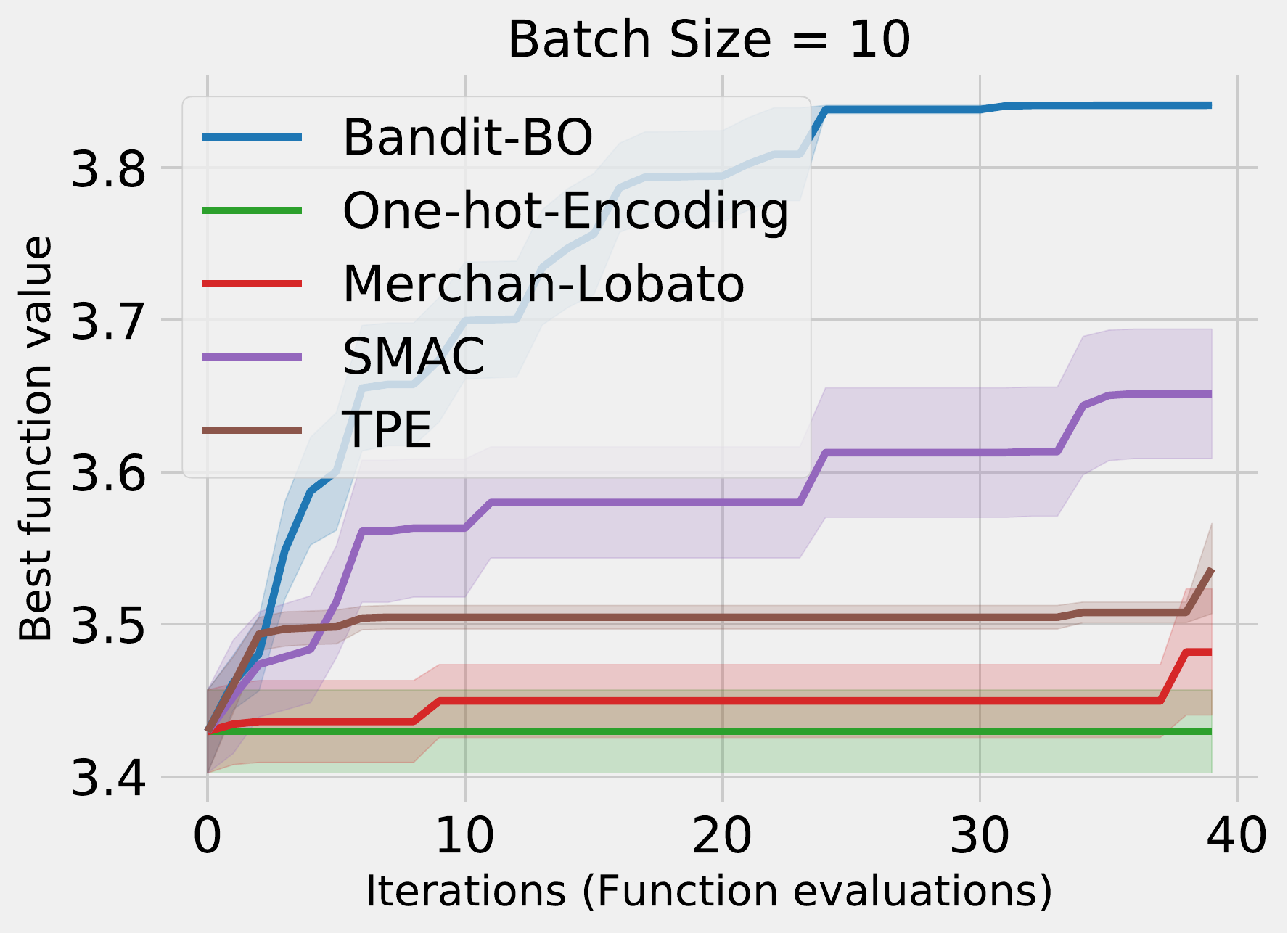}
\par\end{centering}
\caption{\label{fig:Optimization-results-2d-function-b}Optimization results
for the test 2d function for different batch sizes $B\in\{1,5,10\}$.
The number of categories is fixed to 6.}
\end{figure*}

\begin{figure*}
\begin{centering}
\includegraphics[scale=0.25]{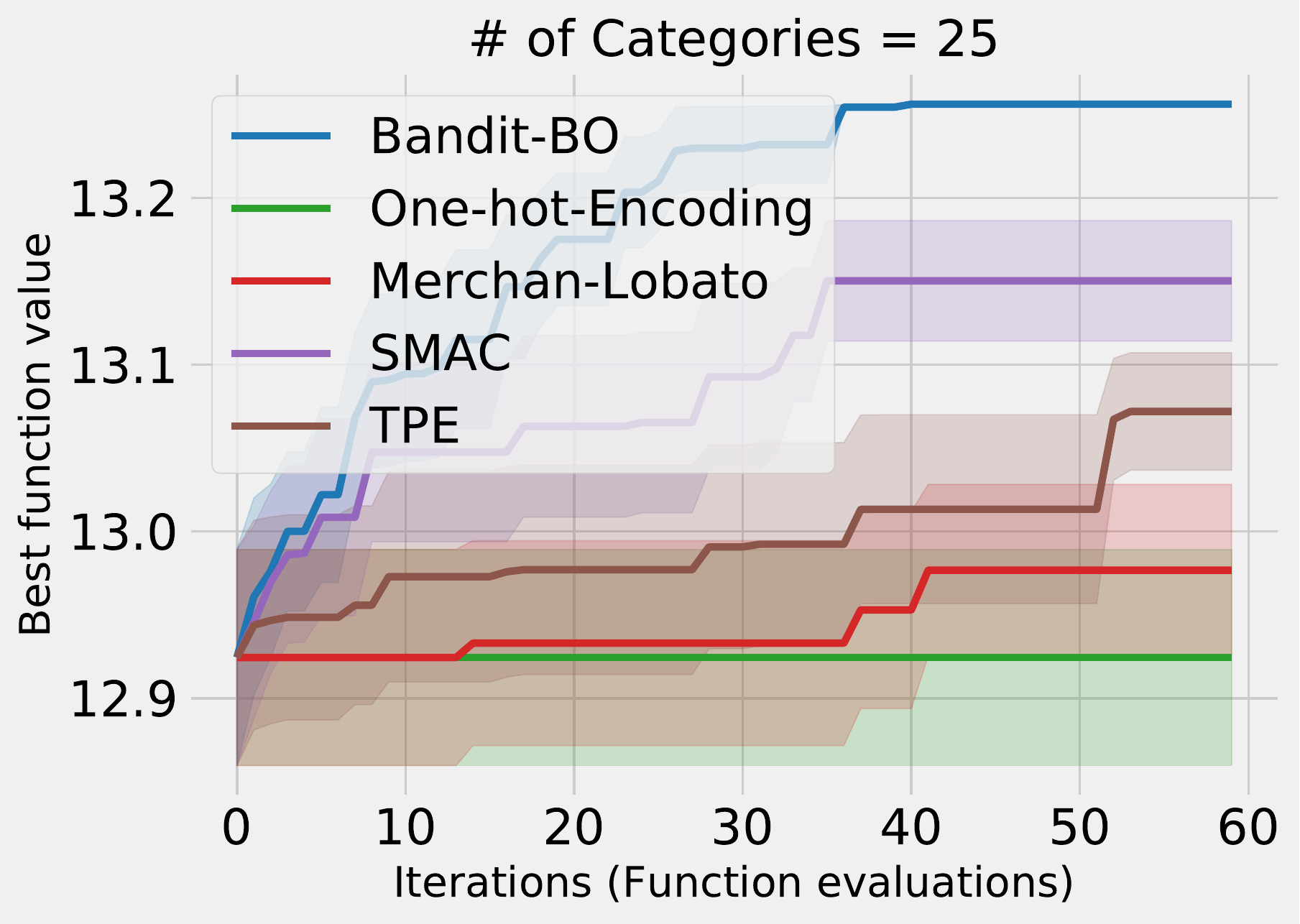}\hspace{0.4cm}\includegraphics[scale=0.25]{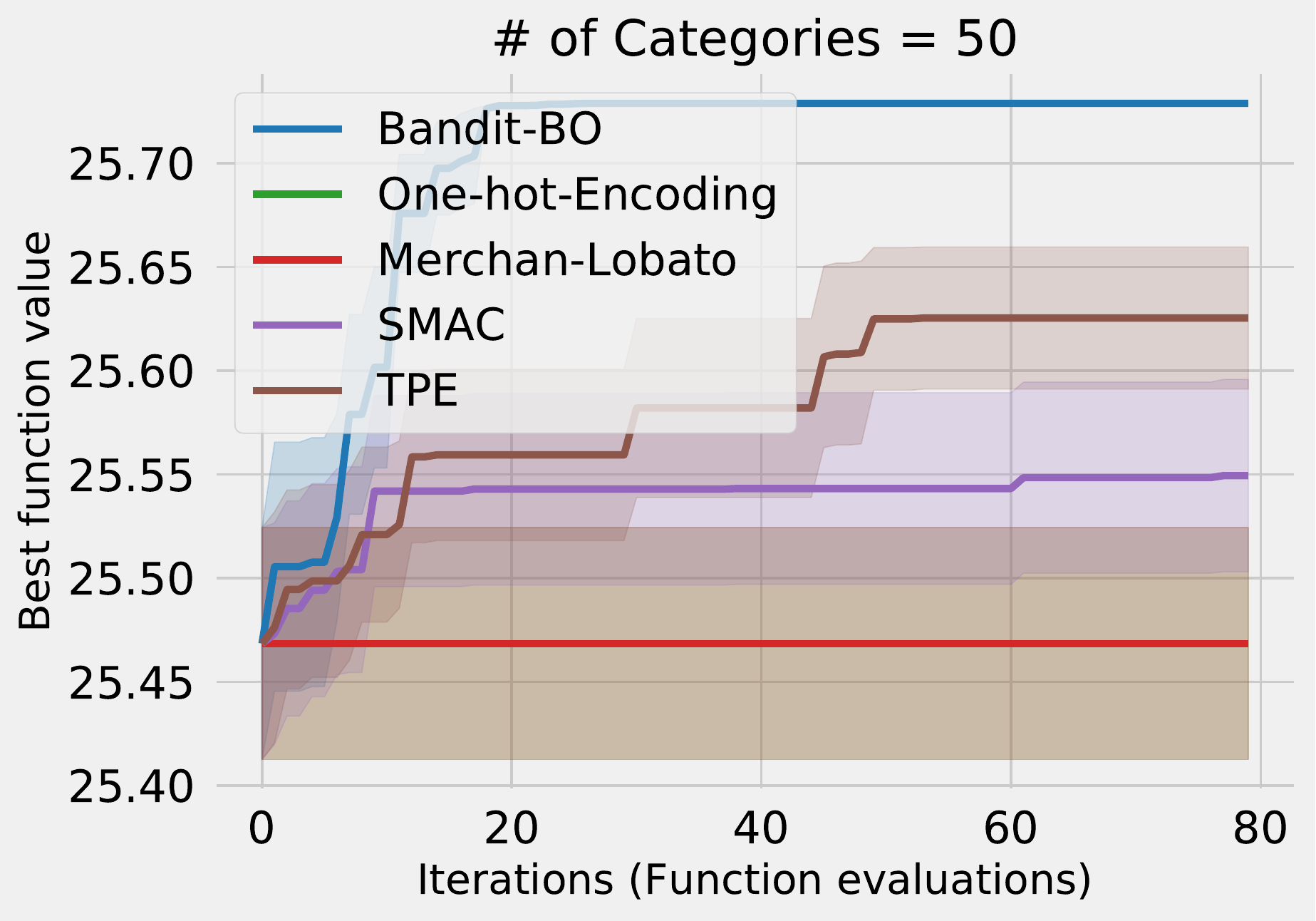}\hspace{0.4cm}\includegraphics[scale=0.25]{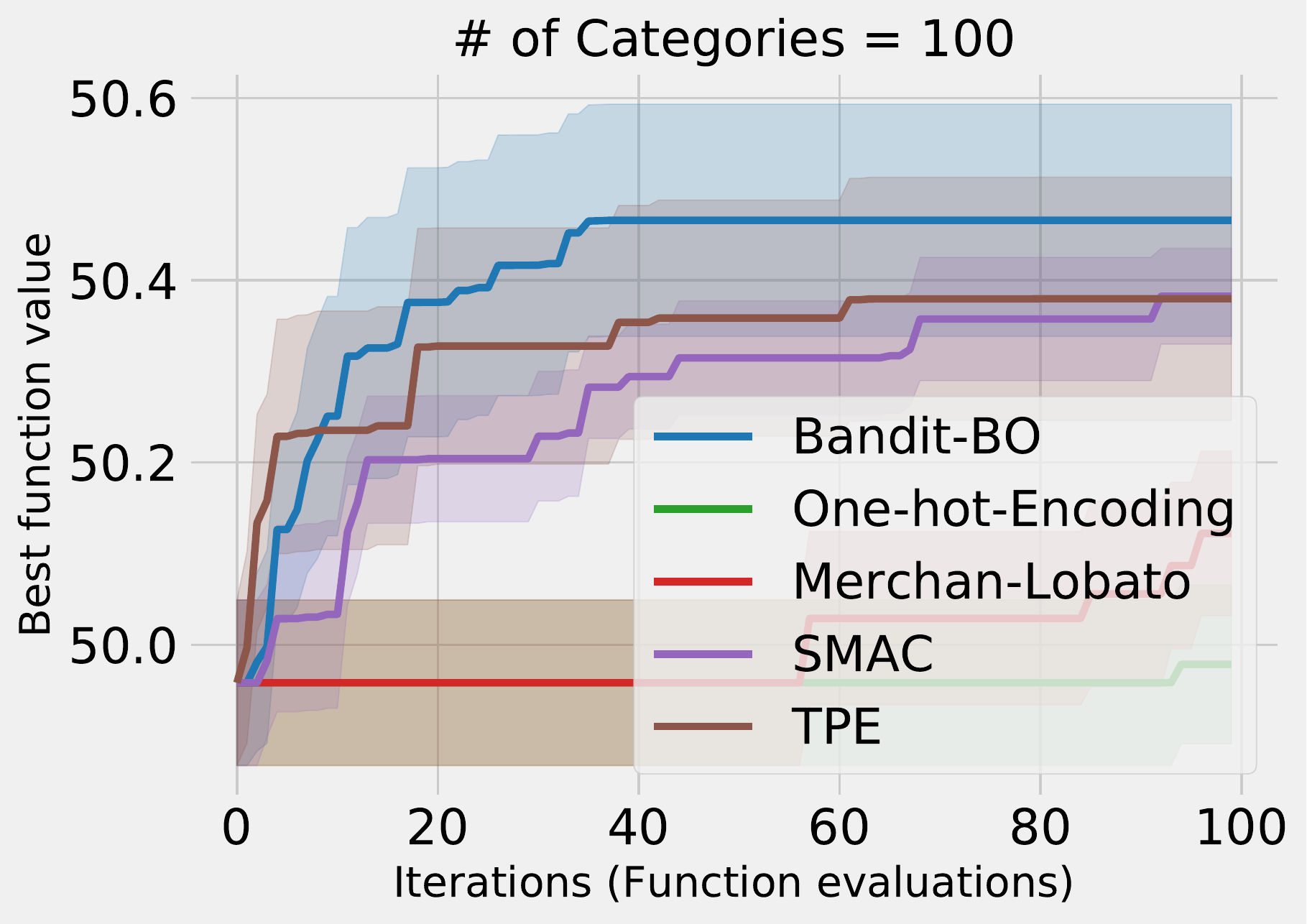}
\par\end{centering}
\caption{\label{fig:Optimization-results-2d-function-c}Optimization results
for the test 2d function for different numbers of categories $C\in\{25,50,100\}$.
The batch size is fixed to 5.}
\end{figure*}

\begin{figure*}
\begin{centering}
\includegraphics[scale=0.25]{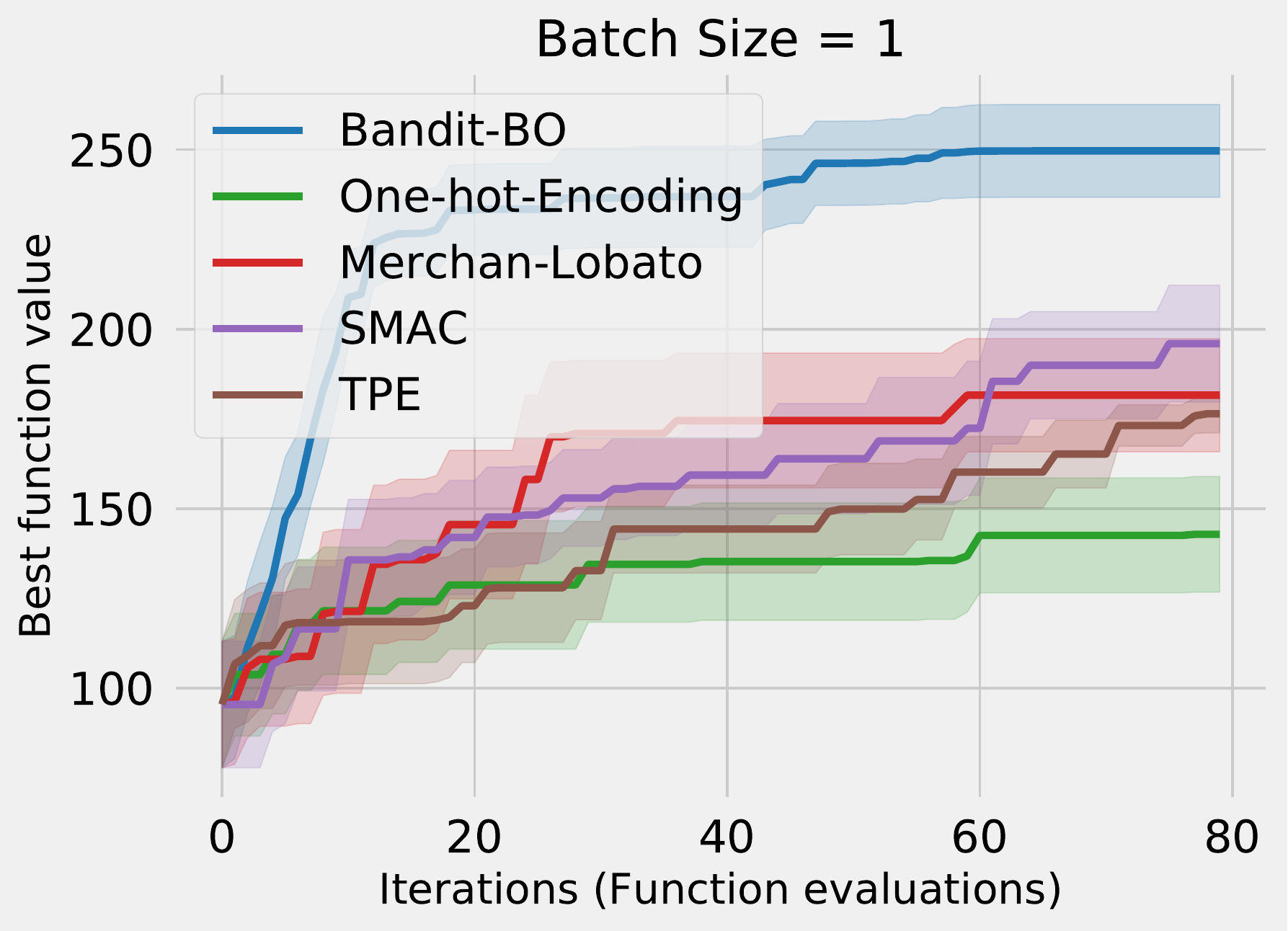}\hspace{0.4cm}\includegraphics[scale=0.25]{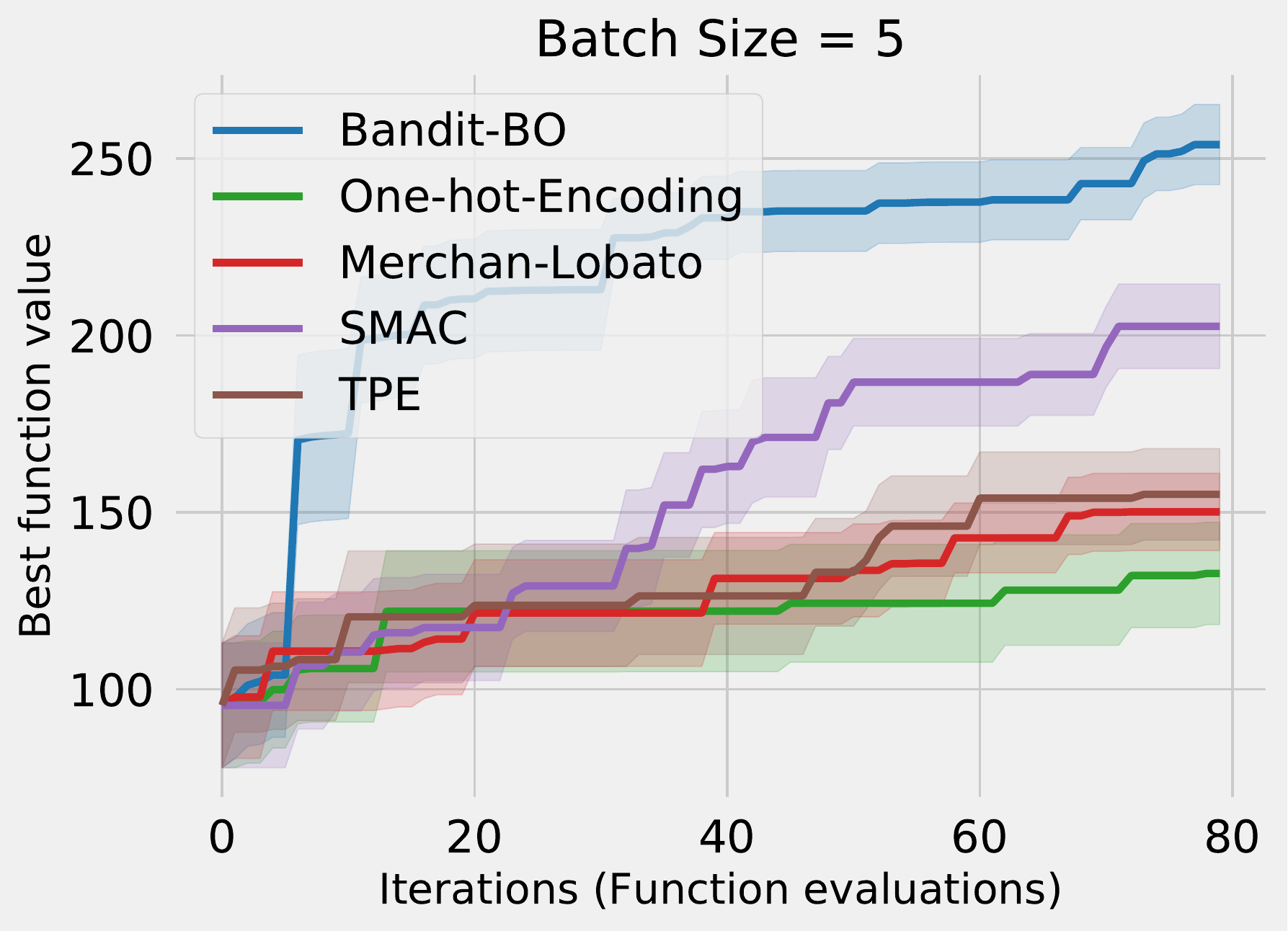}\hspace{0.4cm}\includegraphics[scale=0.25]{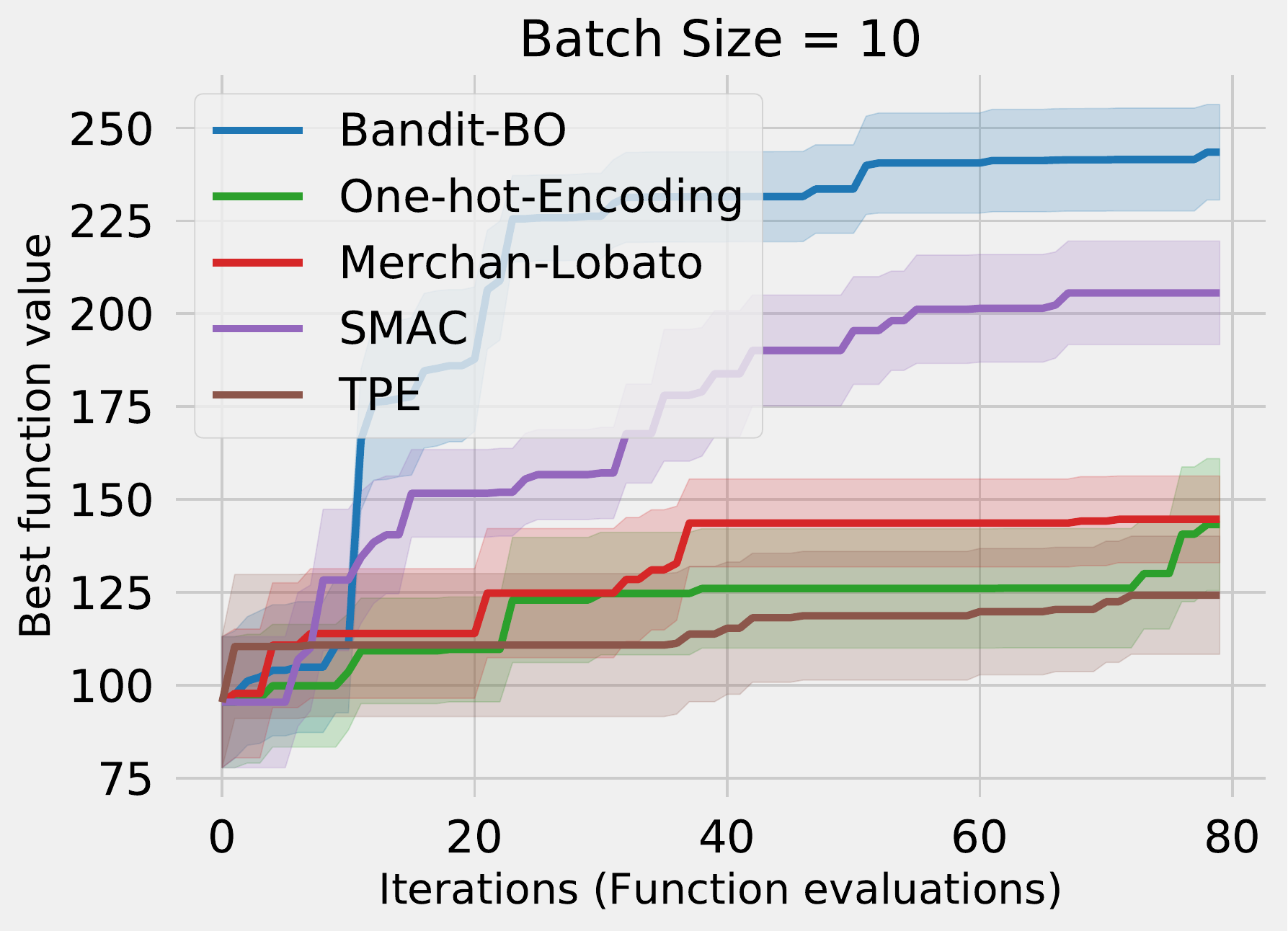}
\par\end{centering}
\caption{\label{fig:Optimization-results-5d-function-b}Optimization results
for the \textit{modified} Alpine-4d function for different batch sizes
$B\in\{1,5,10\}$. The number of categories is fixed to 6.}
\end{figure*}

\begin{figure*}
\begin{centering}
\includegraphics[scale=0.25]{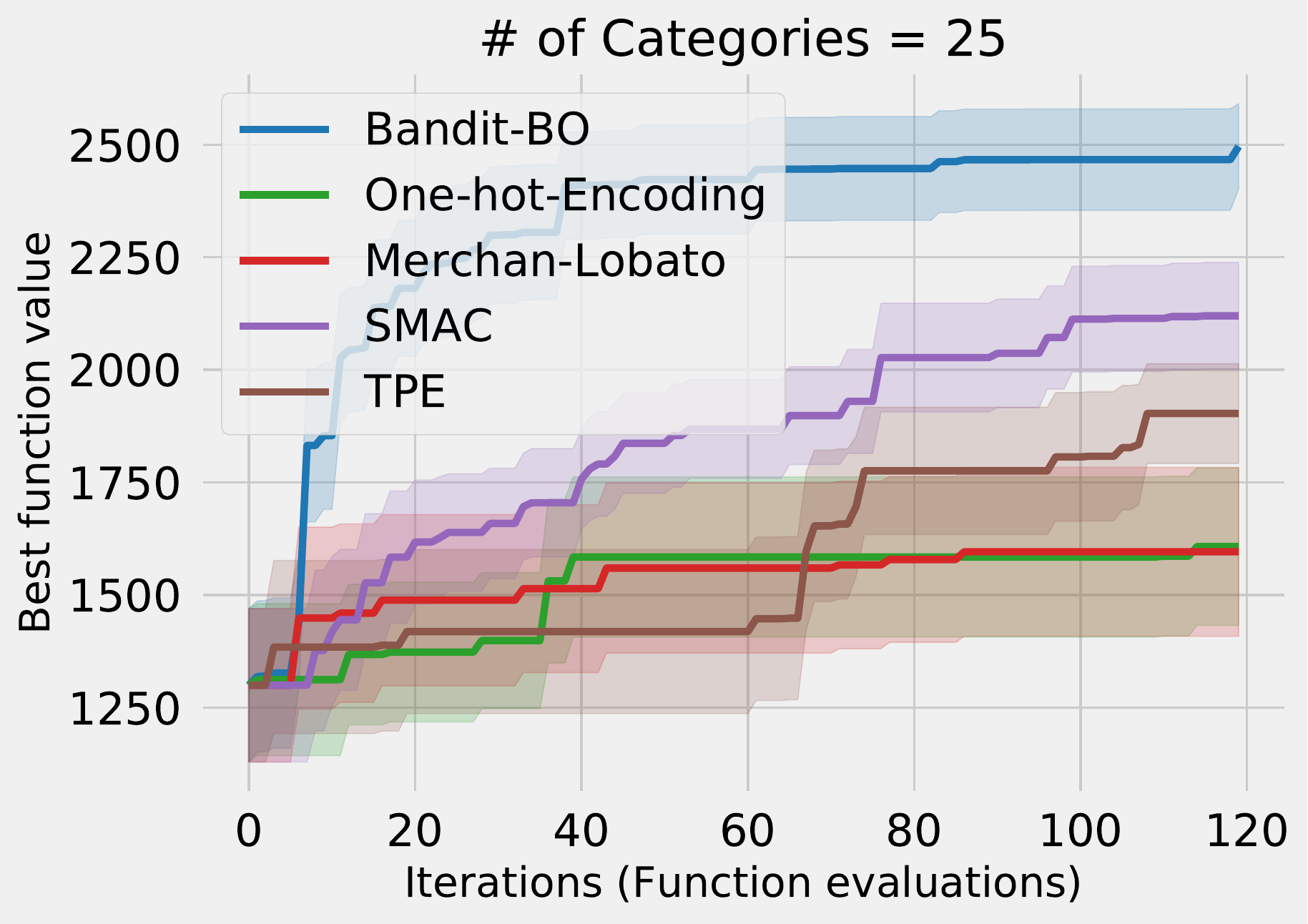}\hspace{0.4cm}\includegraphics[scale=0.25]{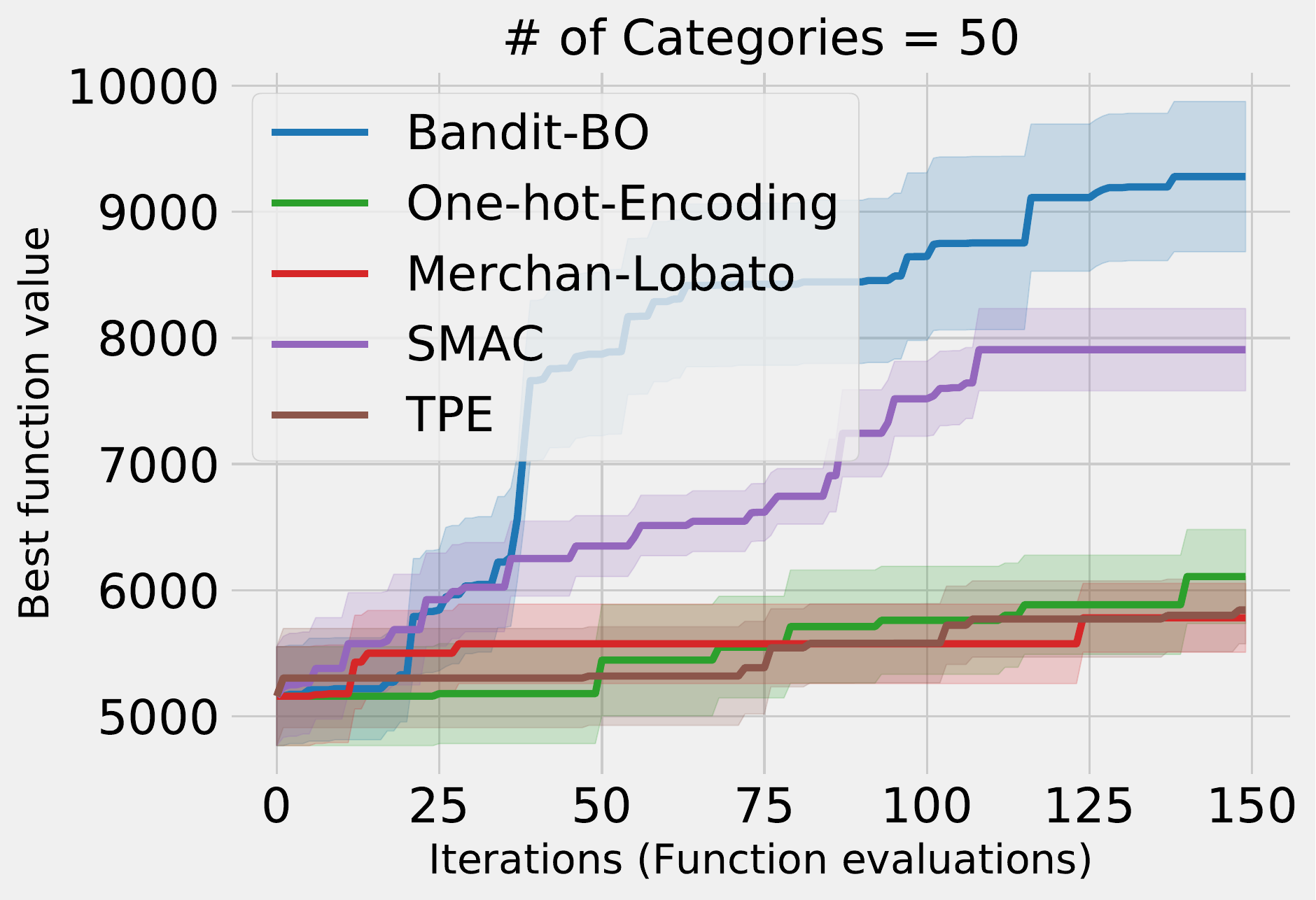}\hspace{0.4cm}\includegraphics[scale=0.25]{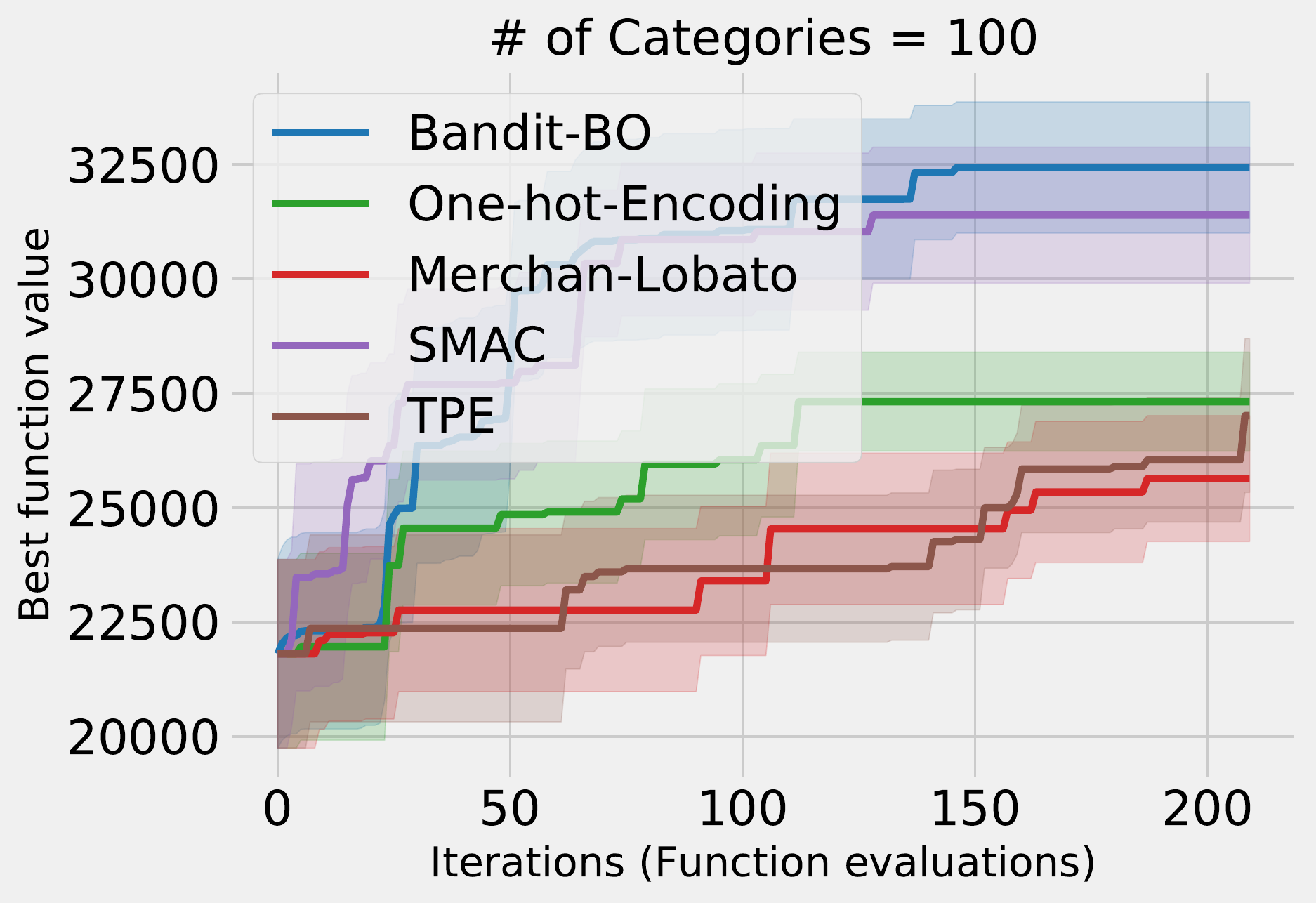}
\par\end{centering}
\caption{\label{fig:Optimization-results-5d-function-c}Optimization results
for the \textit{modified} Alpine-4d function for different numbers
of categories $C\in\{25,50,100\}$. The batch size is fixed to 5.}
\end{figure*}

\subsubsection{5d function (1 categorical variable + 4 continuous variables)}

We also compare our method \textbf{Bandit-BO} with others methods
on a 5d function created by combining a set of \textit{modified} Alpine-4d
functions as follows

\[
f([c,x])={\displaystyle \prod_{i=1}^{4}\sqrt{z_{i}}}\sin(z_{i})+(2\times c),
\]
where $z_{i}=x_{i}+2\times c$ and $x\in[1,10]^{4}$.

Figures \ref{fig:Optimization-results-5d-function-b} and \ref{fig:Optimization-results-5d-function-c}
show the optimization results for the function with different batch
sizes and different numbers of categories respectively. Our \textbf{Bandit-BO}
is the best method while SMAC is the second-best method. TPE and Merchan-Lobato
are generally better than One-hot-Encoding.

\subsection{Experiments with Automated Machine Learning}

\subsubsection{Machine learning models and their hyper-parameters}

Table \ref{tab:Machine-learning-models} summarizes 14 machine learning
models (classifiers) along with their hyper-parameters used in our
experiment \textit{automated machine learning} (see the main paper).
We name the models and their hyper-parameters following the notation
in the Python Machine Learning library \textit{scikit-learn}\footnote{\url{https://scikit-learn.org/}}.
The value range for each hyper-parameter is adopted from the automated
machine learning package \textit{Auto-sklearn} \cite{Feurer2019}.

\begin{table*}[t]
\caption{\label{tab:Machine-learning-models}Machine learning models and hyper-parameters.}

\centering{}%
\begin{tabular}{|l|l|l|}
\hline 
\rowcolor{header_color}\textbf{Model} & \textbf{Hyper-parameter} & \textbf{Value range}\tabularnewline
\hline 
\hline 
\multirow{2}{*}{\textit{Adaboost}} & n\_estimators & $[50,100]$\tabularnewline
\cline{2-3} 
 & learning\_rate & $\log[0.01,2]$\tabularnewline
\hline 
\multirow{3}{*}{\textit{Gradient Boosting}} & learning\_rate & $\log[0.01,1]$\tabularnewline
\cline{2-3} 
 & subsample & $[0.01,1]$\tabularnewline
\cline{2-3} 
 & max\_features & $[0.1,1]$\tabularnewline
\hline 
\textit{Decision Tree} & max\_depth & $[0,2]$\tabularnewline
\hline 
\textit{Extra Trees} & max\_features & $[0,1]$\tabularnewline
\hline 
\multirow{2}{*}{\textit{Random Forest}} & n\_estimators & $[10,50]$\tabularnewline
\cline{2-3} 
 & max\_features & $[0,1]$\tabularnewline
\hline 
\textit{Bernoulli NB} & alpha & $\log[10^{-2},100]$\tabularnewline
\hline 
\textit{Multinomial NB} & alpha & $\log[10^{-2},100]$\tabularnewline
\hline 
\textit{LDA} & shrinkage & $[0,1]$\tabularnewline
\hline 
\textit{QDA} & reg\_param & $[0,1]$\tabularnewline
\hline 
\multicolumn{1}{|l|}{\textit{Linear-SVM}} & C & $\log[2^{-5},2^{15}]$\tabularnewline
\hline 
\multirow{2}{*}{\textit{RBF-SVM}} & C & $\log[2^{-5},2^{15}]$\tabularnewline
\cline{2-3} 
 & gamma & $\log[2^{-15},2^{3}]$\tabularnewline
\hline 
\multicolumn{1}{|l|}{\textit{Passive Aggressive}} & C & $\log[10^{-5},10]$\tabularnewline
\hline 
\multirow{3}{*}{\textit{SGD (logistic loss)}} & alpha & $\log[10^{-7},10^{-1}]$\tabularnewline
\cline{2-3} 
 & l1\_ratio & $\log[10^{-9},1]$\tabularnewline
\cline{2-3} 
 & eta0 & $\log[10^{-7},10^{-1}]$\tabularnewline
\hline 
\multirow{3}{*}{\textit{Neural Network}} & hidden\_layer\_sizes & $\log[128,256]$\tabularnewline
\cline{2-3} 
 & alpha & $\log[10^{-7},10^{-1}]$\tabularnewline
\cline{2-3} 
 & learning\_rate & $\log[10^{-4},10^{-1}]$\tabularnewline
\hline 
\end{tabular}
\end{table*}

\subsubsection{Classification results on all 30 benchmark datasets}

Table \ref{tab:Overall-accuracy} shows the classification accuracy
of each method on 30 benchmark datasets. Compared with other methods,
our method \textbf{Bandit-BO} shows an improved classification accuracy
on most of the datasets.

\begin{table*}
\caption{\label{tab:Overall-accuracy}Characteristics ($|D|$: the number of
samples, $|F|$: the number of features, and $|L|$: the number of
labels) of 30 benchmark datasets along with \textit{classification
accuracy} (standard error) of our method \textbf{Bandit-BO} and other
methods. Bold font marks the best performance in a row. The last row
denotes the overall accuracy of each method across all 30 datasets.}

\centering{}%
\begin{tabular}{|l|l|r|r|r|>{\raggedleft}p{0.1\textwidth}|>{\raggedleft}p{0.1\textwidth}|>{\raggedleft}p{0.1\textwidth}|>{\raggedleft}p{0.1\textwidth}|}
\hline 
\rowcolor{header_color}\textbf{Dataset} & \textbf{Format} & $|D|$ & $|F|$ & $|L|$ & \textbf{Bandit-BO} & \textbf{Hyperopt-sklearn} & \textbf{Auto-sklearn} & \textbf{TPOT}\tabularnewline
\hline 
\hline 
\textit{wine} & tabular & 178 & 13 & 3 & \textbf{98.33 (0.00)} & 97.78 (0.01) & 97.50 (0.01) & 96.67 (0.01)\tabularnewline
\hline 
\rowcolor{even_color}\textit{breast\_cancer} & tabular & 569 & 30 & 2 & \textbf{97.02 (0.01)} & 95.44 (0.00) & 96.40 (0.00) & 96.84 (0.00)\tabularnewline
\hline 
\textit{analcatdata\_authorship} & text & 841 & 70 & 4 & \textbf{99.76 (0.00)} & 99.47 (0.00) & 99.41 (0.00) & 99.53 (0.00)\tabularnewline
\hline 
\rowcolor{even_color}\textit{diabetes} & tabular & 768 & 8 & 2 & \textbf{77.40 (0.01)} & 73.70 (0.02) & 76.95 (0.01) & 77.01 (0.01)\tabularnewline
\hline 
\textit{electricity} & tabular & 45,312 & 8 & 2 & \textbf{92.29 (0.00)} & 92.21 (0.00) & 90.89 (0.00) & 90.94 (0.00)\tabularnewline
\hline 
\rowcolor{even_color}\textit{wall\_robot\_navigation} & trajectory & 5,456 & 24 & 4 & \textbf{99.73 (0.00)} & \textbf{99.73 (0.00)} & 99.43 (0.00) & 99.46 (0.00)\tabularnewline
\hline 
\textit{vehicle} & tabular & 846 & 18 & 4 & \textbf{81.71 (0.01)} & 78.71 (0.01) & 80.24 (0.01) & 78.12 (0.01)\tabularnewline
\hline 
\rowcolor{even_color}\textit{cardiotocography} & tabular & 2,126 & 35 & 10 & \textbf{100.0 (0.00)} & \textbf{100.0 (0.00)} & 99.98 (0.00) & \textbf{100.0 (0.00)}\tabularnewline
\hline 
\textit{artificial\_characters} & text & 10,218 & 7 & 10 & 90.47 (0.01) & \textbf{90.94 (0.00)} & 82.49 (0.00) & 87.75 (0.01)\tabularnewline
\hline 
\rowcolor{even_color}\textit{monks1} & tabular & 556 & 6 & 2 & \textbf{100.0 (0.00)} & 99.82 (0.00) & 99.73 (0.00) & \textbf{100.0 (0.00)}\tabularnewline
\hline 
\textit{monks2} & tabular & 601 & 6 & 2 & 98.26 (0.01) & 97.69 (0.01) & 97.36 (0.01) & \textbf{99.92 (0.00)}\tabularnewline
\hline 
\rowcolor{even_color}\textit{steel\_plates\_fault} & tabular & 1,941 & 33 & 2 & \textbf{100.0 (0.00)} & \textbf{100.0 (0.00)} & \textbf{100.0 (0.00)} & \textbf{100.0 (0.00)}\tabularnewline
\hline 
\textit{phoneme} & tabular & 5,404 & 5 & 2 & \textbf{90.23 (0.00)} & 90.21 (0.00) & 89.25 (0.00) & 89.58 (0.00)\tabularnewline
\hline 
\rowcolor{even_color}\textit{waveform} & tabular & 5,000 & 40 & 3 & \textbf{86.45 (0.00)} & 86.42 (0.00) & 86.19 (0.00) & 86.28 (0.00)\tabularnewline
\hline 
\textit{balance\_scale} & tabular & 625 & 4 & 3 & \textbf{98.48 (0.01)} & 97.20 (0.01) & 89.04 (0.01) & 92.32 (0.01)\tabularnewline
\hline 
\rowcolor{even_color}\textit{digits} & image & 1,797 & 64 & 10 & 98.25 (0.00) & \textbf{98.67 (0.00)} & 98.08 (0.00) & 97.86 (0.00)\tabularnewline
\hline 
\textit{iris} & tabular & 150 & 4 & 3 & 94.33 (0.01) & 92.00 (0.01) & \textbf{95.33 (0.01)} & 94.67 (0.01)\tabularnewline
\hline 
\rowcolor{even_color}\textit{blood\_transfusion} & tabular & 748 & 4 & 2 & 77.87 (0.01) & 76.07 (0.01) & 77.67 (0.01) & \textbf{78.07 (0.01)}\tabularnewline
\hline 
\textit{qsar\_biodeg} & tabular & 1,055 & 41 & 2 & 85.12 (0.01) & 84.79 (0.01) & \textbf{86.54 (0.01)} & 85.97 (0.00)\tabularnewline
\hline 
\rowcolor{even_color}\textit{letter} & image & 20,000 & 16 & 26 & \textbf{97.25 (0.00)} & 97.00 (0.01) & 95.72 (0.00) & 95.72 (0.00)\tabularnewline
\hline 
\textit{australian} & tabular & 690 & 14 & 2 & 83.77 (0.01) & 83.84 (0.01) & 84.13 (0.01) & \textbf{84.28 (0.01)}\tabularnewline
\hline 
\rowcolor{even_color}\textit{olivetti} & image & 400 & 4,096 & 40 & \textbf{99.25 (0.00)} & 96.25 (0.01) & 94.00 (0.01) & 97.22 (0.01)\tabularnewline
\hline 
\textit{spambase} & text & 4,601 & 57 & 2 & 95.10 (0.00) & 95.16 (0.00) & \textbf{95.44 (0.00)} & 94.80 (0.00)\tabularnewline
\hline 
\rowcolor{even_color}\textit{hill\_valley} & graph & 1,212 & 100 & 2 & 75.19 (0.02) & 70.82 (0.04) & \textbf{90.29 (0.01)} & 82.96 (0.03)\tabularnewline
\hline 
\textit{eeg\_eye\_state} & temporal & 14,980 & 14 & 2 & 92.47 (0.00) & 92.34 (0.00) & \textbf{95.10 (0.00)} & 88.70 (0.02)\tabularnewline
\hline 
\rowcolor{even_color}\textit{churn} & tabular & 5,000 & 20 & 2 & 95.76 (0.00) & 95.50 (0.00) & \textbf{95.80 (0.00)} & 95.65 (0.00)\tabularnewline
\hline 
\textit{kc1} & tabular & 2,109 & 21 & 2 & \textbf{86.00 (0.01)} & 85.66 (0.01) & 85.07 (0.01) & \textbf{86.00 (0.01)}\tabularnewline
\hline 
\rowcolor{even_color}\textit{kc2} & tabular & 522 & 21 & 2 & 83.33 (0.01) & 83.05 (0.01) & 82.95 (0.01) & \textbf{83.43 (0.01)}\tabularnewline
\hline 
\textit{segment} & image & 2,310 & 19 & 7 & 94.00 (0.00) & \textbf{94.20 (0.00)} & 94.00 (0.00) & 93.98 (0.00)\tabularnewline
\hline 
\rowcolor{even_color}\textit{gas\_drift} & tabular & 13,910 & 128 & 6 & \textbf{99.60 (0.00)} & 99.58 (0.00) & 99.52 (0.00) & 99.44 (0.00)\tabularnewline
\hline 
\rowcolor{childheader_color}\textbf{Average} &  &  &  &  & \textbf{92.25 (0.00)} & 91.48 (0.01) & 91.82 (0.00) & 91.77 (0.01)\tabularnewline
\hline 
\end{tabular}
\end{table*}

\end{document}